%% file: main.tex
\documentclass{article}

\usepackage{microtype}
\usepackage{graphicx}
\usepackage{subcaption}
\usepackage{booktabs} 
\usepackage{multirow}
\usepackage{graphicx}

\usepackage{hyperref}

\usepackage[accepted]{icml2026}

\usepackage{amsmath}
\usepackage{amssymb}
\usepackage{mathtools}
\usepackage{amsthm}
\usepackage{enumitem}

\usepackage{tikz}
\usetikzlibrary{positioning, trees, arrows.meta, shadows, backgrounds, calc, decorations.pathreplacing, shadows.blur}
\usepackage{amsmath, amssymb}
\usepackage{caption}
\usepackage{makecell}
\usepackage{arydshln}

\usepackage[capitalize,noabbrev]{cleveref}

\theoremstyle{plain}
\newtheorem{theorem}{Theorem}[section]
\newtheorem{proposition}[theorem]{Proposition}

\theoremstyle{definition}

\theoremstyle{remark}
\newtheorem{remark}[theorem]{Remark}
\newtheorem{example}{Example}[section]
\usepackage[textsize=tiny]{todonotes}

\icmltitlerunning{Tree-Structured Orthonormal Decomposition of the Aitchison Simplex}

\begin{document}

\twocolumn[
  \icmltitle{Tree-Structured Orthonormal Decomposition of the Aitchison Simplex}



  \icmlsetsymbol{equal}{*}

  \begin{icmlauthorlist}
    \icmlauthor{Daisuke Yamada}{yyy}
    \icmlauthor{Qijun Zhang}{yyy}
    \icmlauthor{Travis Pence}{yyy}
    \icmlauthor{Barbara B. Bendlin}{yyy}
    \icmlauthor{Federico Rey}{yyy}
    \icmlauthor{Vikas Singh}{yyy}
  \end{icmlauthorlist}

  \icmlaffiliation{yyy}{University of Wisconsin Madison, Madison WI, USA}

  \icmlcorrespondingauthor{Daisuke Yamada}{dyamada2@wisc.edu}

  \icmlkeywords{Machine Learning, ICML}

  \vskip 0.3in
]



\printAffiliationsAndNotice{}  


\input{section-camera/abstract}
\input{section-camera/introduction}
\input{section-camera/background}
\input{section-camera/problem}
\input{section-camera/method}
\input{section-camera/interpret}
\input{section-camera/experiment}
\input{section-camera/beyond_comp}
\input{section-camera/related_work}
\input{section-camera/conclusion}

\newpage

{\color{black}
\section*{Acknowledgments}
We thank the anonymous reviewers for their constructive feedback. We are grateful to Prof.~Colin Dewey and Prof. Christina Kendziorski for discussions and feedback. Authors were all partly supported by NIH R01AG092220.
}

\section*{Impact Statement}
This paper presents work whose goal is to advance the field of Machine Learning. There are many potential societal consequences of our work, none which we feel must be specifically highlighted here.

\bibliographystyle{icml2026}
\bibliography{reference}


\newpage
\appendix
\onecolumn

\input{section-camera/appendix/proofs_algo}
\input{section-camera/appendix/additional_exp}

\end{document}

%% file: section-camera/abstract.tex
\begin{abstract}
Compositional data---vectors encoding relative proportions---arise across scientific domains, including ecology, geochemistry, and genomics. The features in these data often come with known hierarchical structure (e.g., taxonomies, phylogenies, ontologies), yet existing methods either ignore this structure, discard the intrinsic Aitchison geometry, are designed for binary trees, or yield incomplete coordinate systems. We describe \emph{PolyILR}, a canonical orthonormal decomposition of the Aitchison tangent space aligned with any tree topology. Our construction defines a weighted local geometry at each internal node capturing full branching structure, then lifts these to a global orthonormal basis where every coordinate corresponds to a specific tree location. On microbiome and single-cell benchmarks, PolyILR yields stable, interpretable features and enables inference at multiscale tree resolution. We also establish a novel theoretical connection to softmax classifiers, suggesting possible applications to probabilistic modeling.
\end{abstract}

%% file: section-camera/introduction.tex
\section{Introduction}

\emph{Compositional data}---nonnegative vectors (or components) whose summed total carries no intrinsic meaning---arise across scientific and statistical settings, including microbiome profiles, cell type proportions, ecological counts, and probabilistic model outputs \citep{gloor2017microbiome, billheimer2001statistical, buettner2021sccoda}. Such data live on the simplex, where inference is based on \emph{relative} not absolute values. Often, the components are not exchangeable and are organized by domain hierarchies reflecting evolutionary, functional, or semantic relationships \citep{silverman2017phylogenetic, harmon2019phylogenetic}. These trees describe which comparisons among components are meaningful and at what resolution.

The standard tool in compositional data analysis (CoDA) is \emph{Aitchison geometry} \citep{aitchison1982statistical}, which formalizes that only ratios are informative. This is achieved via an isometric log‑ratio (ILR) embedding of compositional data into Euclidean space \citep{egozcue2003isometric}. Here, one treats components \emph{symmetrically}, partly reflecting origins in domains where no external structure was assumed \citep{egozcue2005groups, mandal2015analysis}. But in many applications, domain-specific hierarchies are known: trees specify which comparisons are meaningful and how they are related. To address this gap, the literature provides strategies to incorporate tree structure, such as tree‑based balances and phylogeny‑aware coordinates. Doing so improves interpretability and downstream analysis. Yet, existing approaches typically address {specific regimes (e.g., binary trees as in phylogenetic reconstruction, selected contrasts)} \citep{silverman2017phylogenetic, washburne2017phylogenetic, morton2017balance} or rely on construction choices {\em external} to the geometry \citep{lozupone2005unifrac, mao2022dirichlet}. {For instance, a polytomous node admits no canonical binary resolution, yet binary-tree methods force an arbitrary choice (Figure~\ref{fig:binary-vs-poly}).} Hence, the key question we study is whether hierarchies can be incorporated in a way that is general, geometrically principled, and canonical for arbitrary trees.

\textbf{Main difficulty.} The core issue is structural incompatibility. Aitchison geometry identifies compositions up to a \emph{global} scaling: a $(d-1)$-dimensional Euclidean tangent space where ILR coordinates live. But trees impose \emph{local} and \emph{nested} constraints: distinctions are meaningful within clades (i.e., subtrees) and comparisons happen at multiple resolutions. Aligning these requires {\em decomposing} the Aitchison tangent space to respect branching structure at every internal node. Binary trees sidestep this issue by reducing each node to a single contrast (Figure~\ref{fig:binary-vs-poly}). For multi‑branching (i.e., \emph{polytomous}) hierarchies, no canonical construction exists.

\textbf{This paper.} We ask if an orthonormal decomposition of the Aitchison tangent space can be compatible with arbitrary tree structure and the simplex invariances---without sacrificing isometry or introducing arbitrary choices. Such a decomposition would give each internal node a geometrically motivated signature and provide a multiscale coordinate system for simplex-valued data on the same footing as standard ILR methods \cite{pawlowsky2001geometric}, while grounding the analysis firmly in tree structure.

The absence of such a coordinate system extends beyond traditional CoDA to probabilistic model representations. Softmax outputs are simplex-valued but often represented in flat probability coordinates, while structure over outcomes is increasingly explicit: class taxonomies, semantic hierarchies, grammars, tree-structured search spaces \citep{silla2011survey}. The absence of a canonical tree-aligned coordinate system beyond heuristics limits analysis of where probability mass, errors, or learning signals concentrate.

\textbf{Contributions.} We (1) construct \textbf{PolyILR} (Polytomous ILR), a canonical orthonormal decomposition of the Aitchison tangent space aligned with arbitrary trees, answering affirmatively the question raised above, (2) demonstrate its utility in CoDA; stable feature selection and tree-level inference in standard microbiome and single-cell datasets, (3) establish a novel theoretical connection to softmax classifiers via shared invariance structure. {\color{black}Our goal is interpretability of the representation itself, not downstream performance: each coordinate corresponds to a specific tree location (i.e., a node-contrast pair) that yields consistent, tree-grounded analysis and inference (see Table~\ref{table:tree-subparts}).} {\color{black} Code is available at \url{https://github.com/vsingh-group/polyilr}.}

{\color{black}
\textbf{Conflict of Interest Disclosure.} The authors declare no financial conflicts of interest related to this work.
}

%% file: section-camera/background.tex
\section{Background}

\begin{figure}[t]
    \centering
    \resizebox{0.47\textwidth}{!}{\input{figure/binary-vs-poly}}
    \vspace{-10pt}
    \caption{\emph{Polytomous vs.\ Binarized Tree.} (Left) $\mathcal{T}$ with a polytomous root ($k=3$). PolyILR assigns {\color{black}$k-1=2$ basis vectors (column of $V$) to the root (red)}. (Right) Binarized $\mathcal{T}_b$ required by PhILR (\emph{one coordinate per node}) introduces an artificial node (yellow) encoding an arbitrary grouping of leaves 1 and 2---a choice not justified by original $\mathcal{T}$.}
    \vspace{-15pt}
    \label{fig:binary-vs-poly}
\end{figure}

We introduce the geometry of compositional data and formalize the tree alignment problem (see \citet{aitchison1982statistical}).  

\subsection{Aitchison Geometry}\label{sec:background}

\textbf{Compositional data.}
Compositional data are nonnegative vectors whose totals are uninformative and only relative proportions matter, e.g., microbial abundances (counts with varying sequencing depth) or chemical concentrations (parts of a mixture) \citep{gloor2017microbiome, jackson1997compositional}. We normalize to unit sum, placing data in the open simplex:
\begin{equation}
    \Delta^{d-1} = \left\{ x \in \mathbb{R}^d_{>0} : \sum_{i=1}^d x_i = 1 \right\}.
    \label{eq:main_composition-simplex}
\end{equation}
The key constraint is that only ratios $x_i / x_j$ carry information, not absolute values.

\textbf{Geometry is fixed.}
Aitchison geometry \citep{aitchison1982statistical} formalizes this by equipping the set $\Delta^{d-1}$ with perturbation $x \oplus y = \mathcal{C}(x_1 y_1, \ldots, x_d y_d)$ and powering $\alpha \odot x = \mathcal{C}(x_1^\alpha, \ldots, x_d^\alpha)$, where $\mathcal{C}(\cdot)$ is the closure. Under these operations, $(\Delta^{d-1}, \oplus, \odot)$ forms a $(d-1)$-dimensional Hilbert space with inner product:
\[
\langle x, y \rangle_A = \frac{1}{d} \sum_{i=1}^d \sum_{j=1}^d \log \frac{x_i}{x_j} \log \frac{y_i}{y_j}.
\]
The induced Aitchison distance $d_A(x,y) = \|x \ominus y\|_A$ is perturbation-invariant: $d_A(x \oplus z, y \oplus z) = d_A(x,y)$. Note that this geometry is \emph{not a modeling choice}---it is the unique structure respecting compositional invariance.

\subsection{ILR Basis}
\textbf{Basis is a choice.}
The centered log-ratio (CLR) transform maps $x \in \Delta^{d-1}$ (under Aitchison geometry) isometrically to the CLR hyperplane (i.e., \emph{Aitchison tangent space})
\begin{equation}
    \mathcal{H} = \{z \in \mathbb{R}^d : \mathbf{1}^\top z = 0\}
    \label{eq:main_tangent_clr}
\end{equation}
via $\text{clr}(x) = (\log x_1/g(x), \ldots, \log x_d/g(x))$, where $g(x) = (\prod_i x_i)^{1/d}$ is the geometric mean. Any matrix $V \in \mathbb{R}^{d \times d-1}$ whose \emph{columns form an orthonormal basis} of $\mathcal{H}$ yields isometric log-ratio (ILR) coordinates and admits an isometric bijection 
\begin{equation}
    \varphi(x) = V^\top \log x,
    \label{eq:main_ilr_transform}
\end{equation}
from $(\Delta^{d-1}, \langle \cdot, \cdot \rangle_A)$ to $(\mathbb{R}^{d-1}, \langle \cdot, \cdot \rangle_2)$ \citep{egozcue2003isometric}. All such bases $V$ are related by orthogonal transformations. That is, they induce the same geometry but different decompositions of the simplex. The question is thus not whether to use an ILR basis, but \emph{which basis to choose}, and this choice largely determines interpretability.

\textbf{Basis choice controls interpretability.}
This parallels classical signal processing: Fourier bases yield frequency components from translation symmetry \citep{brigham1988fast}; wavelet bases yield scale-localized components from dyadic partitions \citep{mallat2002theory}. Aligning the basis with domain structure produces interpretable coordinates. 

%% file: figure/binary-vs-poly.tex
\begin{tikzpicture}[
    scale=0.9,
    font=\small,
    level distance=1.5cm, 
    treenode/.style={circle, draw, line width=0.8pt, inner sep=0pt,
                     drop shadow={opacity=0.3, shadow xshift=1.5pt, shadow yshift=-1.5pt}},
    leaf/.style={treenode, draw=blue!50!black, fill=blue!10, minimum size=7mm},
    internal_binary/.style={treenode, draw=gray!70!black, fill=gray!15, minimum size=8mm},
    internal_poly/.style={treenode, draw=red!70!black, fill=red!15, minimum size=8mm, line width=1pt},
    internal_artificial/.style={treenode, draw=yellow!70!black, fill=yellow!25, minimum size=8mm, line width=1pt, dashed},
    edge from parent/.style={draw, line width=0.8pt, gray!60}
]

\begin{scope}[xshift=0cm]
    \node[font=\bfseries\large, anchor=south] at (0.9, -0.2) {Original $\mathcal{T}$};

    \node[internal_poly] (root) at (0.9, -1) {}
        child[sibling distance=1.8cm] {node[leaf] {1}}
        child[sibling distance=1.8cm] {node[leaf] {2}}
        child[sibling distance=1.8cm] {node[internal_binary] (A) {}
            child[sibling distance=1.1cm] {node[leaf] {3}}
            child[sibling distance=1.1cm] {node[leaf] {4}}
        };

    \node[right=0.15cm of root, font=\small, text=red!60!black] {$k\!=\!3$};

    \begin{scope}[yshift=-5.0cm]
        \draw[fill=white, drop shadow={opacity=0.1}] (0.0, 0) rectangle (2.4, -0.7);
        
        \fill[red!15] (0.0, 0) rectangle (1.6, -0.7);
        \draw[red!30] (0.8, 0) -- (0.8, -0.7);
        
        \fill[gray!15] (1.6, 0) rectangle (2.4, -0.7);
        
        \draw[thick] (0.0, 0) rectangle (2.4, -0.7);
        
        \node[font=\large, anchor=north] at (1.2, -0.9) {$V \in \mathbb{R}^{4 \times 3}$};
    \end{scope}
    
    \draw[-{Stealth[length=2.5mm]}, red!40, line width=1.2pt, dashed] 
        (root.south west) .. controls (-0.5, -3.0) and (0.0, -4.0) .. (0.8, -4.8);
\end{scope}

\draw[gray!20, line width=1.5pt] (4.2, 0) -- (4.2, -5.5);

\begin{scope}[xshift=6.0cm]
    \node[font=\bfseries\large, anchor=south] at (0.9, -0.2) {Binarized $\mathcal{T}_b$};

    \node[internal_binary] (root_b) at (0.9, -1) {}
        child[sibling distance=2.5cm] {node[internal_artificial] (art) {}
            child[sibling distance=1.1cm] {node[leaf] {1}}
            child[sibling distance=1.1cm] {node[leaf] {2}}
        }
        child[sibling distance=2.5cm] {node[internal_binary] {}
            child[sibling distance=1.1cm] {node[leaf] {3}}
            child[sibling distance=1.1cm] {node[leaf] {4}}
        };

    \node[right=0.05cm of art, font=\small, text=yellow!60!black] {artificial};

    \begin{scope}[yshift=-5.0cm]
        \draw[fill=white, drop shadow={opacity=0.1}] (-0.3, 0) rectangle (2.1, -0.7);
        
        \fill[gray!15] (-0.3, 0) rectangle (0.5, -0.7);
        
        \fill[yellow!25] (0.5, 0) rectangle (1.3, -0.7);
        
        \fill[gray!15] (1.3, 0) rectangle (2.1, -0.7);
        
        \draw[gray!30] (0.5, 0) -- (0.5, -0.7);
        \draw[yellow!50] (1.3, 0) -- (1.3, -0.7);
        
        \draw[thick] (-0.3, 0) rectangle (2.1, -0.7);
        
        \node[font=\large, anchor=north] at (0.9, -0.9) {$V_b \in \mathbb{R}^{4 \times 3}$};
    \end{scope}

\end{scope}

\path (-1.5, -7.0) rectangle (10, 1);
\end{tikzpicture}

%% file: section-camera/problem.tex
\section{Problem Setup} \label{sec:problem}

\textbf{Compositions come with tree.} The $d$ components of a composition are often organized by a known rooted tree $\mathcal{T}$, e.g., phylogenetic or taxonomic trees in ecology, gene ontologies in genomics \citep{ashburner2000gene, lozupone2005unifrac}. The tree encodes domain structure: which comparisons are meaningful and at what resolution. Hence, we seek a basis $V$ aligned with $\mathcal{T}$.

\textbf{Binary trees.}
When $\mathcal{T}$ is binary, each internal node $u$ has exactly two children, yielding one \emph{contrast}: the log-ratio of geometric means of the two descendant clades. This is a special case of sequential binary partitioning (SBP) \citep{egozcue2005groups}, which \citet{silverman2017phylogenetic} applied to phylogenies as \emph{PhILR}. PhILR is well-matched to that setting, as phylogenies are typically inferred as bifurcating, and here, orthonormality is straightforward: contrasts at disjoint nodes have disjoint support, and contrasts at nested nodes are orthogonal because the inner contrast sums to zero on each child clade. This construction works because binary branching imposes minimal local structure: each node requires exactly one contrast, yielding a one-to-one correspondence between internal nodes and basis vectors. The global consistency problem asking that local contrasts compose into an orthonormal basis on leaves reduces to verifying {\em pairwise} orthogonality, which holds by support structure and zero-sum constraints.

\textbf{What happens with polytomies?}
For general trees $\mathcal{T}$, the simplicity above breaks down (Figure~\ref{fig:binary-vs-poly}). Consider a node $u$ with $k_u > 2$ children. Comparing $k_u$ clades requires $k_u - 1$ orthogonal contrasts---a subspace, not a single vector. Several challenges arise: (i) defining canonical local contrasts at $u$, (ii) extending them to global vectors on leaves, and (iii) ensuring orthogonality across all nodes. Standard approaches fail because subtrees of different sizes contribute unequally to inner products, as detailed shortly.

{\color{black}
\textbf{Polytomies are common in practice.}
Polytomies arise within phylogenies as both hard polytomies (e.g., rapid radiations) and soft polytomies (e.g., collapsed low-support nodes) \citep{maddison1989reconstructing}. About 64\% of taxonomic branch points in the NCBI Taxonomy Database have three or more children \citep{lin2011polytomy} and biomedical ontologies routinely encode multi-way groupings, e.g., the cell ontology \citep{diehl2016cell}. Such curated trees often carry \emph{meaningful internal structure} (e.g., independently named internal nodes) that can inform downstream representations. 
}
{However, in practice, one typically \emph{arbitrarily refines} them into binary trees \citep{lin2011polytomy}. This introduces additional internal nodes and splits {\em not present in the original hierarchy}, making resulting coordinates and interpretations binarization-dependent. Alternative approaches aim to identify predictive log-ratio features via log-contrast regression, greedy balance selection \citep{rivera2018balances}, pairwise log-ratio testing \citep{mandal2015analysis}, or phylogeny-guided ILR factors via edge selection \citep{washburne2017phylogenetic}. But these methods yield isolated contrasts rather than a full, node-grouped orthonormal coordinate system canonically tied to a given multifurcating tree.
}


No existing method provides a canonical, complete orthonormal decomposition for general trees that simultaneously defines local contrasts, extends them globally, and ensures consistency. Obtaining such a decomposition without sacrificing isometry or introducing arbitrary choices is our goal.

%% file: section-camera/method.tex
\section{PolyILR}

\begin{figure}[t]
    \centering
    \resizebox{0.48\textwidth}{!}{\input{figure/polyilr-construction}}
    \vspace{-10pt}
    \caption{\emph{PolyILR} from $\mathcal{T}$ on $d=6$ leaves to $V \in \mathbb{R}^{d\times(d-1)}$. Colors indicate generating nodes; white entries are zeros.}
    \vspace{-15pt}
    \label{fig:poly-construction}
\end{figure}

We describe the high-level idea in \S\ref{sec:intuition}, the formal construction in \S\ref{sec:construction}, and properties in \S\ref{sec:properties}. 

\vspace{-9.7pt}
\subsection{From Hierarchy to Geometry}\label{sec:intuition}

\textbf{Goal.} We seek an orthonormal decomposition of $\mathcal{H}$ reflecting the full branching structure of $\mathcal{T}$. We do not just want predictive log-ratios or useful balances, but a complete coordinate system where each coordinate corresponds to a location in $\mathcal{T}$. Such a basis must capture the full local structure at each node, maintain orthonormality within and across nodes, span $\mathcal{H}$, and be canonical. The first ensures complete encoding, the next two define a valid ILR basis in \eqref{eq:main_ilr_transform}. And the last one ensures reproducibility. It is not obvious such a construction exists.

\textbf{Key insight.} To address this, we (i) associate local geometric structure to each internal node and (ii) assemble them into global structure. We attach to each node a structured object encoding all relative comparisons among its children. The specific structure and canonicity follow from requiring the global basis to be a valid ILR basis and a deterministic choice of local basis. We make this precise in \S\ref{sec:construction}. 

\textbf{(i) Local structure.} Consider an internal node $u$ with $k_u$ children. The global basis will act on compositions, comparing leaves (via geometric mean) within each child clade. Since compositions carry only relative information (as in \S\ref{sec:background}), we encode relative differences among these $k_u$ clades and not absolute levels. This requires $k_u - 1$ degrees of freedom: one contrast distinguishes two children, two contrasts distinguish three, and so on. Each internal node thus contributes a $(k_u - 1)$-dimensional structure.

\textbf{(ii) Global assembly.} Local contrasts live at nodes, but ILR coordinates must be global vectors on leaves. We write a weighted inner product at each node to account for the number of descendant leaves in each child clade. Orthonormality under this weighted inner product (locally) guarantees global orthonormality in $\mathbb{R}^d$ after spreading to leaves.

We now formalize our construction, \textbf{PolyILR} (Figure~\ref{fig:poly-construction}).

\vspace{-5pt}
\subsection{PolyILR Construction}\label{sec:construction}

\textbf{Setup.} Let $\mathcal{T}$ be any rooted tree with $d$ leaves. Our data live in the Aitchison simplex $\Delta^{d-1}$, where each component corresponds to a leaf of $\mathcal{T}$. Our goal is to construct a valid ILR basis $V \in \mathbb{R}^{d \times (d-1)}$ such that each column of $V$ corresponds to a specific internal node of $\mathcal{T}$.

\textbf{Local contrast subspace.} Consider a node $u$ with $k_u$ children. We define the local contrast subspace at $u$ as
\begin{equation}
    \scalebox{0.9}{$\displaystyle S_u = \left\{ h \in \mathbb{R}^{k_u} : \sum_{r=1}^{k_u} h_r = 0 \right\}.$}
    \label{eq:main_local_contrast_space}
\end{equation}
This is the $(k_u - 1)$-dimensional subspace orthogonal to $\mathbf{1}$, capturing all relative comparisons among the $k_u$ children (see \S\ref{sec:intuition}). Notice that this zero-sum constraint is not arbitrary: it is, in fact, forced by the ILR requirement that $V^\top \mathbf{1} = 0$. Since each column of $V$ must be orthogonal to $\mathbf{1}$, and each column is formed by spreading a local vector $\mathbf{h}$ from node $u$, we require $\mathbf{h}^\top \mathbf{1} = 0$ locally.

\textbf{Weighted inner product.} To ensure that local orthonormality extends globally, we equip $S_u$ with a \emph{weighted inner product}. Let $n_r$ denote the number of leaves descending from child $r \in \{1,\dots,k_u\}$. We define:
\begin{equation}
    \langle h, h' \rangle_w = \sum_{r=1}^{k_u} \frac{h_r h'_r}{n_r}.
    \label{eq:main_weighted_inner_prod}
\end{equation}
This accounts for unequal subtree sizes: children with more descendants contribute less per leaf to the global inner product when spread. Note that $(S_u, \langle \cdot, \cdot \rangle_w)$ is a Hilbert space.

\textbf{Local basis.} We choose an orthonormal basis of $S_u$. Any orthonormal basis works mathematically, but we use Helmert contrasts \citep{lancaster1965helmert} {\color{black}for canonicity}. The standard Helmert matrix $H \in \mathbb{R}^{k \times (k-1)}$ has columns:
\begin{equation}
    H_{r,m} = \begin{cases}
    \sqrt{\frac{1}{m(m+1)}} & \text{if } r \leq m, \\
    -\sqrt{\frac{m}{m+1}} & \text{if } r = m+1, \\
    0 & \text{if } r > m+1.
    \end{cases}
    \label{eq:main_helmert}
\end{equation}
The $m$-th column compares child $m+1$ against the average of children $1, \ldots, m$. For example, with $k_u = 3$ children:
\[
H^{(u)} = \begin{pmatrix}
\frac{1}{\sqrt{2}} & \frac{1}{\sqrt{6}} \\[4pt]
-\frac{1}{\sqrt{2}} & \frac{1}{\sqrt{6}} \\[4pt]
0 & -\frac{2}{\sqrt{6}}
\end{pmatrix}.
\]
The first column contrasts child 2 versus child 1, the second contrasts child 3 versus the average of children 1 and 2. Helmert contrasts provide a canonical choice: given an ordering of children, the basis is deterministic. Alternative orthonormal bases of $S_u$ (e.g., QR decomposition) can also work but lack this sequential interpretability. 

Here, the columns of $H^{(u)}$ are orthonormal under the standard inner product and lie in $S_u$. To obtain orthonormality under $\langle \cdot, \cdot \rangle_w$, we apply Gram-Schmidt to $H^{(u)}$ under this weighted inner product, yielding $\widetilde{H}^{(u)} \in \mathbb{R}^{k_u \times (k_u - 1)}$. This choice ensures that given $\mathcal{T}$ and a fixed ordering of children at each node, the local basis is {\em uniquely} obtained. 

\begin{figure}[t]
    \centering
    \resizebox{0.45\textwidth}{!}{\input{figure/orthonormal-spread}}
    \vspace{-5pt}
    \caption{\emph{Naive vs.\ weighted spreading (Example~\ref{ex:spreading}).} Uniform spreading (Left) vs. Weighted spreading (Right)}
    \vspace{-15pt}
    \label{fig:orthonormal-spread}
\end{figure}

\vspace{-5pt}
\paragraph{Spreading to leaves.} Each column of the local basis $\widetilde{H}^{(u)}$ is a vector in $\mathbb{R}^{k_u}$, defined on the children of $u$. We spread it to a global vector $v \in \mathbb{R}^d$ on all leaves:
\[
v_i = \begin{cases} 
\widetilde{H}^{(u)}_{r,m} / n_r & \text{if leaf } i \text{ descends from child } r, \\ 
0 & \text{otherwise.} 
\end{cases}
\]
Division by $n_r$ is key, as it ensures that orthonormality under $\langle \cdot, \cdot \rangle_w$ at node $u$ implies orthonormality in $\mathbb{R}^d$ after spreading. Consider two local vectors $\mathbf{h}, \mathbf{h}'$ orthonormal under $\langle \cdot, \cdot \rangle_w$: $\sum_r h_r h'_r / n_r = \delta_{\mathbf{h},\mathbf{h}'}$. After spreading with the $1/n_r$ weighting, their global inner product becomes:
{\small
\begin{align*}
\langle \mathbf{v}, \mathbf{v}' \rangle 
&= \sum_{i=1}^d v_i v'_i 
= \sum_{r=1}^{k_u} \sum_{i \in C_r^{(u)}} \left(\frac{h_r}{n_r}\right)\left(\frac{h'_r}{n_r}\right) \\
&= \sum_{r=1}^{k_u} \frac{h_r h'_r}{n_r^2} \cdot n_r 
= \sum_{r=1}^{k_u} \frac{h_r h'_r}{n_r} 
= \delta_{\mathbf{h},\mathbf{h}'},
\end{align*}
}
where the second equality holds because child $r$ contributes exactly $n_r$ leaves, each with coefficient $h_r/n_r$ (similarly for $\mathbf{h}'$). So, local orthonormality means global orthonormality. 

\vspace{5pt}
\begin{example}\label{ex:spreading}
Consider $u$ with two children $n_1 = 1$ and $n_2 = 3$. The Helmert contrast is $\mathbf{h} = (1/\sqrt{2}, -1/\sqrt{2})^\top$, which satisfies $\|\mathbf{h}\|_2^2 = 1$, but spreading uniformly gives $\mathbf{v} = (1/\sqrt{2}, -1/\sqrt{2}, -1/\sqrt{2}, -1/\sqrt{2})^\top$ with $\|\mathbf{v}\|^2 = 2 \neq 1$. The weighted norm in \eqref{eq:main_weighted_inner_prod} gives $\|\mathbf{h}\|_w^2 = (1/2)/1 + (1/2)/3 = 2/3$; normalizing gives $\tilde{\mathbf{h}} = [\sqrt{3}/2, -\sqrt{3}/2]^\top$. Spreading with $\div n_r$ yields $\|\mathbf{v}\|^2 = 1$. See Figure~\ref{fig:orthonormal-spread}.
\end{example}

\input{algorithm/polyilr_main}

\textbf{Assembling it all.} Applying this procedure at every internal node of $\mathcal{T}$ and collecting all spread vectors yields the PolyILR basis $V$. The columns of $V$ are indexed by pairs $(u, m)$ for some internal node $u$ and a contrast index $m \in \{1, \ldots, k_u - 1\}$. For example, if $\mathcal{T}$ has $\ell$ internal nodes ordered as $u_1, \ldots, u_\ell$, with $u_1$ having 4 children, $u_2$ having 3, $\ldots$, and $u_\ell$ having 3 children, then the basis is
\[
V = \bigl( \underbrace{v_1, v_2, v_3}_{\text{node } u_1}, \underbrace{v_4, v_5}_{\text{node } u_2}, \ldots, \underbrace{v_{d-2}, v_{d-1}}_{\text{node } u_\ell} \bigr).
\]
See Algorithm~\ref{alg:polyilr} and Figure~\ref{fig:poly-construction}.

\begin{theorem}[PolyILR]\label{thm:polyilr}
Let $\mathcal{T}$ be any rooted tree with $d$ leaves. The matrix $V \in \mathbb{R}^{d \times (d-1)}$ from Alg.~\ref{alg:polyilr} satisfies:
\vspace{-5pt}
\begin{enumerate}
    \item \scalebox{0.95}{$V^\top \mathbf{1} = 0$} \textup{(contrast property)},
    \item \scalebox{0.95}{$V^\top V = I_{d-1}$} \textup{(orthonormality)}.
\end{enumerate}
\vspace{-5pt}
Consequently, $\varphi(x) = V^\top \log x$ is an isometry from $(\Delta^{d-1}, \langle \cdot, \cdot \rangle_A)$ to $(\mathbb{R}^{d-1}, \langle \cdot, \cdot \rangle_2)$.
\end{theorem}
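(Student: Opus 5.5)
The plan is to verify the two properties column by column, then count columns, and finally invoke the standard ILR fact recalled in \S\ref{sec:background}: an orthonormal basis of $\mathcal{H}$ gives an isometric bijection via \eqref{eq:main_ilr_transform}. Write $v^{(u,m)}$ for the spread of the $m$-th column $\tilde h=\widetilde H^{(u)}_{\cdot,m}$, and let $C_r^{(u)}$ be the leaf set of child $r$ of $u$.

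\textbf{Contrast property.} Gram--Schmidt under $\langle\cdot,\cdot\rangle_w$ only forms linear combinations of the Helmert columns, which lie in $S_u$. Hence $\tilde h\in S_u$, i.e.\ $\sum_r \tilde h_r=0$. After spreading,
\[
\mathbf 1^\top v^{(u,m)}=\sum_r n_r\cdot \tilde h_r/n_r=\sum_r\tilde h_r=0 .
\]

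\textbf{Orthonormality.} Take two columns $(u,m)$ and $(u',m')$. There are three cases.
\begin{itemize}
\item \emph{Same node, $u=u'$.} The computation displayed before Example~\ref{ex:spreading} already shows $\langle v,v'\rangle=\langle\tilde h,\tilde h'\rangle_w=\delta_{mm'}$, since Gram--Schmidt produces a $w$-orthonormal set. One must also check that the $k_u-1$ Helmert columns are linearly independent, so that Gram--Schmidt does not degenerate; this holds because they are orthonormal in the Euclidean inner product.
\item \emph{Incomparable nodes.} The leaf sets of $u$ and $u'$ are disjoint, so the supports of $v$ and $v'$ are disjoint and $\langle v,v'\rangle=0$.
\item \emph{Nested nodes, $u'$ a strict descendant of $u$.} Then $u'$ lies in a single child subtree $C_r^{(u)}$, so every leaf in the support of $v'$ sits inside $C_r^{(u)}$. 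On that clade $v$ is the constant $\tilde h_r/n_r$. Therefore $\langle v,v'\rangle=(\tilde h_r/n_r)\,\mathbf 1^\top v'=0$ by the contrast property of $v'$.
\end{itemize}
This nested case is the only place where the tree structure genuinely enters. The key point is that spreading makes $v$ constant on each child clade, which I expect to be the crux. I will state it carefully: every leaf below $u$ belongs to exactly one child clade, and the $C_r^{(u)}$ partition the leaves of $u$.

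\textbf{Dimension count.} The number of columns is $\sum_{u}(k_u-1)$ over internal nodes $u$. This is the number of non-root nodes minus the number of internal nodes. It equals $(\#\text{nodes}-1)-(\#\text{nodes}-d)=d-1$, because every non-root node has exactly one parent. This requires that $\mathcal T$ have no unary internal nodes; if such nodes exist they contribute $k_u-1=0$ columns, and the count survives after contracting them. The $d-1$ columns are orthonormal vectors in $\mathcal H$, which has dimension $d-1$, so they form a basis of $\mathcal H$ and $V^\top V=I_{d-1}$.

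\textbf{Isometry.} Because $V^\top\mathbf 1=0$, we have $V^\top\log x=V^\top\mathrm{clr}(x)$. The map $\mathrm{clr}$ is an isometry onto $\mathcal H$, and $V^\top$ restricted to $\mathcal H$ is an isometry onto $\mathbb R^{d-1}$ because $V$ is an orthonormal basis of $\mathcal H$. The composition is therefore the claimed isometric bijection.
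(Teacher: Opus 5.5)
Your proposal is correct and follows essentially the same route as the paper's proof. Both arguments establish the contrast property because Gram--Schmidt preserves the zero-sum subspace $S_u$, get within-node orthonormality from the weighted inner product, use disjoint supports for incomparable nodes and constancy of the ancestor's vector on the child clade for nested nodes, and use the edge count $\sum_u (k_u-1)=d-1$. One small remark: the counting identity does not require excluding unary nodes, since $\sum_u k_u=\#\text{nodes}-1$ and $\#\text{internal nodes}=\#\text{nodes}-d$ hold regardless, with unary nodes simply contributing $k_u-1=0$ columns.
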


\emph{Proof idea.} The contrast property follows from each local contrast summing to zero. For orthonormality: vectors from the same node are orthonormal by construction; disjoint nodes have disjoint support. Nested nodes (one ancestor of the other) are orthogonal because the descendant's spread vector sums to zero on each child clade. Full proof and algorithm details in Appendix~\ref{appdx:proof_algo}.

\textbf{Interpretation.} Given composition $x$, each coordinate $z_j = v_j^\top \log x$ is a \emph{balance}: a log-ratio comparing geometric means of child clades at an internal node. The transformation $z = V^\top \log x \in \mathbb{R}^{d-1}$ decomposes $x$ into interpretable contrasts at every level of the hierarchy, see \S\ref{sec:polyilr-analysis}.

\vspace{-5pt}
\subsection{Properties of PolyILR}\label{sec:properties}

\textbf{Relation to existing methods.} When $\mathcal{T}$ is binary ($k_u = 2$ for all $u$), each node contributes {\em one} contrast, and PolyILR reduces to PhILR (see Appendix~\ref{appdx:proof_algo}). Unlike greedy balance selection \citep{rivera2018balances} or edge-based factorization, which yield isolated contrasts, PolyILR provides a complete orthonormal basis aligned with the full tree. \emph{Unlike arbitrary binarization, PolyILR respects the original topology without introducing artificial splits} (Figure~\ref{fig:binary-vs-poly}).

\textbf{Uniqueness and recoverability.}
PolyILR provides a \emph{canonical} basis $V$ aligned with $\mathcal{T}$ as follows.
\begin{proposition}
Given a rooted tree $\mathcal{T}$ with fixed leaf labels and child orderings, PolyILR produces a unique basis $V(\mathcal{T})$. Moreover, $\mathcal{T} \mapsto V(\mathcal{T})$ is injective: $\mathcal{T}$ can be recovered from $V$ via the clade supports of its columns.
\label{prop:recovery}
\end{proposition}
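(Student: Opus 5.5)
Uniqueness is essentially a determinism check, and recoverability needs a support analysis of the columns. For uniqueness, we trace Algorithm~\ref{alg:polyilr}. Once $\mathcal{T}$, the leaf labels and the child orderings are fixed, every quantity it computes is determined. The child counts $k_u$ and clade sizes $n_r$ depend only on $\mathcal{T}$. The Helmert matrix $H^{(u)}$ in \eqref{eq:main_helmert} depends only on $k_u$ and the child order. Gram--Schmidt under $\langle\cdot,\cdot\rangle_w$, applied to a fixed ordered set of columns, has a unique output once we adopt the positive-diagonal normalization convention; the inputs are linearly independent because $H^{(u)}$ has orthonormal columns. Spreading is an explicit formula. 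So $V(\mathcal{T})$ is unique, given a fixed ordering of internal nodes for the column blocks; for recovery it suffices to regard $V$ as an indexed family of columns.

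For injectivity we reconstruct $\mathcal{T}$ from the supports of the columns. We first show that every column generated at node $u$ has support equal to the leaf set $L(u)$ below $u$. Nonzero entries can only occur on leaves below $u$, since the vector is zero elsewhere. The converse, that every such leaf is hit, is not automatic: a single Helmert-derived column can vanish on whole children. For example, the first Helmert column is zero on children $3,\dots,k_u$, and Gram--Schmidt in the weighted inner product preserves this triangular zero pattern. So we work with unions. Fix $u$ and let $W_u$ be the span of the columns generated at $u$. Spreading maps $S_u$ isomorphically onto $W_u$, and since $S_u$ contains a vector nonzero on every child (e.g.\ $e_r-e_{r'}$ combinations), the union of the column supports over the block for $u$ equals $L(u)$. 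Better still, the block for $u$ determines the partition of $L(u)$ into child clades: two leaves lie in the same child clade if and only if every column of the block has equal entries on them. This holds because spread vectors are constant on each child clade, and for distinct children $r\neq r'$ some vector of $S_u$ separates them, so some basis column does too.

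It remains to recover the block structure, i.e.\ which columns belong to the same node, without assuming the node labels are visible. We group the columns by the minimal clade they determine. For a column $v$, set $A(v)$ to be the smallest set of the form $L(w)$ containing $\operatorname{supp}(v)$; since we do not yet know $\mathcal{T}$, we instead define the grouping intrinsically. Each column generated at $u$ takes at least two distinct nonzero values, on at least two different children, because it is nonzero and sums to zero. Also, the children of $u$ that it meets are separated. The main obstacle is therefore to show that the \emph{laminar family} $\{L(u)\}$ together with the child partitions can be read off canonically. We would argue as follows. The sets $L(u)$ are exactly the minimal sets $B$ that are closed under the column supports: if $\operatorname{supp}(v)\cap B\neq\emptyset$ and $v$ is not constant on $B$, then $\operatorname{supp}(v)\subseteq B$. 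A cleaner route, which we would try first, is to use the columns' own Helmert/triangular structure. The last column in each block, corresponding to $m=k_u-1$, has support exactly $L(u)$, because Gram--Schmidt keeps full support for the final Helmert column, which contrasts child $k_u$ against children $1..k_u-1$. These full-support columns give the set $\{L(u)\}$ of all clades. Together with the singletons, which are the leaves, this laminar family determines the rooted tree: the parent of a clade is the smallest clade strictly containing it, and each internal node has at least two children because the family is laminar and clades are distinct. Hence $V(\mathcal{T})=V(\mathcal{T}')$ forces equal clade families and therefore $\mathcal{T}=\mathcal{T}'$. The delicate step is verifying the full-support claim for the last Gram--Schmidt column, i.e.\ that its entries on children $1,\dots,k_u-1$ are nonzero. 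This follows from the explicit triangular formula: that column is proportional to the $w$-projection of $e_{k_u}$ direction onto the zero-sum space, with entries $n_r/\sum_{s<k_u}n_s$ on $r<k_u$, all of which are nonzero.
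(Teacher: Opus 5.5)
Your proposal has a genuine gap at the block-identification step. You note yourself that it remains to recover which columns belong to the same node without node labels, and you never do this. Your ``cleaner route'' then selects ``the last column in each block'', which presupposes exactly that block partition. The family of \emph{all} column supports also contains the partial unions $C_u^{(1)}\cup\cdots\cup C_u^{(m+1)}$ with $m<k_u-1$, and these are not clades.

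Your closure characterization does not separate them. For such a partial union, every column is either supported inside it, disjoint from it, or constant on it, so it satisfies your condition. Singletons satisfy the condition vacuously, so ``minimal'' returns only the leaves.

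In fact no rule applied to the unordered set of columns can work. The computation you do for $m=k_u-1$ works for every $m$. The vector $\widetilde H^{(u)}_{\cdot,m}$ is $w$-orthogonal to all zero-sum vectors supported on the first $m$ children, which forces $\widetilde H^{(u)}_{r,m}/n_r$ to be constant for $r\le m$. Hence $\widetilde H^{(u)}_{\cdot,m}\propto(n_1,\dots,n_m,-\sum_{s\le m}n_s,0,\dots,0)^\top$, and $v^{(u,m)}$ is exactly the PhILR balance of $C_u^{(1)}\cup\cdots\cup C_u^{(m)}$ against $C_u^{(m+1)}$. So, as a set of columns, PolyILR coincides with PhILR on the caterpillar binarization of each polytomy in child order. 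Concretely, the star tree on $\{1,2,3\}$ and the binary tree $((1,2),3)$ both produce the columns $(1,-1,0)^\top/\sqrt{2}$ and $(1,1,-2)^\top/\sqrt{6}$. Their support families are both $\{\{1,2\},\{1,2,3\}\}$, yet $\{1,2\}$ is a clade only in the second tree.

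Injectivity therefore has to use information beyond the supports. There are two ways to supply it.

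\emph{Labels as given.} You can regard the labels $(u,m)$, or at least the block partition, as part of the output. This is effectively what the paper's proof does: it reasons about the columns $v^{(u,m)}$ of a given node $u$ and recurses from the root, taking the node assignment of columns as known. Under that reading your argument is complete, and it is a leaner alternative to the paper's. The paper recovers each node's ordered child clades from the nested chain $\mathrm{supp}(v^{(u,m)})=\bigcup_{r\le m+1}C_u^{(r)}$ together with the sign level sets of $v^{(u,1)}$. You need only the support of each block's last column plus the laminar-family reconstruction. The computation behind your explicit formula also supplies the nonvanishing on children $r\le m$, which the paper's chain claim asserts without proof.

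\emph{Column order.} Alternatively, you can use the order of the columns. Take internal nodes in DFS \emph{preorder}, with $m$ increasing inside each block. Then supports strictly increase within a block. Across a boundary from $u$ to the next node $u'$, one cannot have $L(u)\subsetneq\mathrm{supp}(v^{(u',1)})\subseteq L(u')$, since that would make $u'$ a proper ancestor of $u$ listed after it. Hence the blocks are the maximal runs of strictly increasing supports, and their last columns give the clades. This genuinely depends on the convention: in postorder, the two trees above produce the identical ordered matrix.

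Finally, like the paper, you tacitly need $k_u\ge 2$ at every internal node, because a unary node contributes no columns and is invisible in $V$. The ``distinctness of clades'' you invoke is exactly this assumption, not a consequence of laminarity.
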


{\color{black}
We point out that PolyILR's \emph{canonicity} rests on two structural conventions: (i) a child ordering at each internal node and (ii) a sign convention for the Helmert columns (first nonzero entry positive). These fix the representation but not the underlying geometry: different orderings yield bases related by an orthogonal transformation within each node's block (and permutations across blocks) where sign flips change the orientation of contrasts. In practice, the ordering is inherited from the input tree and held fixed. Full proofs are in Appendix~\ref{appdx:proof_algo}.}

\emph{Summary.} PolyILR provides a canonical, tree-aligned orthonormal coordinate system for the Aitchison simplex. Each coordinate corresponds to a contrast at a specific internal node, enabling interpretable analysis at any resolution. 

%% file: figure/polyilr-construction.tex


\begin{tikzpicture}[
    scale=0.9,
    font=\small,
    level distance=1.5cm,
    level 1/.style={sibling distance=2.5cm},
    level 2/.style={sibling distance=1.0cm},
    every node/.style={font=\footnotesize},
    leaf/.style={circle, draw=blue!50!black, line width=0.8pt, fill=blue!10, minimum size=7mm,
                 drop shadow={opacity=0.4, shadow xshift=2pt, shadow yshift=-2pt}},
    internal/.style={circle, draw=black!70, line width=0.8pt, minimum size=8mm,
                     drop shadow={opacity=0.4, shadow xshift=2pt, shadow yshift=-2pt}},
    edge from parent/.style={draw, line width=1pt, gray!70}
]

\begin{scope}[xshift=0cm, yshift=0cm]

    \node[internal, fill=red!15] (u1) at (0, 0) {$u_1$}
        child {node[leaf] (l1) {$1$}}
        child {node[internal, fill=green!15] (u2) {$u_2$}
            child {node[leaf] (l2) {$2$}}
            child {node[leaf] (l3) {$3$}}
        }
        child {node[internal, fill=orange!15] (u3) {$u_3$}
            child {node[leaf] (l4) {$4$}}
            child {node[leaf] (l5) {$5$}}
            child {node[leaf] (l6) {$6$}}
        };

    \node[left=0.2cm of u1, text=red!60!black, font=\footnotesize\bfseries] {$k_1\!=\!3$};
    \node[left=0.05cm of u2, text=green!60!black, font=\footnotesize\bfseries] {$k_2\!=\!2$};
    \node[left=0.05cm of u3, text=orange!60!black, font=\footnotesize\bfseries] {$k_3\!=\!3$};

\end{scope}

\draw[-{Stealth[length=3mm, width=2mm]}, line width=1.5pt, black!70] 
    (3.5, -1.5) -- node[above, font=\small] {\emph{PolyILR}} (4.5, -1.5);

\begin{scope}[xshift=5.5cm, yshift=0.4cm, scale=1.2]

    \fill[white, drop shadow={opacity=0.4, shadow xshift=3pt, shadow yshift=-3pt}] (0, 0) rectangle (3.5, -3.0);

    \fill[red!15] (0, 0) rectangle (1.4, -1.5); 
    \fill[red!15] (0.7, -1.5) rectangle (1.4, -3.0); 

    \fill[green!15] (1.4, -0.5) rectangle (2.1, -1.5);

    \fill[orange!15] (2.1, -1.5) rectangle (2.8, -2.5); 
    \fill[orange!15] (2.8, -1.5) rectangle (3.5, -3.0); 

    \foreach \x in {0, 0.7, 1.4, 2.1, 2.8, 3.5} {
        \draw[gray!30, thin] (\x, 0) -- (\x, -3.0);
    }
    \foreach \y in {0, -0.5, -1.0, -1.5, -2.0, -2.5, -3.0} {
        \draw[gray!30, thin] (0, \y) -- (3.5, \y);
    }

    \draw[black, thick] (0, 0) rectangle (3.5, -3.0);


    \draw[line width=1.5pt, red!60!black] (0.05, 0.1) -- (1.35, 0.1);
    \node[text=red!60!black, font=\footnotesize, anchor=south] at (0.7, 0.15) {$u_1$};

    \draw[line width=1.5pt, green!50!black] (1.45, 0.1) -- (2.05, 0.1);
    \node[text=green!50!black, font=\footnotesize, anchor=south] at (1.75, 0.15) {$u_2$};

    \draw[line width=1.5pt, orange!60!black] (2.15, 0.1) -- (3.45, 0.1);
    \node[text=orange!60!black, font=\footnotesize, anchor=south] at (2.8, 0.15) {$u_3$};

    \foreach \i/\label in {0.35/v_1, 1.05/v_2, 1.75/v_3, 2.45/v_4, 3.15/v_5} {
        \node[font=\footnotesize, text=black] at (\i, -3.3) {$\label$};
    }

    \foreach \y/\label in {-0.25/1, -0.75/2, -1.25/3, -1.75/4, -2.25/5, -2.75/6} {
        \node[font=\footnotesize, text=gray!80!black, anchor=east] at (-0.1, \y) {$\label$};
    }

\end{scope}

\end{tikzpicture}



%% file: figure/orthonormal-spread.tex
\begin{tikzpicture}[
    scale=1,
    level distance=2.2cm,
    level 1/.style={sibling distance=5.0cm},
    level 2/.style={sibling distance=2.0cm},
    every node/.style={font=\LARGE},
    leaf/.style={circle, draw=blue!50!black, fill=blue!10, minimum size=17mm, line width=0.8pt,
                 drop shadow={opacity=0.4, shadow xshift=2pt, shadow yshift=-2pt}},
    internal/.style={circle, draw=gray!70!black, fill=gray!15, minimum size=17mm, line width=0.8pt,
                     drop shadow={opacity=0.4, shadow xshift=2pt, shadow yshift=-2pt}},
    internal highlight/.style={circle, draw=orange!80!black, fill=orange!15, minimum size=17mm, line width=1.2pt,
                     drop shadow={opacity=0.4, shadow xshift=2pt, shadow yshift=-2pt}},
    edge from parent/.style={draw, line width=1pt, gray!60},
    trans arrow/.style={-{Stealth[length=6mm, width=5mm]}, line width=2.8pt, gray!50}
]

\begin{scope}[xshift=-3.5cm]
    
    \node[font=\huge] at (-1.5, 2.4) {
        $\mathbf{h}^\top = \Big( \frac{1}{\sqrt{2}} \;,\; -\frac{1}{\sqrt{2}} \Big)$
    };

    \node[internal highlight] (root) at (-1.5, 0) {$u$}
        child {node[leaf] (l1) {$1$}}
        child {node[internal] (r) {}
            child {node[leaf] (l2) {$2$}}
            child {node[leaf] (l3) {$3$}}
            child {node[leaf] (l4) {$4$}}
        };
    
    \node[font=\LARGE, text=gray!40!black, anchor=west] at (-3, -2.2) {$n_1\!=\!1$};
    \node[font=\LARGE, text=gray!40!black, anchor=west] at (2.0, -2.2) {$n_2\!=\!3$};

    \draw[trans arrow] (-1.5, -6.0) -- (-1.5, -7.3);
    \node[font=\LARGE, text=gray!40!black, anchor=west] at (-1.2, -6.65) {uniform};

    \node[font=\huge] at (-1.5, -8.4) {
        $\mathbf{v}_{u}^\top = \Big( \frac{1}{\sqrt{2}} \;,\; -\frac{1}{\sqrt{2}} \;,\; -\frac{1}{\sqrt{2}} \;,\; -\frac{1}{\sqrt{2}} \Big)$
    };

\end{scope}

\draw[gray!30, line width=1.5pt] (1.2, 3.2) -- (1.2, -9.2);

\begin{scope}[xshift=10.5cm]
    
    \node[font=\huge] at (-3.5, 2.4) {
        $\tilde{\mathbf{h}}^\top = \Big( \frac{\sqrt{3}}{2} \;,\; -\frac{\sqrt{3}}{2} \Big)$
    };

    \node[internal highlight] (root2) at (-3.5, 0) {$u$}
        child {node[leaf] (l1b) {$1$}}
        child {node[internal] (rb) {}
            child {node[leaf] (l2b) {$2$}}
            child {node[leaf] (l3b) {$3$}}
            child {node[leaf] (l4b) {$4$}}
        };

    \node[font=\LARGE, text=gray!40!black, anchor=west] at (-5, -2.2) {$n_1\!=\!1$};
    \node[font=\LARGE, text=gray!40!black, anchor=west] at (0.0, -2.2) {$n_2\!=\!3$};

    \draw[trans arrow, green!50!black] (-3.5, -6.0) -- (-3.5, -7.3);
    \node[font=\LARGE, text=green!50!black, anchor=west] at (-3.2, -6.65) {$\div n_r$};

    \node[font=\huge] at (-3.5, -8.4) {
        $\mathbf{v}_{u}^\top = \Big( \frac{\sqrt{3}}{2} \;,\; -\frac{\sqrt{3}}{6} \;,\; -\frac{\sqrt{3}}{6} \;,\; -\frac{\sqrt{3}}{6} \Big)$
    };

\end{scope}

\end{tikzpicture}

%% file: algorithm/polyilr_main.tex
\begin{algorithm}[bt]
\caption{PolyILR Basis Construction}
\label{alg:polyilr}
{\footnotesize
\begin{algorithmic}[1]
\REQUIRE Rooted $T$ with $d$ leaves, internal node ordering $\pi$ (DFS)
\ENSURE ILR basis $V \in \mathbb{R}^{d \times (d-1)}$
\STATE $j \leftarrow 1$
\FOR{each internal node $u$ from $\pi$}
    \STATE $k_u \leftarrow$ number of children of $u$
    \FOR{$r = 1, \ldots, k_u$}
        \STATE $C_u^{(r)} \leftarrow$ leaves descending from $r$-th child
        \STATE $n_r \leftarrow |C_u^{(r)}|$
    \ENDFOR
    \STATE $\mathcal{S}_u \leftarrow \{\mathbf{h} \in \mathbb{R}^{k_u} : \sum_r h_r = 0\}$
    \STATE $\langle \mathbf{h}, \mathbf{h}' \rangle_w \leftarrow \sum_r h_r h'_r / n_r$
    \STATE $H^{(u)} \leftarrow$ Helmert matrix in $\mathbb{R}^{k_u \times (k_u-1)}$
    \STATE $\widetilde{H}^{(u)} \leftarrow$ Gram-Schmidt on $H^{(u)}$ under $\langle \cdot, \cdot \rangle_w$
    \FOR{$m = 1, \ldots, k_u - 1$}
        \FOR{$i = 1, \ldots, d$}
            \IF{$i \in C_u^{(r)}$ for some $r$}
                \STATE $V_{i,j} \leftarrow \widetilde{H}^{(u)}_{r,m} / n_r$
            \ELSE
                \STATE $V_{i,j} \leftarrow 0$
            \ENDIF
        \ENDFOR
        \STATE $j \leftarrow j + 1$
    \ENDFOR
\ENDFOR
\STATE \textbf{return} $V$
\end{algorithmic}
}
\end{algorithm}

%% file: section-camera/interpret.tex
\vspace{-15pt}
\section{Structured Analysis with PolyILR}\label{sec:polyilr-analysis}

We describe how PolyILR coordinates enable structured analysis beyond what standard log-ratio transforms provide. Let $x \in \Delta^{d-1}$ be a composition with components as leaves of $\mathcal{T}$. The PolyILR transform yields $z = \varphi(x) \in \mathbb{R}^{d-1}$.

\vspace{-5pt}
\subsection{Tree-Aligned Coordinates}\label{sec:tree-coords}

\textbf{Coordinate indexing.}
By construction, each coordinate index $j$ corresponds bijectively to a pair $(u, m)$: an internal node $u$ and a contrast index $m \in \{1, \ldots, k_u - 1\}$. The coordinate $z_j = z_{(u,m)}$ is a \emph{balance}---a log-ratio comparing the geometric mean of leaves under child $m+1$ against that under children $1, \ldots, m$ at node $u$ (see \eqref{eq:main_helmert}). This association is intrinsic.

\textbf{Multiscale structure.}
By construction, each internal node $u$ contributes $k_u - 1$ coordinates to the basis $V$ (see \S\ref{sec:construction}). Because these coordinates are orthonormal, the coordinates at distinct nodes span orthogonal subspaces, yielding a disjoint partition of $\mathbb{R}^{d-1}$ indexed by tree nodes. We can thus reason about node $u$ as a unit: do the coordinates at $u$ jointly explain an outcome? Does variation concentrate at $u$? 

This node-level partition extends to coarser groupings. Aggregating nodes by tree depth or by subtree membership yields alternative orthogonal partitions of the same space (Table~\ref{table:tree-subparts}). 
PolyILR inherits them directly from the tree. We illustrate these partitions in Figure~\ref{fig:tree-inference} (Appendix~\ref{appdx:exp}).

\begin{table}[h]
\centering
{\footnotesize
\begin{tabular}{ll}
\toprule
\textbf{Aggregation} & \textbf{Question Answered} \\
\midrule
Node & Which \emph{splits} drive signal? \\
Depth & What \emph{resolution} matters? \\
Subtree & Is an entire \emph{clade} informative? \\
\bottomrule
\end{tabular}
}
\vspace{3pt}
\caption{Tree substructure aggregations enabled by PolyILR.}
\vspace{-15pt}
\label{table:tree-subparts}
\end{table}

\subsection{Implications for Inference}\label{sec:inference}
Given data $\{(x_i, y_i)\}_{i=1}^{N}$ with outcome $y_i$, we transform $z_i = \varphi(x_i)$ and fit any model on $\{(z_i, y_i)\}$. Since $\varphi$ is an isometry, the full geometry is preserved.

\input{table/main_repr_acc_and_stab}
\input{table/main_interp}

\textbf{Feature selection.}
Identifying \emph{which features drive outcomes} is a key scientific goal. In genomics, neuroscience, and microbiome alike, the goal is often not just prediction but understanding which variables matter and why \citep{rudin2019stop, marcos2021applications}. With standard ILR, important coordinates are anonymous indices with no semantics. With PolyILR, when coordinate $j = (u, m)$ is identified as important, we know which node and contrast drive the signal: a log-ratio comparing specific groups of leaves. This interpretability is intrinsic to the representation, no post-hoc processing needed.

\textbf{Tree-level aggregation.}
The multiscale structure of PolyILR enables inference at any tree substructure. Let $\omega_j$ denote importance of coordinate $j$ from any method (e.g., random forest). We aggregate per-coordinate importances over any disjoint set $S$ of coordinates (node, depth, or subtree) via $\omega(S) = \sum_{j \in S} \omega_j$. This aggregation is well-defined because coordinates at distinct nodes span orthogonal subspaces, any partition of the tree into disjoint substructures yields a partition of $\mathbb{R}^{d-1}$, and the corresponding importances sum to the total without double-counting.

\textbf{Leaf-level importance.}
To quantify importance of an individual leaf $\ell$ (e.g., a taxon), we cannot directly aggregate coordinates because coordinates are not exclusive to any single leaf. Instead, we can distribute importance weighted by participation. By construction, $V_{\ell j}$ quantifies how much leaf $\ell$ participates in coordinate $j$. We define $\omega(\ell) = \sum_{j=1}^{d-1} V_{\ell j}^2 \cdot \omega_j$. Since columns of $V$ are unit vectors, $\sum_{\ell} V_{\ell j}^2 = 1$, so leaf importances sum to total importance.

%% file: table/main_repr_acc_and_stab.tex
\begin{table}[t]
\centering
\scriptsize
\setlength{\tabcolsep}{5pt}
\renewcommand{\arraystretch}{1}

\setlength{\tabcolsep}{4.6pt}
\begin{tabular}{clccc}
\toprule
& \multirow{2}{*}{\textbf{Task}} & \textbf{CLR} & \textbf{PhILR} & \textbf{PolyILR} \\
\cmidrule(lr){3-3} \cmidrule(lr){4-4} \cmidrule(lr){5-5}
& & RF / SVM / LR & RF / SVM / LR & RF / SVM / LR \\
\midrule
\multicolumn{5}{l}{\emph{Acc (\%)}} \\
\midrule
\multirow{2}{*}{\rotatebox{90}{\texttt{HMP}}}
& body sites (5)     & .956/.971/.962 & .961/.971/.962 & .963/.971/.962 \\
& body subsites (18) & .597/.672/.646 & .608/.672/.646 & .622/.672/.646 \\
\midrule
\multirow{2}{*}{\rotatebox{90}{\texttt{cMD3}}}
& westernized (2)    & .972/.979/.968 & .966/.979/.967 & .967/.979/.967 \\
& age category (5)   & .785/.814/.739 & .797/.814/.738 & .797/.814/.738 \\
\midrule
\multirow{2}{*}{\rotatebox{90}{\scalebox{0.77}{\texttt{DISCO}}}}
& leukemia (2)       & .925/.932/.927 & .940/.932/.927 & .935/.932/.927 \\
& HCC (2)            & .921/.927/.944 & .910/.927/.944 & .910/.927/.944 \\
\midrule
\addlinespace[3pt]
\multicolumn{5}{l}{\emph{AUROC}} \\
\midrule
\multirow{2}{*}{\rotatebox{90}{\texttt{HMP}}}
& body sites (5)     & .987/.995/.994 & .992/.995/.994 & .992/.995/.994 \\
& body subsites (18) & .957/.974/.967 & .965/.974/.967 & .966/.974/.967 \\
\midrule
\multirow{2}{*}{\rotatebox{90}{\texttt{cMD3}}}
& westernized (2)    & .975/.978/.967 & .966/.978/.966 & .966/.978/.967 \\
& age category (5)   & .836/.867/.833 & .837/.867/.833 & .843/.867/.833 \\
\midrule
\multirow{2}{*}{\rotatebox{90}{\scalebox{0.77}{\texttt{DISCO}}}}
& leukemia (2)       & .968/.984/.982 & .984/.984/.982 & .984/.984/.982 \\
& HCC (2)            & .921/.996/.998 & .984/.996/.998 & .974/.996/.998 \\
\bottomrule
\end{tabular}
\vspace{7pt}

\setlength{\tabcolsep}{3.7pt}
\begin{tabular}{clc@{\hskip 1.7pt}c@{\hskip 1.7pt}ccccccc}
\toprule
& \multirow{2}{*}{\textbf{Task}} & \multicolumn{3}{c}{\textbf{PolyILR}} & \multicolumn{3}{c}{\textbf{PhILR (index / semantic)}} \\
\cmidrule(lr){3-5} \cmidrule(lr){6-8}
& & $K$=5 & $K$=10 & $K$=50 & $K$=5 & $K$=10 & $K$=50 \\
\midrule
\multirow{2}{*}{\rotatebox{90}{\texttt{HMP}}} 
& body sites (5) & .66 & .65 & .84 & .01 / .08 & .01 / .06 & .07 / .05 \\
& body subsites (18) & .71 & .72 & .88 & .01 / .22 & .02 / .13 & .07 / .04 \\
\midrule
\multirow{3}{*}{\rotatebox{90}{\texttt{cMD3}}} 
& westernized (2) & .43 & .69 & .81 & .00 / .01 & .00 / .01 & .01 / .02 \\
& age category (5) & .73 & .58 & .76 & .12 / .04 & .09 / .03 & .03 / .03 \\
& healthy vs disease (2) & .56 & .86 & .80 & .00 / .02 & .00 / .03 & .02 / .02 \\
\midrule
\multirow{2}{*}{\rotatebox{90}{\scalebox{0.77}{\texttt{DISCO}}}} 
& leukemia (2) & .75 & .81 & .92 & .05 / .09 & .12 / .16 & .73 / .12 \\
& HCC (2) & .77 & .78 & .84 & .03 / .07 & .06 / .09 & .35 / .11 \\
\bottomrule
\end{tabular}
\vspace{5pt}
\caption{{\color{black}\textbf{(Top)} \emph{Classification accuracy and AUROC} (5 runs) across CLR, PhILR, and PolyILR. Each cell reports RF/SVM/LR. SVM and LR match across the three ILR representations within each task, as expected from isometry; differences are confined to RF but modest. Full statistical variability (95\% CIs) is in Appendix~\ref{appdx:exp-results} but observed to be small.} \textbf{(Bottom)} \emph{Feature stability} (Jaccard of top-$K$ features across CV folds). PolyILR stable; PhILR (index/semantic) unstable from arbitrary binarization.}
\vspace{-10pt}
\label{tab:repr-stability}
\end{table}

%% file: table/main_interp.tex
\begin{table*}[t]
\centering
\scriptsize
\setlength{\tabcolsep}{2pt}
\renewcommand{\arraystretch}{1.0}
\begin{tabular}{ccp{10cm}cc}
\toprule
& \textbf{Rank} & \textbf{Contrast} & \textbf{RF Importance (\%)} & \textbf{Rank range} \\
\midrule
\multirow{4}{*}{\rotatebox{90}{\parbox{0.9cm}{\centering\texttt{HMP}\\\tiny body sites}}}
& 1 & Streptococcaceae vs Lactobacillus + Leuconostocaceae                                      & 3.94 & 1   \\
& 2 & Lactococcus vs Streptococcus                                                              & 3.42 & 2--3 \\
& 3 & Pseudomonadales vs Cardiobacteriaceae + Vibrionaceae + Legionellales + ...                & 3.40 & 2--3 \\
& 4 & Bacillales vs Gemella + Exiguobacterium + Turicibacter + ...                              & 2.95 & 4--5 \\
\midrule
\multirow{4}{*}{\rotatebox{90}{\parbox{0.9cm}{\centering\texttt{cMD3}\\\tiny westernized}}}
& 1 & Prevotella vs Alistipes + Anaerotruncus + Bacteroides + ...                               & 1.77 & 1   \\
& 2 & Murimonas vs Alistipes + Anaerotruncus + Bacteroides + ...                                & 1.49 & 2--4 \\
& 3 & Prevotella vs Bacteroides + Alistipes                                                     & 1.45 & 2--5 \\
& 4 & Lactobacillus vs Alistipes + Anaerotruncus + Bacteroides + ...                            & 1.38 & 4--6 \\
\midrule
\multirow{4}{*}{\rotatebox{90}{\parbox{0.9cm}{\centering\texttt{cMD3}\\\tiny health/dis.}}}
& 1 & Lachnoclostridium vs Bacteroides + Turicibacter + Ruminococcus + ...                      & 0.55 & 1   \\
& 2 & Bifidobacterium vs Blautia + Enterococcus                                                 & 0.38 & 2--5 \\
& 3 & Flavonifractor vs Bifidobacterium + Actinomyces                                           & 0.35 & 2--6 \\
& 4 & Fusicatenibacter vs Gemmiger                                                              & 0.34 & 2--6 \\
\midrule
\multirow{4}{*}{\rotatebox{90}{\parbox{0.9cm}{\centering\texttt{DISCO}\\\tiny leukemia}}}
& 1 & Myeloid cell vs Erythrocyte/Megakaryocyte + Hematopoietic precursor cell                  & 14.50 & 1   \\
& 2 & Cycling T/NK cell vs ILC + T cell + NK cell                                               & 11.79 & 2   \\
& 3 & Lymphoid cell vs Erythrocyte/Megakaryocyte + Hematopoietic precursor + Myeloid cell       & 6.30  & 3--4 \\
& 4 & MAIT cell vs Naive T cell + Memory T cell + CD8 T cell + ...                              & 5.45  & 3--5 \\
\midrule
\multirow{4}{*}{\rotatebox{90}{\parbox{0.9cm}{\centering\texttt{DISCO}\\\tiny HCC}}}
& 1 & Erythrocyte/Megakaryocyte vs Myeloid cell + Lymphoid cell + Hematopoietic precursor cell  & 10.08 & 1   \\
& 2 & B cell precursor vs Plasma cell + INF-activated naive B cell + Memory B cell + ...        & 6.52  & 2--4 \\
& 3 & Hematopoietic precursor cell vs Myeloid cell + Lymphoid cell                              & 6.34  & 2--3 \\
& 4 & Venous EC vs LSEC                                                                         & 5.45  & 2--8 \\
\bottomrule
\end{tabular}
\vspace{8pt}
\caption{Top-4 PolyILR contrasts by RF importance (\%){\color{black}, with rank range across 5-fold CV}. Each contrast is a coordinate representing the log-ratio of geometric means between two groups. {\color{black}``A vs B + C'' means the log-ratio of geometric means of A's descendants against the pooled descendants of B and C; ``+ \ldots'' marks additional siblings omitted.} {\color{black}Full importance variability (mean $\pm$ std) is in Appendix~\ref{appdx:exp-results}.}}
\vspace{-10pt}
\label{tab:contrasts}
\end{table*}

%% file: section-camera/experiment.tex
\vspace{-5pt}
\section{Experiments}\label{sec:main-exp}

We evaluate PolyILR on standard microbiome and single-cell benchmarks. Our goals are to demonstrate PolyILR provides: \textbf{(G1)} valid ILR representations (\S\ref{sec:exp-representation}), \textbf{(G2)} stable feature selection, unlike PhILR with arbitrary binarization (\S\ref{sec:exp-stability}), \textbf{(G3)} interpretable features grounded in the tree (\S\ref{sec:exp-interpretability}), and \textbf{(G4)} structured inference by tree subparts (\S\ref{sec:exp-tree}).

\vspace{-5pt}
\subsection{Setup}\label{sec:exp-setup}

\textbf{Datasets.} We use three large datasets from two domains. For microbiome: \texttt{HMP} (Human Microbiome Project; 4,743 samples, 402 taxa) \citep{human2012structure} and \texttt{cMD3} (curatedMetagenomicData v3; 20,238 samples from 86 studies, 2,047 taxa) \citep{pasolli2017accessible}, with taxonomies from NCBI. For single-cell biology: \texttt{DISCO} (Database of Immune Single-Cell Omics; 751 sample-level composition profiles derived from ${\sim}$5.3M cells, 62--99 cell types) \citep{li2022disco}, with cell types organized by the Cell Ontology. Tasks include predictions on body site (5--18 sites), westernization, age category, healthy vs.\ disease (microbiome), and healthy vs.\ leukemia/HCC (single-cell). {\color{black}As with any log-ratio method, PolyILR requires zero handling before the coordinate transform; we use a small additive pseudocount per dataset (Appendix~\ref{appdx:zeros}).} 

\textbf{Methods.}
We compare PolyILR against CLR (same geometry, no tree alignment) and {\color{black}unweighted} PhILR with random binarization (tree-aligned, but {\color{black}defined on binary trees, so polytomies must be resolved arbitrarily)}. We use random forest (RF), SVM, and logistic regression (LR) with 5-fold cross-validation. {\color{black}Additional experiments, hyperparameter, and dataset construction details are in Appendix~\ref{appdx:exp}.}

\begin{remark}PolyILR is a representation (coordinate system), {\bf not} a task-specific model. So, {\em any} downstream analysis may be applied to the resulting coordinates. Conclusions and scientific validity depend on appropriate statistical methodology.
\end{remark}

\vspace{-5pt}
\subsection{Representation Validity}\label{sec:exp-representation}
We verify PolyILR is a geometrically valid representation \textbf{(G1)}. CLR projects compositions into the tangent space $\mathcal{H}$, spanned by PolyILR coordinates (Thm.~\ref{thm:polyilr}). Table~\ref{tab:repr-stability} (top) supports this equivalence: {\color{black}CLR, PhILR, and PolyILR yield identical SVM and LR accuracy, as expected from isometry. RF accuracy varies modestly across representations (within $\pm$1.5\% on most tasks), since RF is sensitive to the choice of axes; differences in either direction are consistent with the geometric equivalence.}

\vspace{-5pt}
\subsection{Feature selection is stable}\label{sec:exp-stability}
A key advantage of PolyILR over PhILR is \emph{canonical} decomposition on any tree \textbf{(G2)}. PhILR requires binarizing polytomies, making feature selection unstable across binarizations with \emph{no correct choice}. We measure stability via: (i) \emph{index stability} (Jaccard similarity of top-$K$ feature indices across runs) measuring if the same coordinate positions are selected; and (ii) \emph{semantic stability} (similarity of the corresponding taxonomic/ontological contrasts) measuring whether selected features represent the same biological comparisons regardless of index. The latter is fairer to PhILR as it ignores arbitrary index assignment. For PhILR, we vary the random binarization of the same polytomous tree across runs; for PolyILR, (i) and (ii) coincide since coordinates are canonical given the tree. Table~\ref{tab:repr-stability} (bottom) shows PolyILR achieves high stability (0.43--0.92) while PhILR collapses (near 0) under both metrics across all three datasets. Even when comparing semantically, PhILR's artificial binary splits yield different partitions across binarizations, confirming that the instability is structural (Figure~\ref{fig:binary-vs-poly}).

\vspace{-5pt}
\subsection{Features are interpretable}\label{sec:exp-interpretability}
PolyILR coordinates are directly interpretable as taxonomic/ontological contrasts \textbf{(G3)}. Table~\ref{tab:contrasts} shows the top-4 features by RF importance{\color{black}, with rank range across 5 runs indicating stability of the ranking}. Each feature is a log-ratio contrast between groups at a specific node (\S\ref{sec:inference}). 

\emph{Scientific interpretation (Table~\ref{tab:contrasts}).} The recovered contrasts are consistent with previously reported observations in the literature. For HMP body sites, Streptococcus vs.\ Lactococcus (3.4\%) reflects known niche specialization within Streptococcaceae across oral subsites~\cite{human2012structure,dewhirst2010human}{\color{black}, with Lactococcus lactis reported as a prevalent lactic-acid bacterium in the gut~\cite{pasolli2020large}}. For westernization, Prevotella vs.\ Bacteroides/Alistipes (1.5--1.8\%) captures the lifestyle axis, with Prevotella enriched in non-Western populations consuming plant-rich diets~\cite{de2010impact,yatsunenko2012human}. For healthy vs.\ disease, Lachnoclostridium (0.6\%) aligns with documented links to colorectal cancer and atherosclerosis~\cite{cai2022integrated,liang2020novel}. For leukemia, myeloid vs.\ erythroid/precursor imbalance (14.5\%) reflects lineage disruption in hematological malignancies~\cite{lowenberg1999acute}. For HCC, venous EC vs.\ LSEC (5.5\%) captures the well-documented dedifferentiation of liver sinusoidal endothelial cells in hepatocellular carcinoma~\cite{sorensen2015liver}.

\emph{Geometric structure.} Figure~\ref{fig:top2} projects \texttt{HMP} samples onto the top-2 coordinates. Unlike PCA, \emph{each axis here is a single interpretable contrast} rather than a linear combination of all features. We do not claim maximal variance explained; rather, biologically meaningful features alone suffice to separate body sites. The linear substructures within classes may reflect shared sparsity: samples with identical zero-count taxa map to parallel manifolds in ILR space.

\input{figure/combined_top2}
\input{table/main_tree_inf_hmp}
\input{table/main_tree_inf_cmd3}
\input{table/main_tree_inf_disco}

\vspace{-5pt}
\subsection{Tree-Level Inference}\label{sec:exp-tree}
PolyILR enables structured hypothesis testing at multiple resolutions \textbf{(G4)}. RF importance can be aggregated by depth, subtree, node, or leaf (see \S\ref{sec:inference}). We report all four levels for \texttt{HMP} (Table~\ref{tab:tree-hmp}) and \texttt{DISCO} (Table~\ref{tab:tree-disco}), but only depth and taxon for \texttt{cMD3} (Table~\ref{tab:tree-cmd3}) whose meta-analytic tree lacks consistent intermediate labels. Subtree-level partitions by root's children (root omitted).

\emph{Scientific interpretation (Tables~\ref{tab:tree-hmp}--\ref{tab:tree-disco}).}
Aggregations agree with known structure. For HMP, Firmicutes (47\%) and Proteobacteria (19\%) dominate body site signals~\cite{human2012structure,costello2009bacterial,ma2024systematic}. At node level, Actinomycetales (11\%) and Lactobacillales (9\%) capture skin vs.\ oral distinctions. For cMD3 westernization, coarse contrasts ($\leq$3) achieve 95.6\% acc., consistent with diet-associated shifts at coarse resolution~\cite{de2010impact,arumugam2011enterotypes}. For DISCO leukemia (single-cell), 95\% of importance concentrates in Immune cells, with T cell (16\%) and T/NK cell (14\%) nodes dominating~\cite{lowenberg1999acute}. For HCC, importance distributes across Immune (79\%), Endothelial (8\%), and Epithelial (4\%) subtrees, reflecting multi-compartment remodeling~\cite{sorensen2015liver}.

We should note that the biological interpretations above are \emph{plausibility checks consistent with prior literature}. Any causal or clinical conclusions will require much deeper analyses beyond the scope of this methodological work.

In summary, PolyILR addresses all goals \textbf{(G1--G4)} while recovering features consistent with known biomarkers. 

%% file: figure/combined_top2.tex
\begin{figure}[t]
\centering
\includegraphics[width=0.483\textwidth]{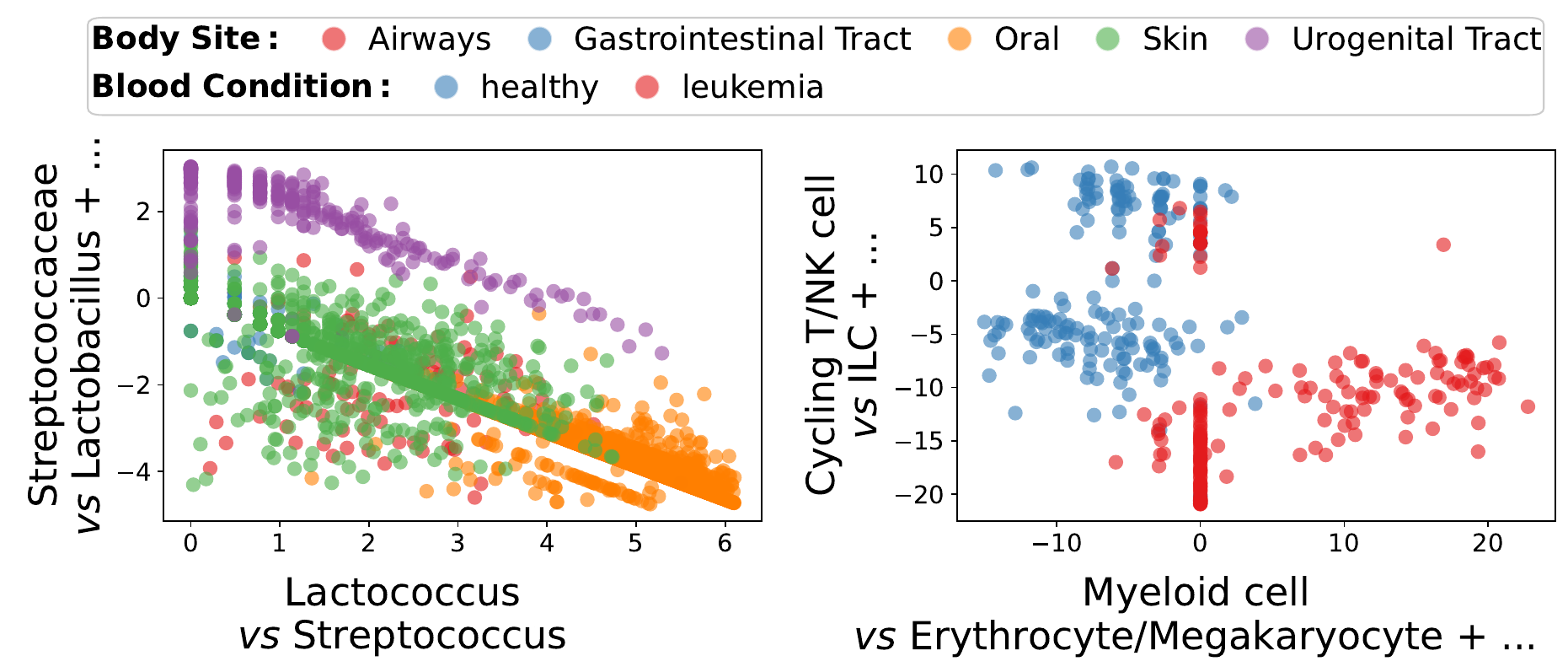}
\caption{Projection onto top-2 PolyILR features for \texttt{HMP} body site classification (left) and \texttt{DISCO} leukemia classification (right). Axes are the two most predictive contrasts (Table~\ref{tab:contrasts}).}
\vspace{-10pt}
\label{fig:top2}
\end{figure}

%% file: table/main_tree_inf_hmp.tex

\begin{table}[t]
\centering
\scriptsize
\setlength{\tabcolsep}{2pt}
\renewcommand{\arraystretch}{1.0}
\begin{tabular}{llcc @{\hspace{7.5pt}} lcc}
\toprule
& \multicolumn{3}{c}{\textbf{body sites (5)}} & \multicolumn{3}{c}{\textbf{body subsites (18)}} \\
\cmidrule(lr){2-4} \cmidrule(lr){5-7}
\textbf{Structure} & \textbf{Component} & \textbf{Acc} & \textbf{Imp.} & \textbf{Component} & \textbf{Acc} & \textbf{Imp.} \\
\midrule
\multirow{3}{*}{Depth}
& $\leq$0 (coarsest) & .87 & 6.8\% & $\leq$0 (coarsest) & .42 & 4.8\% \\
& $\leq$2 & .96 & 47\% & $\leq$2 & .59 & 44\% \\
& $\leq$4 (all) & .96 & 100\% & $\leq$4 (all) & .62 & 100\% \\
\midrule
\multirow{3}{*}{Subtree}
& Firmicutes & .95 & 47\% & Firmicutes & .55 & 36\% \\
& Proteobacteria & .87 & 20\% & Proteobacteria & .44 & 26\% \\
& Actinobacteria & .91 & 19\% & Actinobacteria & .45 & 22\% \\
\midrule
\multirow{3}{*}{Node}
& Actinomycetales & -- & 11\% & Actinomycetales & -- & 12\% \\
& Lactobacillales & -- & 9.1\% & Lachnospiraceae & -- & 8.4\% \\
& Gammaproteobacteria & -- & 8.0\% & Clostridiales & -- & 3.8\% \\
\midrule
\multirow{3}{*}{Taxon}
& Streptococcus & -- & 3.1\% & Oribacterium & -- & 1.0\% \\
& Lactococcus & -- & 3.1\% & Corynebacterium & -- & 1.0\% \\
& Pasteurella & -- & 2.6\% & Lactococcus & -- & 1.0\% \\
\bottomrule
\end{tabular}
\vspace{3pt}
\caption{Tree-level inference on \texttt{HMP}. Importance aggregated by depth (cumulative), subtree (phylum), node, and taxon.}
\vspace{-10pt}
\label{tab:tree-hmp}
\end{table}

%% file: table/main_tree_inf_cmd3.tex
\begin{table}[t]
\centering
\scriptsize
\setlength{\tabcolsep}{3pt}
\renewcommand{\arraystretch}{1.0}
\begin{tabular}{llcc @{\hspace{10pt}} lcc}
\toprule
& \multicolumn{3}{c}{\textbf{westernized (2)}} & \multicolumn{3}{c}{\textbf{healthy/disease (2)}} \\
\cmidrule(lr){2-4} \cmidrule(lr){5-7}
\textbf{Structure} & \textbf{Component} & \textbf{Acc.} & \textbf{Imp.} & \textbf{Component} & \textbf{Acc.} & \textbf{Imp.} \\
\midrule
\multirow{3}{*}{Depth} 
& $\leq$0 (coarsest) & .934 & 0.1\% & $\leq$0 (coarsest) & .613 & 0.3\% \\
& $\leq$3 & .956 & 5.3\% & $\leq$3 & .667 & 6.4\% \\
& $\leq$6 (all) & .967 & 100\% & $\leq$6 (all) & .682 & 100\% \\
\midrule
\multirow{3}{*}{Taxon} 
& Prevotella & -- & 1.7\% & Lachnoclostridium & -- & 0.6\% \\
& Murimonas & -- & 1.4\% & Bifidobacterium & -- & 0.3\% \\
& Bacteroides & -- & 1.3\% & Enterococcus & -- & 0.3\% \\
\bottomrule
\end{tabular}
\vspace{3pt}
\caption{Tree-level inference on \texttt{cMD3}. Importance aggregated by depth (cumulative) and taxon.}
\vspace{-15pt}
\label{tab:tree-cmd3}
\end{table}

%% file: table/main_tree_inf_disco.tex

\begin{table}[t]
\centering
\scriptsize
\setlength{\tabcolsep}{2pt}
\renewcommand{\arraystretch}{1.0}
\begin{tabular}{llcc @{\hspace{10pt}} lcc}
\toprule
& \multicolumn{3}{c}{\textbf{leukemia (2)}} & \multicolumn{3}{c}{\textbf{HCC (2)}} \\
\cmidrule(lr){2-4} \cmidrule(lr){5-7}
\textbf{Structure} & \textbf{Component} & \textbf{Acc.} & \textbf{Imp.} & \textbf{Component} & \textbf{Acc.} & \textbf{Imp.} \\
\midrule
\multirow{3}{*}{Depth}
& $\leq$0 (coarsest) & .62 & 4.6\% & $\leq$0 (coarsest) & .88 & 8.1\% \\
& $\leq$1 & .88 & 26\% & $\leq$1 & .91 & 38\% \\
& $\leq$6 (all) & .94 & 100\% & $\leq$6 (all) & .91 & 100\% \\
\midrule
\multirow{3}{*}{Subtree}
& Immune cell & .93 & 95\% & Immune cell & .92 & 79\% \\
& -- & -- & -- & Endothelial cell & .80 & 7.8\% \\
& -- & -- & -- & Epithelial cell & .78 & 4.1\% \\
\midrule
\multirow{3}{*}{Node}
& Immune cell & -- & 22\% & Immune cell & -- & 19\% \\
& T cell & -- & 16\% & Hema.\ precursor & -- & 9.6\% \\
& T/NK cell & -- & 14\% & Endothelial cell & -- & 7.8\% \\
\midrule
\multirow{3}{*}{Cell type}
& Cycling T/NK cell & -- & 11.3\% & GMP & -- & 4.4\% \\
& MAIT cell & -- & 5.0\% & Erythroblast (int.) & -- & 4.2\% \\
& Naive CD8 T cell & -- & 4.1\% & Erythroblast (late) & -- & 4.2\% \\
\bottomrule
\end{tabular}
\vspace{3pt}
\caption{Tree-level inference on \texttt{DISCO}. Importance aggregated by depth (cumulative), subtree, node, and cell type.}
\vspace{-10pt}
\label{tab:tree-disco}
\end{table}

%% file: section-camera/beyond_comp.tex
\section{Beyond Compositional Data}\label{sec:ml}

We establish a connection between compositional data and probabilistic modeling via shared underlying geometry. Further analysis and validation may be of independent interest.

\textbf{Aitchison geometry as quotient.} 
Compositional data identifies vectors up to equivalence classes $[\mathbf{c}]$ induced by $\mathbf{c} \sim_{c} \lambda\mathbf{c}$ for $\lambda > 0$ (i.e., scaling), since only ratios carry information. We observe that the quotient $\mathbb{R}^d_{>0} / {\sim_{c}}$ is the Aitchison simplex, whose tangent space is $\mathcal{H}$ via CLR {\color{black}\citep{aitchison1982statistical, egozcue2003isometric}}.

\textbf{Probabilistic modeling.}
Consider a model $f_\theta$ outputting logits $\mathbf{z} = f_\theta(\mathbf{x}) \in \mathbb{R}^d$, with predicted distribution $\mathbf{p} = \mathrm{softmax}(\mathbf{z})$ trained via cross-entropy. Since softmax is \emph{shift-invariant}, i.e., $\mathrm{softmax}(\mathbf{z} + c\mathbf{1}) = \mathrm{softmax}(\mathbf{z})$, this induces an equivalence relation $\mathbf{z} \sim_{\ell} \mathbf{z} + c\mathbf{1}$. The loss and predictions are invariant to shifts along $\mathbf{1}$. 

\begin{proposition}\label{prop:logit-clr}
Let $\mathcal{L} = \mathbb{R}^d / {\sim_{\ell}}$ be the quotient of logits under shift equivalence. Then $\mathcal{L} \cong \mathcal{H}$ (isomorphism). For $\mathbf{p} = \mathrm{softmax}(\mathbf{z})$, we have $\mathbf{z} - \bar{z}\mathbf{1} = \mathrm{clr}(\mathbf{p}), \quad \text{where } \bar{z} = \tfrac{1}{d}\textstyle\sum_i z_i$, and $[\mathbf{z}] \to \mathbf{z} - \bar{z}\mathbf{1}$ is well-defined.
\end{proposition}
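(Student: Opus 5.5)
We would prove Proposition~\ref{prop:logit-clr} by exhibiting an explicit linear isomorphism and then checking the softmax identity by direct computation. The natural candidate is the centering map $P = I - \tfrac{1}{d}\mathbf{1}\mathbf{1}^\top$, the orthogonal projector onto $\mathcal{H}$ from \eqref{eq:main_tangent_clr}. The argument has three parts: well-definedness on the quotient, bijectivity onto $\mathcal{H}$, and the identification with $\mathrm{clr}$.

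\emph{Well-definedness and linear structure.} The relation $\sim_\ell$ is the coset relation of the one-dimensional subspace $\mathrm{span}(\mathbf{1})$. Hence $\mathcal{L} = \mathbb{R}^d/\mathrm{span}(\mathbf{1})$ is a $(d-1)$-dimensional vector space, with operations $[\mathbf{z}]+[\mathbf{z}']=[\mathbf{z}+\mathbf{z}']$ and $\alpha[\mathbf{z}]=[\alpha\mathbf{z}]$. We define $\Phi([\mathbf{z}]) = P\mathbf{z} = \mathbf{z}-\bar z\mathbf{1}$. Replacing $\mathbf{z}$ by $\mathbf{z}+c\mathbf{1}$ changes $\bar z$ to $\bar z + c$, so $\mathbf{z}-\bar z\mathbf{1}$ is unchanged. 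Equivalently, $P\mathbf{1}=0$. This shows $\Phi$ is well defined, and it is clearly linear. Its image lies in $\mathcal{H}$ because $\mathbf{1}^\top(\mathbf{z}-\bar z\mathbf{1}) = d\bar z - d\bar z = 0$.

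\emph{Bijectivity.} For injectivity: if $P\mathbf{z}=0$, then $\mathbf{z}=\bar z\mathbf{1}$, so $[\mathbf{z}]=[0]$. For surjectivity: any $\mathbf{h}\in\mathcal{H}$ has $\bar h = 0$, so $\Phi([\mathbf{h}])=\mathbf{h}$. Thus $\Phi$ is a linear isomorphism $\mathcal{L}\cong\mathcal{H}$. We may also remark that $\Phi$ becomes an isometry if $\mathcal{L}$ carries the quotient norm $\inf_c\|\mathbf{z}+c\mathbf{1}\|_2$, whose minimizer is $c=-\bar z$. This is optional, but it strengthens the analogy with Theorem~\ref{thm:polyilr}.

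\emph{Softmax identity.} Let $p_i = e^{z_i}/Z$ with $Z=\sum_j e^{z_j}$. Then $\log p_i = z_i - \log Z$, and the log geometric mean is
\[
\log g(\mathbf{p}) = \tfrac{1}{d}\sum_i \log p_i = \bar z - \log Z.
\]
Therefore $\mathrm{clr}(\mathbf{p})_i = \log p_i - \log g(\mathbf{p}) = z_i - \bar z$. The normalizer $\log Z$ cancels, which is exactly the shift invariance. This gives $\mathrm{clr}\circ\mathrm{softmax} = \Phi\circ[\cdot]$.

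No step is a real obstacle. The only care needed is in making "isomorphism" precise: we would state that it holds as vector spaces, and additionally as the bijection $\mathcal{L}\to\Delta^{d-1}$ via softmax composed with $\mathrm{clr}$. For the latter, we would note that softmax restricted to $\mathcal{H}$ inverts $\mathrm{clr}$.
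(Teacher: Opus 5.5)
Your proposal is correct and follows essentially the same route as the paper, which also verifies $\mathrm{clr}(\mathbf{p})=\mathbf{z}-\bar z\mathbf{1}$ by cancelling $\log\sum_j e^{z_j}$, shows the centering map is well defined on shift classes, linear, injective and surjective onto $\mathcal{H}$, and adds the same quotient-metric isometry remark. Your extra observation that softmax restricted to $\mathcal{H}$ inverts $\mathrm{clr}$ is correct, but it goes beyond what the paper proves.
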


{\color{black}The individual components (i.e., the CLR hyperplane, centering map~\citep{egozcue2003isometric}, and softmax shift-invariance) are well-known. Proposition~\ref{prop:logit-clr} newly establishes that} after quotienting by the shift symmetry, logit space aligns with the CLR/Aitchison tangent space.

\textbf{Implications.}
Many datasets have tree structure over classes, e.g., a superclass hierarchy for \texttt{CIFAR-100} and WordNet for \texttt{ImageNet} \citep{deng2009imagenet, miller1995wordnet}. Given such a tree $\mathcal{T}$, PolyILR can transform model logits as $\mathbf{a} = \mathbf{V}^\top \mathbf{z}$ in tree-aligned coordinates where each component corresponds to a node and contrast. Since $\mathbf{p} = \mathrm{softmax}(\mathbf{V}\mathbf{a})$, predictions can be analyzed in this interpretable space. For instance, the gradient w.r.t.\ logits is $\nabla_{\mathbf{z}} \ell = \mathbf{p} - \mathbf{e}_y$, with one-hot target $\mathbf{e}_y$. The gradient w.r.t.\ PolyILR coordinates thus is $\nabla_{\mathbf{a}} \ell = \mathbf{V}^\top \nabla_{\mathbf{z}} \ell = \mathbf{V}^\top (\mathbf{p} - \mathbf{e}_y)$. Each $(\nabla_{\mathbf{a}} \ell)_{(u,m)}$ measures how strongly the loss pushes probability mass along contrast $m$ at node $u$, localizing model errors in $\mathcal{T}$. See Appendix~\ref{appdx:proof_algo} and~\ref{appdx:ml} for proof and preliminary results. Practical applications to model training or analysis remain an open direction.

%% file: section-camera/related_work.tex
\vspace{-8pt}
\section{Related Work}\label{sec:related}

\textbf{Compositional hierarchy methods.} ILR transform provides orthonormal coordinates for compositional data \citep{egozcue2003isometric}. {\color{black}PhILR \citep{silverman2017phylogenetic} aligns this transform with phylogenetic trees, the setting it was designed for, where binary topology is the standard convention; applying it to polytomous trees requires arbitrary binary resolution.} UniFrac \citep{lozupone2005unifrac} incorporates phylogenetic information but produces a dissimilarity measure, not a coordinate system. Phylofactorization \citep{washburne2017phylogenetic} and selbal \citep{rivera2018balances} {\color{black}target biomarker discovery and} identify predictive balances via greedy selection, but yield task-specific contrasts rather than a complete basis. Dirichlet-tree models \citep{mao2022dirichlet} use tree structure for clustering but operate probabilistically, not geometrically. PolyILR complements this line of work by providing a complete orthonormal decomposition for \emph{any} tree. {\color{black}Other work compares proportion-based and compositional normalizations~\citep{yerke2024proportion}.}

\textbf{Trees and geometric representations.} A separate line of work studies geometric representations of trees themselves. Hyperbolic embeddings learn representations of hierarchical data in spaces of constant negative curvature \citep{nickel2017poincare, chami2019hyperbolic, sala2018representation}, while tropical geometry and BHV tree space study geodesics and statistics over spaces \emph{of} trees \citep{billera2001geometry, owen2010fast, monod2018tropical}. These embed trees or treat them as data; PolyILR differs in that the tree is a fixed input that structures a decomposition of the data space.

%% file: section-camera/conclusion.tex
\section{Conclusion}\label{sec:conclusion}

We introduced PolyILR, a canonical orthonormal decomposition of the Aitchison simplex for arbitrary tree topologies, including polytomous ones common in real taxonomies and ontologies. The construction equips each internal node with a weighted local geometry that assembles into a global ILR basis, addressing the incompatibility between Aitchison geometry and polytomous hierarchies. Experiments on microbiome and single-cell data show stable, interpretable features with inference at multiple tree resolutions. A connection to softmax classifiers suggests applications to hierarchical probabilistic modeling. 

{\color{black}
\textbf{Limitations.} PolyILR requires a \emph{known, fixed tree} as input and does not accommodate topological uncertainty (e.g., bootstrap support, posterior distributions over trees). We also know that external trees such as taxonomies and phylogenies may also contain noise. We provide a robustness analysis under nearest-neighbor interchange perturbations in Appendix~\ref{appdx:nni}, and view extensions that propagate tree uncertainty into the coordinate system (e.g., support-weighted local geometries) as future work. PolyILR's coordinates are also only locally interpretable by construction, as those under different internal nodes live in distinct weighted subspaces and are not directly comparable, though cross-subtree summaries are recovered via tree-substructure aggregation in Section~\ref{sec:inference}. Finally, like all log-ratio methods, PolyILR requires zero replacement as preprocessing (Section~\ref{sec:exp-setup}; Appendix~\ref{appdx:zeros}). 
}

%% file: section-camera/appendix/proofs_algo.tex
\section{Proofs and Algorithm Details}\label{appdx:proof_algo}
This section contains full proofs of the main theoretical results and detailed algorithm specifications. 

\subsection{Proofs}

\begin{theorem}[Restatement of Theorem~\ref{thm:polyilr}]
Let $\mathcal{T}$ be any rooted tree with $d$ leaves. The matrix $V \in \mathbb{R}^{d \times (d-1)}$ from Algorithm~\ref{alg:polyilr} satisfies:
\begin{enumerate}
    \item $V^\top \mathbf{1} = 0$ \textup{(contrast property)},
    \item $V^\top V = I_{d-1}$ \textup{(orthonormality)}.
\end{enumerate}
Consequently, $\varphi(x) = V^\top \log x$ is an isometry from $(\Delta^{d-1}, \langle \cdot, \cdot \rangle_A)$ to $(\mathbb{R}^{d-1}, \langle \cdot, \cdot \rangle_2)$.
\end{theorem}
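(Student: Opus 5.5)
We first check that $V$ has exactly $d-1$ columns, then verify the two properties column by column, and finally deduce the isometry from the standard ILR fact recalled around \eqref{eq:main_ilr_transform}.

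\emph{Column count.} Each internal node $u$ contributes $k_u-1$ columns. We need
\[
\sum_u (k_u-1)=d-1 .
\]
The edge count of the rooted tree gives this. There are $\sum_u k_u$ edges, and this also equals the number of non-root nodes, which is $(\#\text{internal}-1)+d$. Subtracting the number of internal nodes from both sides yields $d-1$. Here we assume, as is implicit in the construction, that every internal node has $k_u\ge 2$; a unary node would give an empty Helmert block and break the count. We also note that Gram--Schmidt under $\langle\cdot,\cdot\rangle_w$ is well-defined. The Helmert columns are linearly independent and lie in $S_u$, and $\langle\cdot,\cdot\rangle_w$ is positive definite because all $n_r\ge 1$. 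So $\widetilde H^{(u)}$ is an orthonormal basis of $S_u$ for $\langle\cdot,\cdot\rangle_w$. In particular, its columns still sum to zero, since Gram--Schmidt only forms linear combinations within $S_u$.

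\emph{Contrast property.} Take a column $v$ spread from $h=\widetilde H^{(u)}_{\cdot,m}$. Then
\[
\mathbf 1^\top v=\sum_r n_r\cdot h_r/n_r=\sum_r h_r=0,
\]
because $h\in S_u$.

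\emph{Orthonormality.} For columns $v,v'$, we split into three cases.
\begin{itemize}
\item[(a)] \emph{Same node $u$.} The paper's spreading computation gives $\langle v,v'\rangle=\langle h,h'\rangle_w=\delta_{h,h'}$.
\item[(b)] \emph{Incomparable nodes $u,u'$.} Their leaf sets are disjoint, so the supports are disjoint and $\langle v,v'\rangle=0$.
\item[(c)] \emph{Nested nodes, $u'$ a proper descendant of $u$.} All leaves below $u'$ lie in a single child clade $C_r^{(u)}$, on which $v$ is constant, equal to $h_r/n_r$. Hence
\[
\langle v,v'\rangle=(h_r/n_r)\,\mathbf 1^\top v'=0
\]
by the contrast property already shown for $v'$.
\end{itemize}
The nested case is the only step that needs care. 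The key point is that $v$ is constant on the entire support of $v'$, which follows because the clade of $u'$ sits inside one child clade of $u$.

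\emph{Isometry.} From (1) and (2), the $d-1$ columns of $V$ are orthonormal and lie in $\mathcal H$. Since $\dim\mathcal H=d-1$, they form an orthonormal basis of $\mathcal H$. Moreover, $V^\top\log x=V^\top\mathrm{clr}(x)$ because $V^\top\mathbf 1=0$. The CLR map is an isometry onto $\mathcal H$, so $\varphi$ is an isometry onto $\mathbb R^{d-1}$, as stated after \eqref{eq:main_ilr_transform}.
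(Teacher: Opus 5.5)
Your proof is correct and follows the paper's argument essentially step for step: the contrast property from the columns of $\widetilde H^{(u)}$ lying in $S_u$, within-node orthonormality via the weighted inner product, disjoint supports for incomparable nodes, constancy of $v$ on the child clade containing $u'$ in the nested case, and the edge-count identity giving $\sum_u (k_u-1)=d-1$. One small correction to an aside: the assumption $k_u\ge 2$ is unnecessary, because a unary node contributes $k_u-1=0$ columns and your identity $\sum_u k_u=(\#\text{internal}-1)+d$ still holds, so such a node does not break the count.
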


\begin{proof}
Let $T$ be a rooted tree with $d$ leaves. For each internal node $u$ with $k_u$ children, let $C_u^{(r)}$ denote the set of leaves descending from child $r$, and let $n_r = |C_u^{(r)}|$. The construction produces a local basis $\tilde{H}^{(u)} \in \mathbb{R}^{k_u \times (k_u - 1)}$ that is orthonormal under the weighted inner product $\langle h, h' \rangle_w = \sum_{r=1}^{k_u} h_r h'_r / n_r$ and satisfies the contrast condition $\sum_{r=1}^{k_u} \tilde{H}^{(u)}_{r,m} = 0$ for all $m$. Each local vector is spread to a global vector $v^{(u,m)} \in \mathbb{R}^d$ via
\[
v^{(u,m)}_i = 
\begin{cases}
\tilde{H}^{(u)}_{r,m} / n_r & \text{if } i \in C_u^{(r)} \text{ for some } r \in \{1, \ldots, k_u\}, \\
0 & \text{otherwise}.
\end{cases}
\]
The precise procedure is given in Algorithm~\ref{alg:polyilr}. It suffices for us to show that $V$ (i) satisfies the contrast property (hence its columns span the tangent space $\mathcal{H}$), (ii) is orthonormal within each node, and (iii) is orthonormal across distinct nodes. 

\emph{(i) Contrast property.}
For any internal node $u$ and contrast index $m$,
\[
\sum_{i=1}^{d} v^{(u,m)}_i 
= \sum_{r=1}^{k_u} \sum_{i \in C_u^{(r)}} \frac{\tilde{H}^{(u)}_{r,m}}{n_r}
= \sum_{r=1}^{k_u} n_r \cdot \frac{\tilde{H}^{(u)}_{r,m}}{n_r}
= \sum_{r=1}^{k_u} \tilde{H}^{(u)}_{r,m} = 0,
\]
where the last equality holds because $\tilde{H}^{(u)}$ is obtained by applying Gram-Schmidt to Helmert contrasts, which lie in the subspace $S_u = \{h \in \mathbb{R}^{k_u} : \sum_r h_r = 0\}$, and Gram-Schmidt preserves this subspace. 

\emph{(ii) Within-node orthonormality.}
For any internal node $u$ and contrast indices $m_1, m_2 \in \{1, \ldots, k_u - 1\}$,
\begin{align*}
\langle v^{(u,m_1)}, v^{(u,m_2)} \rangle 
&= \sum_{i=1}^{d} v^{(u,m_1)}_i \, v^{(u,m_2)}_i 
= \sum_{r=1}^{k_u} \sum_{i \in C_u^{(r)}} \frac{\tilde{H}^{(u)}_{r,m_1}}{n_r} \cdot \frac{\tilde{H}^{(u)}_{r,m_2}}{n_r} \\
&= \sum_{r=1}^{k_u} n_r \cdot \frac{\tilde{H}^{(u)}_{r,m_1} \, \tilde{H}^{(u)}_{r,m_2}}{n_r^2}
= \sum_{r=1}^{k_u} \frac{\tilde{H}^{(u)}_{r,m_1} \, \tilde{H}^{(u)}_{r,m_2}}{n_r} \\
&= \langle \tilde{H}^{(u)}_{\cdot, m_1}, \tilde{H}^{(u)}_{\cdot, m_2} \rangle_w 
= \delta_{m_1, m_2},
\end{align*}
where the last equality holds by construction of $\tilde{H}^{(u)}$ as an orthonormal basis under $\langle \cdot, \cdot \rangle_w$.

\emph{(iii) Across-node orthogonality.}
Consider two distinct internal nodes $u \neq u'$ with contrast indices $m$ and $m'$ respectively. We consider two cases. (a) If the subtrees rooted at $u$ and $u'$ share no leaves, then $v^{(u,m)}$ and $v^{(u',m')}$ have disjoint supports, so $\langle v^{(u,m)}, v^{(u',m')} \rangle = 0$. (b) Suppose without loss of generality that $u'$ is a descendant of $u$. Then $u'$ lies entirely within the subtree of exactly one child of $u$, say child $s$, so the support of $v^{(u',m')}$ is contained in $C_u^{(s)}$. On $C_u^{(s)}$, the vector $v^{(u,m)}$ takes the constant value $\tilde{H}^{(u)}_{s,m} / n_s$. Since $v^{(u',m')}_i = 0$ for $i \notin C_u^{(s)}$,
\[
\langle v^{(u,m)}, v^{(u',m')} \rangle 
= \sum_{i \in C_u^{(s)}} v^{(u,m)}_i \, v^{(u',m')}_i
= \frac{\tilde{H}^{(u)}_{s,m}}{n_s} \sum_{i \in C_u^{(s)}} v^{(u',m')}_i 
= \frac{\tilde{H}^{(u)}_{s,m}}{n_s} \cdot 0 = 0,
\]
where the last equality uses (i): the entries of $v^{(u',m')}$ sum to zero, and all nonzero entries lie within $C_u^{(s)}$.

\emph{Conclusion.}
(i), (ii), and (iii) establish that $V^\top \mathbf{1} = 0$ and $V^\top V = I_{d-1}$. For the dimension count, let $N_{\mathrm{int}}$ denote the number of internal nodes. Every non-root node has exactly one parent, so the number of edges satisfies $|E| = d + N_{\mathrm{int}} - 1$. Since $|E| = \sum_u k_u$, we have
\[
\sum_u (k_u - 1) = \sum_u k_u - N_{\mathrm{int}} = (d + N_{\mathrm{int}} - 1) - N_{\mathrm{int}} = d - 1 = \dim(\mathcal{H}).
\]
By standard ILR theory, any $V \in \mathbb{R}^{d \times (d-1)}$ satisfying $V^\top \mathbf{1} = 0$ and $V^\top V = I_{d-1}$ has columns forming an orthonormal basis of $\mathcal{H}$, the image of $\mathrm{clr}$. Thus $\phi(x) = V^\top \log x$ is an isometry from $(\Delta^{d-1}, \langle \cdot, \cdot \rangle_A)$ to $(\mathbb{R}^{d-1}, \langle \cdot, \cdot \rangle_2)$.
\end{proof}

\begin{proposition}[Restatement of Proposition~\ref{prop:recovery}]
Given a rooted tree $\mathcal{T}$ with fixed leaf labels, fixed child orderings, \textcolor{black}{and a fixed ordering $\pi$ of internal nodes (e.g., DFS)}, PolyILR produces a unique basis $V(\mathcal{T})$. Moreover, $\mathcal{T} \mapsto V(\mathcal{T})$ is injective: $\mathcal{T}$ can be recovered constructively from $V$.
\end{proposition}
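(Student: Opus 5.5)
Uniqueness is immediate from determinism. Every step of Algorithm~\ref{alg:polyilr} is a deterministic function of $(\mathcal{T},\text{child orderings},\pi)$. These inputs fix three things: the clades $C_u^{(r)}$ and sizes $n_r$, the Helmert matrix $H^{(u)}$ of \eqref{eq:main_helmert}, and the Gram--Schmidt output $\widetilde{H}^{(u)}$. Gram--Schmidt on a linearly independent input with a fixed inner product $\langle\cdot,\cdot\rangle_w$ and the positive-diagonal convention has a unique output. The only thing to check is that the Helmert columns are linearly independent in $S_u$; this holds because they are Euclidean-orthonormal. The spreading map and the column order given by $\pi$ are then also fixed, so $V(\mathcal{T})$ is well defined.

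For injectivity I will give a constructive recovery, and I will recover the tree only up to the labeled leaf set. The key observation concerns each column $v^{(u,m)}$, which is constant on every child clade $C_u^{(r)}$.
\begin{itemize}
\item Its support is contained in the leaf set $L(u)$ of $u$.
\item For the first Helmert-derived column ($m=1$), the weighted Gram--Schmidt leaves $H_{\cdot,1}$ merely rescaled. This column is nonzero exactly on children $1$ and $2$.
\end{itemize}
A single column therefore need not reveal all of $L(u)$, so I will work with blocks of columns rather than individual ones.

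The plan is to group the columns into node blocks and read clades off the union of supports. Because $\widetilde{H}^{(u)}$ spans all of $S_u$, the union of the supports of $\{v^{(u,m)}\}_m$ equals $L(u)$. This is because no child $r$ can have $\widetilde{H}^{(u)}_{r,m}=0$ for every $m$; otherwise $S_u\subseteq\{h_r=0\}$, which is impossible for $k_u\ge2$. The level sets of the block's columns on $L(u)$ then recover the partition $\{C_u^{(r)}\}$: two leaves lie in the same child clade iff every column of the block agrees on them. If two different children gave identical rows across the block, the span would miss $e_r-e_{r'}\in S_u$. The ordering of children can be read off from the Helmert pattern, for instance from which rows are nonzero in the $m$-th column.

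Grouping columns into blocks is the main obstacle, since $V$ alone carries no block labels. I would resolve it in one of two ways.
\begin{itemize}
\item Use the known $\pi$ together with a recursive reconstruction. The first block in DFS order is the root, whose support must be all $d$ leaves. One then takes the longest prefix of columns whose supports' union and level-set partition are consistent, and recurses into the children's clades in DFS order.
\item More cleanly, use a characterization of the block of $u$ as the set of columns whose supports are not contained in the support of a single level set of another column. Equivalently, columns $j,j'$ belong to the same node iff each is non-constant on the support of the other.
\end{itemize}
I would verify this criterion case by case using the proof of Theorem~\ref{thm:polyilr}, step (iii). A descendant's column lies inside one constant piece of its ancestor's column. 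For a disjoint pair, each column is zero on the other's support. For the same node, the columns overlap on at least two children where they are not both constant.

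Once the blocks are identified, the set of clades $\{L(u)\}\cup\{C_u^{(r)}\}$ is recovered. A rooted tree with labeled leaves is determined by its clade system through the standard Hasse diagram under inclusion. This yields $\mathcal{T}$ and hence injectivity.
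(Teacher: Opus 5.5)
Your uniqueness argument is fine, and so is your within-block step. Once the columns of a node $u$ are known, their joint level sets give the child partition of $L(u)$: this is your $e_r-e_{r'}\in S_u$ argument, after accounting for the $1/n_r$ rescaling. That step does not even depend on the Helmert choice, whereas the paper reads the clades off successive support differences plus the sign split of $v^{(u,1)}$.

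The gap is the step you flag yourself: grouping columns into node blocks. Your preferred criterion (same node iff each column is non-constant on the other's support) is false. Weighted Gram--Schmidt makes the $m$-th local column at $u$ proportional to $(n_1,\dots,n_m,-N_m,0,\dots,0)$, with $N_m=\sum_{r\le m}n_r$. After spreading, $v^{(u,m')}$ is therefore constant on all leaves of children $1,\dots,m'$. So whenever $m<m'$ it is constant on $\mathrm{supp}(v^{(u,m)})=\bigcup_{r\le m+1}C_u^{(r)}$, which is the same pattern as an ancestor--descendant pair. For the star tree on three leaves, $V=[(1,-1,0)^\top/\sqrt{2},\ (1,1,-2)^\top/\sqrt{6}]$. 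The second column equals $1/\sqrt{6}$ on $\{1,2\}$, so your test splits the root block and returns $((1,2),3)$. Moreover, $((1,2),3)$ under pre-order $\pi$ has exactly these two columns in the opposite order, and under post-order the two trees give the identical $V$. So no order-free criterion can exist: injectivity genuinely rests on $\pi$ visiting parents before children.

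Your option (a) does use $\pi$, but ``the longest prefix whose supports' union and level-set partition are consistent'' is not a defined test. Its natural reading overshoots. Take a root with children leaf $1$ and a node over $\{2,3\}$. The two-column prefix $[(2,-1,-1)^\top/\sqrt{6},\ (0,1,-1)^\top/\sqrt{2}]$ has union $\{1,2,3\}$, three level-set parts, and spans $\mathcal{H}$, just like a star's root block.

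The missing ingredient is the paper's nested support chain $\mathrm{supp}(v^{(u,1)})\subsetneq\cdots\subsetneq\mathrm{supp}(v^{(u,k_u-1)})=L(u)$, which follows from the explicit form above. With pre-order $\pi$ it delimits the blocks: a new block starts at column $j+1$ exactly when $\mathrm{supp}(v_{j+1})$ does not strictly contain $\mathrm{supp}(v_j)$. Within a block, supports strictly increase. After the last column of $u$ (support $L(u)$), the next node visited is either a descendant of $u$, with support inside a proper subclade, or lies in a subtree disjoint from $L(u)$. With that lemma in place, your level-set argument and the clade-inclusion step complete the proof.
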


\begin{proof}[Proof]
We prove uniqueness and recoverability separately.

\emph{(i) Uniqueness.}
The PolyILR construction (Algorithm~\ref{alg:polyilr}) is deterministic given the tree $T$, leaf labels, child orderings at each internal node, and an ordering $\pi$ of internal nodes. At each node $u$ with $k_u$ children, the construction follows:
\begin{itemize}
    \item The descendant sets $C_u^{(r)}$ and counts $n_r = |C_u^{(r)}|$ are determined by $T$ and the leaf labels, and the corresponding weighted inner product $\langle h, h' \rangle_w = \sum_r h_r h'_r / n_r$ is determined by $\{n_r\}$.
    \item The Helmert matrix $H^{(u)} \in \mathbb{R}^{k_u \times (k_u - 1)}$ is determined by $k_u$ and the child ordering. The weighted-orthonormal basis $\tilde{H}^{(u)}$ is obtained by applying Gram-Schmidt to $H^{(u)}$ under $\langle \cdot, \cdot \rangle_w$. Gram-Schmidt is made unique by normalizing each output vector to unit $w$-norm and adopting a fixed sign convention (e.g., the first nonzero entry in child order is positive). Gram-Schmidt works as the weighted inner product forms a Hilbert space over the subspace $\mathcal{S}_u$.
    \item The spreading operation is deterministic given $C_u^{(r)}$ and $n_r$.
\end{itemize}
Thus $V(T)$ is uniquely determined. This makes PolyILR \emph{canonical}, i.e., the construction does not incur intractable randomness. 

\emph{(ii) Recoverability.}
We show that the tree topology can be recovered from $V$.

\emph{Support structure:} By the structure of Helmert contrasts, column $m$ of $H^{(u)}$ has nonzero entries only in rows $1, \ldots, m+1$. Since Gram-Schmidt orthonormalizes sequentially, $\tilde{H}^{(u)}_{\cdot,m}$ is a linear combination of $H^{(u)}_{\cdot,1}, \ldots, H^{(u)}_{\cdot,m}$, so $\tilde{H}^{(u)}_{r,m} = 0$ for $r > m+1$. Moreover, columns $1, \ldots, m-1$ of $H^{(u)}$ have zeros in row $m+1$, so orthogonalizing against them does not affect entry $(m+1, m)$; thus $\tilde{H}^{(u)}_{m+1,m} = H^{(u)}_{m+1,m} \neq 0$. After spreading, the support of $v^{(u,m)}$ is $\bigcup_{r=1}^{m+1} C_u^{(r)}$, and the supports at node $u$ form a \emph{strictly nested chain}:
\[
\mathrm{supp}(v^{(u,1)}) \subsetneq \mathrm{supp}(v^{(u,2)}) \subsetneq \cdots \subsetneq \mathrm{supp}(v^{(u,k_u-1)}) = \bigcup_{r=1}^{k_u} C_u^{(r)}.
\]

{\color{black}

\emph{Recovering the child clades of a fixed node.}
Fix an internal node $u$ with children $C_u^{(1)},\dots,C_u^{(k_u)}$. From the support-structure argument above,
\[
    \mathrm{supp}\!\bigl(v^{(u,m)}\bigr)=\bigcup_{r=1}^{m+1} C_u^{(r)}.
\]
For $m=2,\dots,k_u-1$, the child clades $C_u^{(m+1)} = \mathrm{supp}\!\bigl(v^{(u,m)}\bigr)\setminus \mathrm{supp}\!\bigl(v^{(u,m-1)}\bigr)$ are determined by the support chain alone. It remains to recover $C_u^{(1)}$ and $C_u^{(2)}$ from $\mathrm{supp}(v^{(u,1)}) = C_u^{(1)} \cup C_u^{(2)}$. Since there are no previous columns to orthogonalize against, the first weighted Gram-Schmidt output satisfies $\tilde H^{(u)}_{\cdot,1} = \alpha \, H^{(u)}_{\cdot,1}$ for some $\alpha>0$. The first Helmert column has opposite signs in rows $1$ and $2$, so $\tilde H^{(u)}_{1,1} > 0$ and $\tilde H^{(u)}_{2,1} < 0$. After spreading, the values $\tilde H^{(u)}_{1,1}/n_1 > 0 > \tilde H^{(u)}_{2,1}/n_2$ are distinct, so the two level sets of $v^{(u,1)}$ on its support are precisely $C_u^{(1)}$ and $C_u^{(2)}$.


\emph{Recovering the full tree.}
Applying the preceding child-clade recovery argument at the root and then recursively
to each non-singleton child clade recovers all clades of $\mathcal{T}$. A rooted
tree with labeled leaves is uniquely determined by its clades via inclusion:
$u'$ is a child of $u$ iff the clade of $u'$ is a maximal proper subset of the
clade of $u$ (i.e., a strict subset $C' \subsetneq C$ with no clade $C''$ satisfying $C' \subsetneq C'' \subsetneq C$), and leaves are singleton clades. Thus $\mathcal{T}$ is recoverable
from $V$.}
\end{proof}

Here, we comment on child orderings. Different child orderings at node $u$ yield local bases that differ by an orthogonal transformation on $(S_u, \langle \cdot, \cdot \rangle_w)$. This induces an orthogonal transformation on the $(k_u-1)$-dimensional coordinate block of $V$ corresponding to $u$. The geometry is preserved; only coordinate labels change. Now, we present the proof for Proposition~\ref{prop:logit-clr}.

\begin{proposition}[Restatement of Proposition~\ref{prop:logit-clr}]
Let $\mathcal{L} = \mathbb{R}^d / {\sim_{\ell}}$ be the quotient space of logits under shift equivalence. Then $\mathcal{L} \cong \mathcal{H}$ (linear isomorphism). Concretely, for $\mathbf{p} = \mathrm{softmax}(\mathbf{z})$, we have
\[
\mathbf{z} - \bar{z}\mathbf{1} = \mathrm{clr}(\mathbf{p}), \quad \text{where } \bar{z} = \tfrac{1}{d}\textstyle\sum_i z_i,
\]
and $[\mathbf{z}] \to \mathbf{z} - \bar{z}\mathbf{1}$ is well-defined.
\end{proposition}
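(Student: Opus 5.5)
We will work with the centering map $P:\mathbb{R}^d\to\mathcal{H}$, $P(\mathbf{z}) = \mathbf{z}-\bar z\mathbf{1} = (I-\tfrac1d\mathbf{1}\mathbf{1}^\top)\mathbf{z}$, and prove three things in order: the explicit identity with $\mathrm{clr}$, well-definedness on the quotient $\mathcal{L}$, and the linear isomorphism $\mathcal{L}\cong\mathcal{H}$.

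\textbf{The identity with $\mathrm{clr}$.} Write $\mathbf{p}=\mathrm{softmax}(\mathbf{z})$, so that $\log p_i = z_i - \log Z$ with $Z=\sum_j e^{z_j}$. Then
\[
\log g(\mathbf{p}) = \tfrac1d\sum_i \log p_i = \bar z - \log Z,
\]
and therefore
\[
\mathrm{clr}(\mathbf{p})_i = \log p_i - \log g(\mathbf{p}) = z_i - \bar z .
\]
The normalizer $\log Z$ cancels in this difference.

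\textbf{Well-definedness.} If $\mathbf{z}'=\mathbf{z}+c\mathbf{1}$, then $\bar z' = \bar z + c$, so $P(\mathbf{z}')=P(\mathbf{z})$. Hence $[\mathbf{z}]\mapsto P(\mathbf{z})$ is a well-defined map $\widetilde P:\mathcal{L}\to\mathcal{H}$. Its image does lie in $\mathcal{H}$, since $\mathbf{1}^\top P(\mathbf{z}) = d\bar z - d\bar z = 0$.

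\textbf{The isomorphism.} Regard $\mathcal{L}$ as the vector-space quotient $\mathbb{R}^d/\mathrm{span}(\mathbf{1})$; this is legitimate because the equivalence classes of $\sim_\ell$ are exactly the cosets of $\mathrm{span}(\mathbf{1})$. The map $P$ is linear, and we identify its kernel and image.
\begin{itemize}
\item If $P(\mathbf{z})=0$, then $\mathbf{z}=\bar z\mathbf{1}\in\mathrm{span}(\mathbf{1})$. Conversely $P(\mathbf{1})=0$. So $\ker P=\mathrm{span}(\mathbf{1})$.
\item For $\mathbf{h}\in\mathcal{H}$ we have $\bar h=0$, so $P(\mathbf{h})=\mathbf{h}$. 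Thus $P$ maps onto $\mathcal{H}$.
\end{itemize}
By the first isomorphism theorem, $\widetilde P:\mathbb{R}^d/\mathrm{span}(\mathbf{1})\to\mathcal{H}$ is a linear isomorphism. Both sides have dimension $d-1$, which is consistent.

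\textbf{Optional remarks.} Every $\mathbf{p}$ in the open simplex equals $\mathrm{softmax}(\log\mathbf{p})$, and $\mathrm{softmax}(\mathbf{z})=\mathrm{softmax}(\mathbf{z}')$ holds exactly when $\mathbf{z}-\mathbf{z}'\in\mathrm{span}(\mathbf{1})$. Together with the first step, this shows the composite $\mathcal{L}\to\Delta^{d-1}\to\mathcal{H}$ (softmax followed by $\mathrm{clr}$) coincides with $\widetilde P$. The correspondence is therefore compatible with Aitchison geometry as well as with the linear structure.

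None of these steps is a real obstacle. The only point needing care is to state clearly that the quotient is taken as vector spaces.
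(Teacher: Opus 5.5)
Your proposal is correct and follows essentially the same route as the paper: the same cancellation of $\log\sum_j e^{z_j}$ for the clr identity, the same shift computation for well-definedness, and surjectivity via $P(\mathbf{h})=\mathbf{h}$. Your kernel-plus-first-isomorphism-theorem step plays the role of the paper's direct injectivity check ($\pi(\mathbf{z})=\pi(\mathbf{z}')$ implies $\mathbf{z}-\mathbf{z}'\in\mathrm{span}(\mathbf{1})$), and the paper additionally notes, beyond what the statement requires, that the induced map is an isometry for the quotient metric $\inf_{c}\|\mathbf{z}-\mathbf{z}'+c\mathbf{1}\|$.
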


\begin{proof}
We establish both the concrete identity and the isomorphism.

\emph{(i) Centered logits equal CLR coordinates.}
Let $p = \mathrm{softmax}(z)$, so $p_i = e^{z_i} / \sum_j e^{z_j}$. Taking logarithms,
\[
\log p_i = z_i - \log \sum_j e^{z_j}.
\]
The CLR transform is $\mathrm{clr}(p)_i = \log p_i - \frac{1}{d}\sum_k \log p_k$. Substituting,
\begin{align*}
\mathrm{clr}(p)_i 
&= \left( z_i - \log \sum_j e^{z_j} \right) - \frac{1}{d} \sum_k \left( z_k - \log \sum_j e^{z_j} \right) \\
&= z_i - \log \sum_j e^{z_j} - \bar{z} + \log \sum_j e^{z_j} \\
&= z_i - \bar{z},
\end{align*}
where $\bar{z} = \frac{1}{d}\sum_k z_k$. Thus $\mathrm{clr}(p) = z - \bar{z}\mathbf{1}$.

\emph{(ii) Isomorphism.}
Define the centering map $\pi \colon \mathbb{R}^d \to \mathcal{H}$ by $\pi(z) = z - \bar{z}\mathbf{1}$. We show $\pi$ induces a linear isomorphism from $\mathcal{L} = \mathbb{R}^d/{\sim_\ell}$ to $\mathcal{H}$. If $z' = z + c\mathbf{1}$ for some $c \in \mathbb{R}$, then $\bar{z'} = \bar{z} + c$, so
\[
\pi(z') = z + c\mathbf{1} - (\bar{z} + c)\mathbf{1} = z - \bar{z}\mathbf{1} = \pi(z).
\]
Thus $\pi$ is constant on equivalence classes and descends to a map $\tilde{\pi} \colon \mathcal{L} \to \mathcal{H}$. For any $z \in \mathbb{R}^d$, $\mathbf{1}^\top \pi(z) = \sum_i (z_i - \bar{z}) = d\bar{z} - d\bar{z} = 0$, so $\pi(z) \in \mathcal{H}$. Now, for any $h \in \mathcal{H}$, we have $\bar{h} = 0$, so $\pi(h) = h - 0 \cdot \mathbf{1} = h$. Thus $\pi$ is surjective onto $\mathcal{H}$. Suppose $\pi(z) = \pi(z')$. Then $z - \bar{z}\mathbf{1} = z' - \bar{z'}\mathbf{1}$, which gives $z - z' = (\bar{z} - \bar{z'})\mathbf{1}$. Thus $z \sim_\ell z'$, so $\tilde{\pi}$ is injective. Since $\pi$ is linear and constant on equivalence classes, the induced map $\tilde{\pi}$ is linear. Thus, $\tilde{\pi} \colon \mathcal{L} \to \mathcal{H}$ is a linear isomorphism. Moreover, equipping $\mathcal{L}$ with the quotient metric $d_\mathcal{L}([z], [z']) = \inf_{c \in \mathbb{R}} \|z - z' + c\mathbf{1}\|$, the infimum is attained at $c = \bar{z'} - \bar{z}$, which gives $d_\mathcal{L}([z], [z']) = \|\pi(z) - \pi(z')\|_2$. Thus $\tilde{\pi}$ is an isometry.
\end{proof}

\subsection{Algorithm Details}

\paragraph{Time complexity and runtime of PolyILR.}
Algorithm~\ref{alg:polyilr} constructs $V$ in $O(d^2)$ time. For each internal node $u$ with $k_u$ children, building the Helmert matrix takes $O(k_u^2)$ time, Gram-Schmidt orthonormalization takes $O(k_u^3)$ time, and spreading to the $d$-dimensional vectors takes $O(n_u \cdot k_u)$ time where $n_u = \sum_r n_r$ is the number of leaves in the subtree rooted at $u$. Summing over all internal nodes, the total cost is dominated by the spreading step: $\sum_u n_u \cdot k_u \leq d \cdot \sum_u k_u = O(d^2)$ in the worst case (e.g., a star tree). For balanced trees, the complexity reduces to $O(d \log d)$. In practice, $V$ is computed once as a preprocessing step, so this cost is negligible compared to downstream tasks such as model training or statistical inference, which typically scale with sample size $N$ and involve iterative optimization. In all of our experiments, constructing $V$ using PolyILR took $<5$ seconds.

\paragraph{On Gram-Schmidt.}
The weighted Gram-Schmidt procedure orthonormalizes the columns of the Helmert matrix $H^{(u)}$ under the inner product $\langle h, h' \rangle_w = \sum_{r=1}^{k_u} h_r h'_r / n_r$. This is well-defined since $(S_u, \langle \cdot, \cdot \rangle_w)$ is a finite-dimensional Hilbert space. Explicitly, for $m = 1, \ldots, k_u - 1$:
\[
\tilde{H}^{(u)}_{\cdot, m} = H^{(u)}_{\cdot, m} - \sum_{j=1}^{m-1} \langle H^{(u)}_{\cdot, m}, \tilde{H}^{(u)}_{\cdot, j} \rangle_w \, \tilde{H}^{(u)}_{\cdot, j}, \quad \text{then normalize: } \tilde{H}^{(u)}_{\cdot, m} \leftarrow \frac{\tilde{H}^{(u)}_{\cdot, m}}{\|\tilde{H}^{(u)}_{\cdot, m}\|_w}.
\]
The weighting by $1/n_r$ accounts for clade sizes: larger clades contribute less per-component to the inner product, ensuring that the resulting ILR coordinates treat leaves uniformly regardless of tree imbalance.

\paragraph{On Child ordering conventions.}
The child ordering at each internal node affects which contrasts appear in which columns of $V$, but does not affect the subspace spanned or the geometric properties as mentioned before. Two natural conventions are: (i) \emph{Lexicographic:} Order children by the smallest leaf index in each subtree. This is deterministic given leaf labels. (ii) \emph{By clade size:} Order children by descending $n_r$. This places contrasts involving larger clades in earlier columns. As noted in Proposition~\ref{prop:recovery}, different child orderings yield bases related by orthogonal transformations within each node's coordinate block. For reproducibility, we used the  lexicographic ordering in all of our experiments.

\paragraph{Reduction to PhILR.}
When the tree $\mathcal{T}$ is strictly binary (every internal node has exactly $k_u = 2$ children), PolyILR reduces to PhILR \citep{silverman2017phylogenetic}. We verify this algebraically. For a binary node $u$ with child clades $C_u^{(1)}$ and $C_u^{(2)}$ of sizes $n_1$ and $n_2$, the Helmert matrix is $H^{(u)} = [1/\sqrt{2}, -1/\sqrt{2}]^\top$. Under the weighted inner product $\langle h, h' \rangle_w = h_1 h'_1 / n_1 + h_2 h'_2 / n_2$, the squared norm is
\[
\|H^{(u)}\|_w^2 = \frac{1/2}{n_1} + \frac{1/2}{n_2} = \frac{n_1 + n_2}{2n_1 n_2}.
\]
Applying Gram-Schmidt under $\langle \cdot, \cdot \rangle_w$ gives $\tilde{H}^{(u)} = [1/\sqrt{2}, -1/\sqrt{2}]^\top \cdot \sqrt{2n_1 n_2 / (n_1 + n_2)}$. After spreading via $v_i = \tilde{H}_r / n_r$:
\[
v^{(u)}_i = \begin{cases}
+\sqrt{\dfrac{n_2}{n_1(n_1 + n_2)}} & \text{if } i \in C_u^{(1)}, \\[6pt]
-\sqrt{\dfrac{n_1}{n_2(n_1 + n_2)}} & \text{if } i \in C_u^{(2)}, \\[6pt]
0 & \text{otherwise}.
\end{cases}
\]
This matches the PhILR balance formula exactly \citep{silverman2017phylogenetic}. Thus PolyILR is a strict generalization of PhILR to arbitrary tree structures, with the binary case recovering the original method. {\color{black} We note that PhILR also supports optional weighting schemes: taxon weights \citep{egozcue2016changing} that modify the simplex metric itself, and branch length weights that scale coordinates. These define a different but well-posed weighted geometry on the simplex \citep{egozcue2016changing}, rather than the standard Aitchison geometry. PolyILR is built on the standard (unweighted) Aitchison geometry to make the construction directly comparable to canonical ILR; integrating weighted-simplex variants is compatible in principle and left to future work.}


%% file: section-camera/appendix/additional_exp.tex
\section{Additional Experiments and Details}\label{appdx:exp}

\begin{figure}[t]
    \centering
    \resizebox{\textwidth}{!}{\input{figure/tree-inference}}
    \caption{\emph{Orthogonal subspace partitions.} The coordinates of $V$ can be grouped into orthogonal subspaces indexed by tree structure: by individual node, by depth level, or by subtree (one example). Each grouping enables inference at a different resolution.}
    \label{fig:tree-inference}
\end{figure}

\subsection{Microbiome Dataset and Taxonomy Construction Details}\label{appdx:micro-data}

\paragraph{Datasets.}
We use two microbiome datasets. \texttt{HMP} (Human Microbiome Project; 4,743 samples, 402 taxa) provides samples from 18 body sites with a curated NCBI-derived taxonomy. \texttt{cMD3} (curatedMetagenomicData v3; 20,238 samples from 86 studies, 2,047 taxa) aggregates shotgun metagenomic data across diverse cohorts. Disease labels denote any non-healthy diagnosis, pooled across conditions and body sites. Note that 748 \texttt{HMP} samples also appear in \texttt{cMD3}; we do not deduplicate, as our focus is methodological comparison. The \texttt{HMP} data were obtained from the \texttt{HMP16SData} package in R. Operational taxonomic unit (OTU) abundances and sample metadata were extracted, and samples were filtered to those with complete body site annotations. The taxonomic tree was constructed from the NCBI-derived lineage strings provided with each OTU, parsed from Kingdom through Genus, and converted to Newick format using the \texttt{ape} package. The \texttt{cMD3} metagenomic data were obtained from the \texttt{curatedMetagenomicData} package (v3.0) in R. We retrieved relative abundance data for all available studies, excluding \texttt{IaniroG\_2022}. Taxonomic abundance matrices and sample metadata were extracted from \texttt{TreeSummarizedExperiment} objects and merged using \texttt{mergeData}. Only samples present in both the abundance matrix and metadata were retained. Taxonomic lineages were parsed from strings of the form \texttt{k\_\_Kingdom|p\_\_Phylum|c\_\_Class|o\_\_Order|f\_\_Family|g\_\_Genus|s\_\_Species}, with rank prefixes removed. Taxon identifiers were generated via MD5 hashing (first 8 characters, prefixed \texttt{taxon\_}) for reproducibility.

\paragraph{Taxonomic Trees.}
For \texttt{HMP}, we use the NCBI-derived taxonomy provided with the dataset. For \texttt{cMD3}, the taxonomic tree was constructed from lineage strings using the \texttt{data.tree} package with a common root, converted to Newick format via \texttt{ape}, and assigned ultrametric branch lengths using Grafen's method. Tip labels were replaced with hash-based identifiers to match the abundance matrix. Because \texttt{cMD3} aggregates studies with heterogeneous taxonomic resolution, some internal nodes have children spanning multiple taxonomic ranks; we label such nodes by their lowest common rank (e.g., \emph{Bacteria (Kingdom)} or \emph{Mixed}).

\subsection{Single-Cell Dataset and Ontology Construction Details}\label{appdx:cell-data}

\paragraph{Single-Cell Dataset.}
We use \texttt{DISCO} (Database of Immune Single-Cell Omics), a curated atlas of human immune single-cell transcriptomes. Data were extracted via the \texttt{DISCOtoolkit} R package. We selected samples with $\geq$500 cells and randomly sampled up to 200 samples per condition. We extracted blood tissue (healthy, COVID-19, leukemia) and liver tissue (healthy, hepatocellular carcinoma). Cell type proportions were computed by counting cells per annotated type and normalizing to sum to one, following standard practice in single-cell compositional analysis~\citep{phipson2022propeller,buettner2021sccoda}. After processing: blood (199 healthy, 200 COVID-19, 199 leukemia; 62 cell types) and liver (200 healthy, 153 HCC; 99 cell types). For the main experiments, we use the leukemia and HCC tasks.

\paragraph{Cell Ontology Tree.}
The cell type hierarchy was obtained from DISCO's cell ontology API \citep{li2022disco}. For each tissue, we constructed a subtree by: (1) identifying cell types present in the data, (2) filtering to true leaves (types with no children in the data), and (3) tracing ancestor paths to the root. Unit branch lengths were assigned. The cell ontology contains extensive polytomies: blood has 14 nodes with $>$2 children (``T cell'' has 9 children); liver has similar structure with 99 leaves and 72 internal nodes.

\subsection{Software and Hyperparameters}\label{appdx:software}
Classification models were tuned via 5-fold cross-validation on training data. Grid search ranges and selected values are summarized in Table~\ref{tab:hyperparams}. Feature importance was computed via mean decrease in impurity (random forest), coefficient magnitude (logistic regression), or permutation importance (SVM).

\begin{table}[h]
\centering
\caption{Hyperparameter search grid and selected values.}
\label{tab:hyperparams}
\small
\begin{tabular}{llll}
\toprule
\textbf{Model} & \textbf{Parameter} & \textbf{Search Range} & \textbf{Used} \\
\midrule
Random Forest & \texttt{n\_estimators} & $\{100, 300, 500\}$ & 500 \\
              & \texttt{max\_depth} & $\{10, 20, \texttt{None}\}$ & 20 \\
              & \texttt{max\_features} & $\{\texttt{sqrt}, \texttt{log2}\}$ & \texttt{sqrt} \\
\midrule
Logistic Reg. & $C$ (inverse reg.) & $\{0.01, 0.1, 1, 10\}$ & 1 \\
              & penalty & $\{\ell_1, \ell_2\}$ & $\ell_2$ \\
\midrule
SVM (RBF)     & $C$ & $\{0.1, 1, 10\}$ & 1 \\
              & $\gamma$ & $\{\texttt{scale}, 0.01, 0.1\}$ & \texttt{scale} \\
\bottomrule
\end{tabular}
\end{table}

We compare PolyILR against: (i) \textbf{CLR} (centered log-ratio), which ignores tree structure; and (ii) \textbf{PhILR}~\citep{silverman2017phylogenetic}, which requires arbitrary binary refinement of polytomies. Both use identical zero handling.

{\color{black}
\subsection{Handling Zeros in PolyILR} \label{appdx:zeros}
The raw compositional data (e.g., microbiome counts, single-cell type counts) typically exhibit high sparsity, with zeros comprising 70\% or more of entries. Since log-ratio transformations are undefined at zero, zero replacement is required as preprocessing. There is no consensus on the optimal choice; common choices include 0.5, 1, or smaller values, depending on whether the raw data are counts or proportions~\citep{kaul2017analysis,hu2022locom}. Because our focus is on comparing representations rather than zero-handling strategies, we adopt simple per-dataset choices appropriate to each data scale as follows. \texttt{HMP} and \texttt{cMD3} are count data: \texttt{HMP} provides raw 16S counts, and \texttt{cMD3} (curatedMetagenomicData v3) is retrieved with \texttt{counts=TRUE}, which multiplies relative abundances by read depth. For both, we add a pseudocount of 1 prior to normalization, i.e., $\tilde{x}_i = (x_i + 1) / \sum_j (x_j + 1)$, which is commonly done at count scale. For \texttt{cMD3}, one study (\texttt{IaniroG\_2022}) was excluded because read depth metadata was unavailable. \emph{DISCO} provides cell-type proportions rather than counts, so we instead use $\epsilon = 10^{-10}$ applied additively before renormalization. We note that PolyILR partially mitigates the effect of zero replacement, because each coordinate is a contrast between \emph{geometric means over clades} rather than between individual taxa (Section~\ref{sec:construction}), but a rigorous study is for future work.
}


{\color{black}
\subsection{PolyILR's Robustness under Topological Noise}\label{appdx:nni}

As discussed in Section~\ref{sec:conclusion}, PolyILR assumes a fixed input tree $\mathcal{T}$, and its coordinates are defined relative to $\mathcal{T}$. This is reasonable because PolyILR is designed for \emph{curated scientific hierarchies} provided by domain experts (e.g., phylogenies, taxonomies, or ontologies) which can largely be trusted. In practice, however, some tree noise or error is unavoidable. One common form is \emph{topological noise}, e.g., small misestimations of branching order or leaf placement. Therefore, we provide a preliminary analysis of how the PolyILR basis and resulting representations respond to such local perturbations in the input tree, both empirically (as semantic stability of selected features) and structurally (as how much of the basis $V$ is preserved).

We introduce topological noise via random \emph{nearest-neighbor interchange} (NNI) operations on the NCBI taxonomy of the \texttt{HMP} dataset. Each NNI swaps two subtrees across an internal edge, locally rearranging the tree without changing leaf labels. We report perturbation strength as the absolute number of NNI applied randomly. We use the \texttt{HMP} body-subsite task (18 classes, $d=402$ taxa), matching the setting of Table~\ref{tab:repr-stability} (bottom).

\begin{table}[h]
\centering
\small
\begin{tabular}{cc}
\toprule
\#NNIs & PolyILR semantic top-10 stability \\
\midrule
0 & 0.73 $\pm$ 0.10 \\
1 & 0.71 $\pm$ 0.11 \\
2 & 0.68 $\pm$ 0.09 \\
3 & 0.65 $\pm$ 0.08 \\
\bottomrule
\end{tabular}
\vspace{3pt}
\caption{Semantic Jaccard similarity of top-10 PolyILR features under NNI perturbations on the \texttt{HMP} taxonomy (18 body subsites, 402 taxa). Mean $\pm$ std over 10 perturbed trees $\times$ 10 seeds.}
\vspace{-15pt}
\label{tab:nni-semantic}
\end{table}

\textbf{Semantic stability under NNI.}
For each NNI count, we perturb the original tree, recompute the PolyILR basis $V$, retrain the downstream RF model, and re-extract the top-10 important features. We then measure semantic Jaccard similarity of the top-10 features against the unperturbed baseline, averaged over 10 perturbed trees and 10 random seeds (Table~\ref{tab:nni-semantic}). Other setups mirror those used for Table~\ref{tab:repr-stability}.  At 0 NNIs (no noise), the result matches Table~\ref{tab:repr-stability} (bottom). As we introduce more NNIs, PolyILR degrades smoothly but \emph{remains substantially more stable than PhILR} even at 3 NNIs: PolyILR at 3 NNIs (0.65) is still well above PhILR at 0 NNIs (0.13, from Table~\ref{tab:repr-stability} bottom), which suffers from binarization-induced instability on top of any tree noise. This shows that PolyILR remains effective even at moderate noise levels.

\textbf{Locality of NNI effects on the basis $V$.}
The robustness above has a structural explanation: the PolyILR basis factors node by node. For each internal node $u$ with children $c_1, \ldots, c_{k_u}$, $V$ contains exactly $k_u - 1$ local basis vectors associated with $u$. Once child ordering is fixed, this local block depends only on the partition of descendant leaves induced by the children of $u$ and their subtree sizes $\{n_r\}$ (Section~\ref{sec:construction}). After spreading, each basis is supported only on the leaves descending from $u$, and no node's construction references any other node's local structure. Therefore, a local topological change has only a local effect on $V$. 

Consider an NNI at an internal edge $(p, c)$, where $p$ is the parent of $c$. At $p$ and $c$, the child partitions directly change, so the local coordinate blocks of $p$ and $c$ in $V$ also change; above $p$, the swap happens entirely within $p$'s subtree, so the descendant leaf sets under each ancestor's children, and their subtree sizes, are unchanged, so their local blocks in $V$ are \emph{exactly preserved}; below $c$, the internal structure of each subtree is untouched, so again their local blocks in $V$ are \emph{exactly preserved}. This way, an NNI perturbation modifies only the coordinate blocks attached to a small set of affected nodes; all others remain preserved.

\begin{table}[h]
\centering
\small
\begin{tabular}{cccc}
\toprule
\#NNIs & Cols at perturbed nodes & Cols changed & Cols preserved (\%) \\
\midrule
1 & 18.8 & 8.4  & 392.6 / 401 (97.9\%) \\
2 & 26.6 & 13.2 & 387.8 / 401 (96.7\%) \\
3 & 36.6 & 18.6 & 382.4 / 401 (95.4\%) \\
\bottomrule
\end{tabular}
\vspace{3pt}
\caption{NNI effects on $V$ (\texttt{HMP}, 402 taxa, 401 coords; mean over 5 seeds). We report the number of cumulative NNIs, the numbers of columns at perturbed nodes, of columns that changed, and of columns that are preserved.}
\vspace{-15pt}
\label{tab:nni-locality}
\end{table}

We verify this empirically on the \texttt{HMP} taxonomy (402 taxa, 401 PolyILR coordinates) above. For each NNI count, we apply $\{1, 2, 3\}$ cumulative random NNI moves and compare $V$ before and after, averaged over 5 seeds. In Table~\ref{tab:nni-locality}, we observe that the number of changed columns never exceeds the number of columns at perturbed nodes, and most of $V$ remains exactly preserved. This empirically supports the structural argument and helps explain the mild degradation seen in Table~\ref{tab:nni-semantic}.
}

\subsection{Extended Classification Results}\label{appdx:ml}

We provide a preliminary empirical illustration of the geometric connection described in Section~\ref{sec:ml}. Using \texttt{CIFAR-100} with its known 20-superclass hierarchy and a ResNet-34 trained to 80.4\% test accuracy, we ask two questions: (i) Does the true semantic hierarchy capture structure in the model's error distribution that arbitrary hierarchies do not? (ii) If so, how does this structure develop during training?

\paragraph{Setup and discussion.} The \texttt{CIFAR-100} hierarchy has 100 fine classes grouped into 20 superclasses, yielding a two-level tree with 19 depth-0 coordinates (superclass contrasts) and 80 depth-1 coordinates (within-superclass contrasts). For each test sample, we compute the gradient $\nabla_{\mathbf{a}} \ell = \mathbf{V}^\top (\mathbf{p} - \mathbf{e}_y)$ in PolyILR coordinates and measure how gradient norm distributes across depths. We summarize this via \emph{gradient entropy}: $H = -\sum_d p_d \log p_d$, where $p_d$ is the fraction of total gradient norm at depth $d$. Lower entropy indicates concentration at specific depths, which we view as \emph{some structure} emerging.

\emph{(i) Known tree vs.\ shuffled trees.} We compare the true hierarchy against 1000 random trees constructed by permuting leaf assignments while preserving structure. For the trained model, the true tree yields significantly lower entropy (0.576 vs.\ $0.632 \pm 0.001$; $p < 0.001$), indicating that errors concentrate at specific depths under the true hierarchy but not under arbitrary ones. For a randomly initialized model, no difference exists ($p = 0.73$). This confirms that this structure emerges from learning rather than architectural bias. This aligns with recent work showing that flat classifiers implicitly encode semantic hierarchies recoverable from logits alone~\citep{palumbo2025from}.

\emph{(ii): Learning dynamics.} We track gradient distribution across 200 training epochs. The fraction at depth 1 (within-superclass) increases from 0.68 to 0.74 over training, consistent with the model progressively resolving coarse superclass distinctions before fine-grained class boundaries. This coarse-to-fine pattern---recently termed \emph{hypernym bias}~\citep{malashin2025hypernym}---reflects curriculum-like learning where easier (coarser) distinctions are learned first. 

\paragraph{Implication.} These results suggest that PolyILR coordinates may provide a meaningful lens for analyzing softmax classifiers and steering model training when class hierarchies are available. The gradient decomposition partly reveals where in the hierarchy a model's errors concentrate and how this evolves during training. We view this as opening a research direction rather than a complete empirical study, which we leave to future work.

\input{figure/cifar100-appdx}

\subsection{Full Experimental Results}\label{appdx:exp-results}

This appendix provides complete experimental results; biological interpretation is in Section~\ref{sec:main-exp}. The patterns are consistent with those discussed in the main text: PolyILR recovers interpretable taxonomic and ontological contrasts across all tasks. Deeper biological validation—including wet-lab experiments and large-scale cohort studies—is beyond the scope of this methodological work. {\color{black}Throughout these tables, we report point estimates alongside variability (e.g., 95\% confidence intervals for accuracy and AUROC, std for importance, and rank range across CV folds), which are small in our observations and demonstrate robustness of the reported numbers.}

Table~\ref{tab:app-representation-all} reports classification accuracy {\color{black}and AUROC with 95\% CIs} across all tasks: body sites/subsites (\texttt{HMP}), westernization/age/health/body site (\texttt{cMD3}), and COVID-19(blood)/leukemia(blood)/HCC(liver) (\texttt{DISCO}).  {\color{black}SVM and LR are identical across CLR, PhILR, and PolyILR within each task, as expected from isometry; RF varies modestly.} Table~\ref{tab:app-stability-all} reports feature stability (Jaccard similarity of top-$K$ features across CV folds); PolyILR remains stable while PhILR suffers from arbitrary binarization for both index and semantic stabilities. Tables~\ref{tab:app-contrasts-hmp}--\ref{tab:app-contrasts-disco} list the top-10 PolyILR contrasts by RF importance for \texttt{HMP} (microbiome, body sites), \texttt{cMD3} (microbiome, health/lifestyle), and \texttt{DISCO} (single-cell, disease){\color{black}, with mean importance, std, and rank range across 5-fold CV}. Each contrast is a log-ratio of geometric means between two groups at a tree node. Tables~\ref{tab:app-tree-hmp}--\ref{tab:app-tree-disco} report tree-level aggregation results, with importance aggregated by depth (cumulative), subtree, node, and taxon/cell-type levels. Accuracy columns show predictive performance using only features at that level.

\input{table/appdx_all_updated}

%% file: figure/tree-inference.tex


\begin{tikzpicture}[
    scale=1.0,
    font=\normalsize,
    level distance=1.2cm,
    level 1/.style={sibling distance=2.4cm},
    level 2/.style={sibling distance=0.95cm},
    level 3/.style={sibling distance=0.95cm},
    every node/.style={font=\normalsize},
    leaf/.style={circle, draw=blue!50!black, fill=blue!10, minimum size=7mm, line width=0.7pt,
                 drop shadow={opacity=0.3, shadow xshift=1.5pt, shadow yshift=-1.5pt}},
    internal/.style={circle, draw=gray!70!black, fill=gray!10, minimum size=8mm, line width=0.7pt,
                     drop shadow={opacity=0.3, shadow xshift=1.5pt, shadow yshift=-1.5pt}},
    edge from parent/.style={draw, line width=0.8pt, gray!60}
]

\begin{scope}[xshift=0cm]
    \node[font=\bfseries\Large] at (0, 1.2) {By Node};

    \node[internal, fill=red!15, draw=red!60!black] (u1) at (0, 0) {$u_1$}
        child {node[leaf] (l1) {$1$}}
        child {node[internal, fill=green!15, draw=green!60!black] (u2) {$u_2$}
            child {node[leaf] (l2) {$2$}}
            child {node[leaf] (l3) {$3$}}
        }
        child {node[internal, fill=orange!15, draw=orange!60!black] (u3) {$u_3$}
            child {node[internal, fill=cyan!15, draw=cyan!60!black] (u4) {$u_4$}
                child {node[leaf] (l4) {$4$}}
                child {node[leaf] (l5) {$5$}}
            }
            child {node[leaf] (l6) {$6$}}
            child {node[leaf] (l7) {$7$}}
        };

    \begin{scope}[yshift=-5.0cm]
        \draw[decorate, decoration={brace, amplitude=4pt}, line width=0.6pt, red!60!black] 
            (-2.7, 0.15) -- (-0.9, 0.15) node[midway, above=4pt, font=\small, text=red!60!black] {$u_1$};
        \draw[decorate, decoration={brace, amplitude=4pt}, line width=0.6pt, green!60!black] 
            (-0.9, 0.15) -- (0.0, 0.15) node[midway, above=4pt, font=\small, text=green!60!black] {$u_2$};
        \draw[decorate, decoration={brace, amplitude=4pt}, line width=0.6pt, orange!60!black] 
            (0.0, 0.15) -- (1.8, 0.15) node[midway, above=4pt, font=\small, text=orange!60!black] {$u_3$};
        \draw[decorate, decoration={brace, amplitude=4pt}, line width=0.6pt, cyan!60!black] 
            (1.8, 0.15) -- (2.7, 0.15) node[midway, above=4pt, font=\small, text=cyan!60!black] {$u_4$};
        
        \draw[fill=white, drop shadow={opacity=0.1}] (-2.7, 0) rectangle (2.7, -0.7);
        
        \fill[red!15] (-2.7, 0) rectangle (-0.9, -0.7);
        \fill[green!15] (-0.9, 0) rectangle (0.0, -0.7);
        \fill[orange!15] (0.0, 0) rectangle (1.8, -0.7);
        \fill[cyan!15] (1.8, 0) rectangle (2.7, -0.7);
        
        \draw[gray!30] (-1.8, 0) -- (-1.8, -0.7);
        \draw[gray!30] (-0.9, 0) -- (-0.9, -0.7);
        \draw[gray!30] (0.0, 0) -- (0.0, -0.7);
        \draw[gray!30] (0.9, 0) -- (0.9, -0.7);
        \draw[gray!30] (1.8, 0) -- (1.8, -0.7);
        
        \draw[thick] (-2.7, 0) rectangle (2.7, -0.7);
        
        \node[font=\large, anchor=north] at (0, -0.9) {$V \in \mathbb{R}^{7 \times 6}$};
    \end{scope}
\end{scope}

\draw[gray!20, line width=1.5pt] (5.0, 1.4) -- (5.0, -6.6);

\begin{scope}[xshift=9.2cm]
    \node[font=\bfseries\Large] at (0, 1.2) {By Depth};

    \node[internal, fill=purple!15, draw=purple!60!black] (u1) at (0, 0) {$u_1$}
        child {node[leaf] (l1) {$1$}}
        child {node[internal, fill=teal!15, draw=teal!60!black] (u2) {$u_2$}
            child {node[leaf] (l2) {$2$}}
            child {node[leaf] (l3) {$3$}}
        }
        child {node[internal, fill=teal!15, draw=teal!60!black] (u3) {$u_3$}
            child {node[internal, fill=yellow!30, draw=yellow!70!black] (u4) {$u_4$}
                child {node[leaf] (l4) {$4$}}
                child {node[leaf] (l5) {$5$}}
            }
            child {node[leaf] (l6) {$6$}}
            child {node[leaf] (l7) {$7$}}
        };

    \begin{scope}[yshift=-5.0cm]
        \draw[decorate, decoration={brace, amplitude=4pt}, line width=0.6pt, purple!60!black] 
            (-2.7, 0.15) -- (-0.9, 0.15) node[midway, above=4pt, font=\small, text=purple!60!black] {depth 0};
        \draw[decorate, decoration={brace, amplitude=4pt}, line width=0.6pt, teal!60!black] 
            (-0.9, 0.15) -- (1.8, 0.15) node[midway, above=4pt, font=\small, text=teal!60!black] {depth 1};
        \draw[decorate, decoration={brace, amplitude=4pt}, line width=0.6pt, yellow!70!black] 
            (1.8, 0.15) -- (2.7, 0.15) node[midway, above=4pt, font=\small, text=yellow!70!black] {depth 2};
        
        \draw[fill=white, drop shadow={opacity=0.1}] (-2.7, 0) rectangle (2.7, -0.7);
        
        \fill[purple!15] (-2.7, 0) rectangle (-0.9, -0.7);
        \fill[teal!15] (-0.9, 0) rectangle (1.8, -0.7);
        \fill[yellow!30] (1.8, 0) rectangle (2.7, -0.7);
        
        \draw[gray!30] (-1.8, 0) -- (-1.8, -0.7);
        \draw[gray!30] (-0.9, 0) -- (-0.9, -0.7);
        \draw[gray!30] (0.0, 0) -- (0.0, -0.7);
        \draw[gray!30] (0.9, 0) -- (0.9, -0.7);
        \draw[gray!30] (1.8, 0) -- (1.8, -0.7);
        
        \draw[thick] (-2.7, 0) rectangle (2.7, -0.7);
        
        \node[font=\large, anchor=north] at (0, -0.9) {$V \in \mathbb{R}^{7 \times 6}$};
    \end{scope}
\end{scope}

\draw[gray!20, line width=1.5pt] (14.2, 1.4) -- (14.2, -6.6);

\begin{scope}[xshift=18.4cm]
    \node[font=\bfseries\Large] at (0, 1.2) {By Subtree};

    \node[internal, fill=red!15, draw=red!60!black] (u1) at (0, 0) {$u_1$}
        child {node[leaf] (l1) {$1$}}
        child {node[internal, fill=green!15, draw=green!60!black] (u2) {$u_2$}
            child {node[leaf] (l2) {$2$}}
            child {node[leaf] (l3) {$3$}}
        }
        child {node[internal, fill=orange!15, draw=orange!60!black] (u3) {$u_3$}
            child {node[internal, fill=orange!20, draw=orange!60!black] (u4) {$u_4$}
                child {node[leaf] (l4) {$4$}}
                child {node[leaf] (l5) {$5$}}
            }
            child {node[leaf] (l6) {$6$}}
            child {node[leaf] (l7) {$7$}}
        };

    \begin{scope}[yshift=-5.0cm]
        \draw[decorate, decoration={brace, amplitude=4pt}, line width=0.6pt, red!60!black] 
            (-2.7, 0.15) -- (-0.9, 0.15) node[midway, above=4pt, font=\small, text=red!60!black] {$u_1$};
        \draw[decorate, decoration={brace, amplitude=4pt}, line width=0.6pt, green!60!black] 
            (-0.9, 0.15) -- (0.0, 0.15) node[midway, above=4pt, font=\small, text=green!60!black] {$u_2$ subtree};
        \draw[decorate, decoration={brace, amplitude=4pt}, line width=0.6pt, orange!60!black] 
            (0.0, 0.15) -- (2.7, 0.15) node[midway, above=4pt, font=\small, text=orange!60!black] {$u_3$ subtree};
        
        \draw[fill=white, drop shadow={opacity=0.1}] (-2.7, 0) rectangle (2.7, -0.7);
        
        \fill[red!15] (-2.7, 0) rectangle (-0.9, -0.7);
        \fill[green!15] (-0.9, 0) rectangle (0.0, -0.7);
        \fill[orange!15] (0.0, 0) rectangle (2.7, -0.7);
        
        \draw[gray!30] (-1.8, 0) -- (-1.8, -0.7);
        \draw[gray!30] (-0.9, 0) -- (-0.9, -0.7);
        \draw[gray!30] (0.0, 0) -- (0.0, -0.7);
        \draw[gray!30] (0.9, 0) -- (0.9, -0.7);
        \draw[gray!30] (1.8, 0) -- (1.8, -0.7);
        
        \draw[thick] (-2.7, 0) rectangle (2.7, -0.7);
        
        \node[font=\large, anchor=north] at (0, -0.9) {$V \in \mathbb{R}^{7 \times 6}$};
    \end{scope}
\end{scope}

\end{tikzpicture}

%% file: figure/cifar100-appdx.tex
\begin{figure}[h]
\centering
\includegraphics[width=0.7\linewidth]{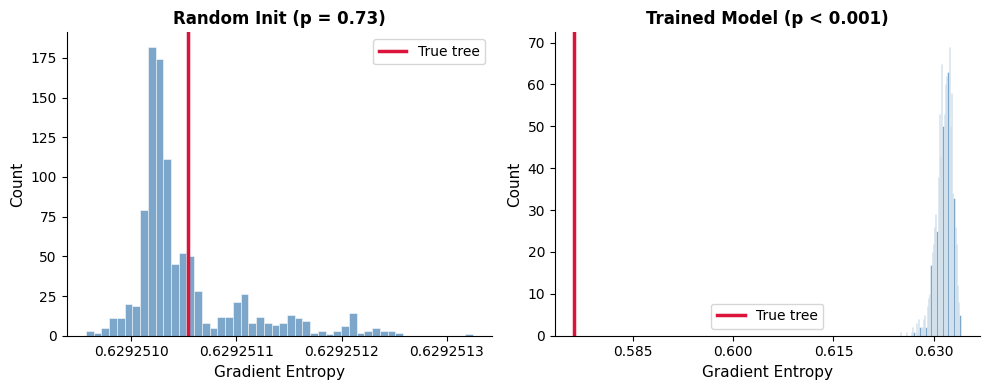}
\caption{Learning dynamics on \texttt{CIFAR-100}. \textbf{Left:} Gradient norm distribution by tree depth across training. Depth~0 (superclass contrasts) decreases while depth~1 (within-superclass) increases, indicating coarse-to-fine learning. \textbf{Right:} Gradient entropy and test accuracy over training epochs.}
\label{fig:cifar-dynamics}
\end{figure}

%% file: table/appdx_all_updated.tex


\begin{table*}[h]
\centering
\scriptsize
\setlength{\tabcolsep}{3pt}
\renewcommand{\arraystretch}{1.05}
\caption{\emph{Classification accuracy and AUROC with 95\% confidence intervals} (all tasks). Each cell reports RF / SVM / LR. SVM and LR are identical across CLR, PhILR, and PolyILR within each task, as expected from isometry; RF varies modestly. Upper CI bounds clipped at 1.00.}
\label{tab:app-representation-all}
\resizebox{0.95\textwidth}{!}{%
\begin{tabular}{clccc}
\toprule
& \multirow{2}{*}{\textbf{Task}} & \textbf{CLR} & \textbf{PhILR} & \textbf{PolyILR} \\
\cmidrule(lr){3-3} \cmidrule(lr){4-4} \cmidrule(lr){5-5}
& & RF / SVM / LR & RF / SVM / LR & RF / SVM / LR \\
\midrule
\multicolumn{5}{l}{\emph{Acc (\%) (95\% CI)}} \\
\midrule
\multirow{2}{*}{\rotatebox{90}{\texttt{HMP}}}
& body sites (5)        & .956 (.952--.959) / .971 (.968--.974) / .962 (.956--.968) & .961 (.958--.965) / .971 (.968--.974) / .962 (.955--.968) & .963 (.961--.966) / .971 (.968--.974) / .962 (.955--.968) \\
& body subsites (18)    & .597 (.584--.610) / .672 (.660--.685) / .646 (.637--.655) & .608 (.588--.627) / .672 (.660--.685) / .646 (.638--.654) & .622 (.608--.635) / .672 (.660--.685) / .646 (.637--.656) \\
\midrule
\multirow{5}{*}{\rotatebox{90}{\texttt{cMD3}}}
& westernized (2)       & .972 (.967--.977) / .979 (.972--.986) / .968 (.956--.979) & .966 (.959--.972) / .979 (.972--.986) / .967 (.955--.979) & .967 (.962--.972) / .979 (.972--.986) / .967 (.956--.979) \\
& age category (5)      & .785 (.755--.815) / .814 (.796--.833) / .739 (.712--.766) & .797 (.782--.813) / .814 (.796--.833) / .738 (.712--.765) & .797 (.784--.811) / .814 (.796--.833) / .738 (.711--.765) \\
& healthy/disease (2)   & .692 (.665--.719) / .701 (.681--.721) / .664 (.638--.690) & .686 (.651--.721) / .702 (.682--.721) / .664 (.638--.690) & .681 (.648--.714) / .702 (.682--.721) / .664 (.638--.690) \\
& disease stool (2)     & .688 (.668--.707) / .691 (.662--.721) / .662 (.623--.701) & .677 (.643--.710) / .691 (.662--.721) / .663 (.624--.701) & .674 (.640--.707) / .691 (.662--.721) / .663 (.624--.701) \\
& body site (6)         & .982 (.976--.988) / .989 (.984--.994) / .987 (.983--.991) & .987 (.980--.993) / .989 (.984--.994) / .987 (.983--.991) & .987 (.982--.992) / .989 (.984--.994) / .987 (.983--.991) \\
\midrule
\multirow{3}{*}{\rotatebox{90}{\scalebox{0.77}{\texttt{DISCO}}}}
& COVID-19 (2)          & .747 (.661--.832) / .777 (.688--.866) / .737 (.679--.794) & .762 (.668--.856) / .777 (.688--.866) / .737 (.679--.794) & .777 (.688--.865) / .777 (.688--.866) / .739 (.683--.796) \\
& leukemia (2)          & .925 (.883--.967) / .932 (.893--.971) / .927 (.880--.974) & .940 (.900--.980) / .932 (.893--.971) / .927 (.880--.974) & .935 (.891--.978) / .932 (.893--.971) / .927 (.880--.974) \\
& HCC (2)               & .921 (.807--1.00) / .927 (.835--1.00) / .944 (.863--1.00) & .910 (.794--1.00) / .927 (.835--1.00) / .944 (.863--1.00) & .910 (.800--1.00) / .927 (.835--1.00) / .944 (.863--1.00) \\
\midrule
\multicolumn{5}{l}{\emph{AUROC (95\% CI)}} \\
\midrule
\multirow{2}{*}{\rotatebox{90}{\texttt{HMP}}}
& body sites (5)        & .987 (.981--.992) / .995 (.994--.996) / .994 (.993--.996) & .992 (.991--.994) / .995 (.994--.996) / .994 (.993--.996) & .992 (.990--.994) / .995 (.994--.996) / .994 (.993--.996) \\
& body subsites (18)    & .957 (.954--.960) / .974 (.973--.975) / .967 (.966--.969) & .965 (.962--.967) / .974 (.973--.975) / .967 (.966--.969) & .966 (.964--.969) / .974 (.973--.975) / .967 (.966--.969) \\
\midrule
\multirow{5}{*}{\rotatebox{90}{\texttt{cMD3}}}
& westernized (2)       & .975 (.965--.986) / .978 (.958--.999) / .967 (.940--.995) & .966 (.951--.980) / .978 (.958--.999) / .966 (.938--.994) & .966 (.951--.982) / .978 (.958--.999) / .967 (.939--.995) \\
& age category (5)      & .836 (.802--.870) / .867 (.838--.895) / .833 (.811--.855) & .837 (.804--.871) / .867 (.838--.895) / .833 (.811--.855) & .843 (.813--.873) / .867 (.838--.895) / .833 (.811--.855) \\
& healthy/disease (2)   & .662 (.634--.691) / .690 (.665--.715) / .629 (.598--.660) & .637 (.607--.668) / .690 (.665--.715) / .629 (.598--.660) & .631 (.596--.667) / .690 (.665--.715) / .629 (.598--.660) \\
& disease stool (2)     & .663 (.630--.696) / .683 (.645--.722) / .633 (.587--.680) & .634 (.590--.678) / .683 (.645--.722) / .633 (.587--.680) & .632 (.586--.677) / .683 (.645--.722) / .633 (.587--.680) \\
& body site (6)         & .945 (.907--.982) / .994 (.991--.997) / .996 (.993--.999) & .998 (.996--1.00) / .994 (.992--.997) / .996 (.993--.999) & .981 (.949--1.00) / .994 (.992--.997) / .996 (.993--.999) \\
\midrule
\multirow{3}{*}{\rotatebox{90}{\scalebox{0.77}{\texttt{DISCO}}}}
& COVID-19 (2)          & .808 (.699--.918) / .897 (.818--.977) / .847 (.784--.910) & .875 (.802--.948) / .897 (.818--.977) / .847 (.784--.911) & .882 (.806--.957) / .897 (.817--.977) / .848 (.785--.911) \\
& leukemia (2)          & .968 (.946--.990) / .984 (.969--.999) / .982 (.963--1.00) & .984 (.964--1.00) / .984 (.969--.999) / .982 (.963--1.00) & .984 (.966--1.00) / .984 (.969--.999) / .982 (.963--1.00) \\
& HCC (2)               & .921 (.794--1.00) / .996 (.990--1.00) / .998 (.996--1.00) & .984 (.960--1.00) / .996 (.990--1.00) / .998 (.996--1.00) & .974 (.935--1.00) / .996 (.990--1.00) / .998 (.996--1.00) \\
\bottomrule
\end{tabular}
}
\end{table*}


\begin{table}[h]
\centering
\scriptsize
\setlength{\tabcolsep}{3.6pt}
\renewcommand{\arraystretch}{1.0}
\caption{\emph{Feature stability}: Jaccard similarity of top-$K$ features across CV folds (all tasks). PolyILR stable; PhILR (index/semantic) unstable from arbitrary binarization.}
\label{tab:app-stability-all}
\begin{tabular}{clc@{\hskip 1.7pt}c@{\hskip 1.7pt}ccccccc}
\toprule
& \multirow{2}{*}{\textbf{Task}} & \multicolumn{3}{c}{\textbf{PolyILR}} & \multicolumn{3}{c}{\textbf{PhILR (index / semantic)}} \\
\cmidrule(lr){3-5} \cmidrule(lr){6-8}
& & $K$=5 & $K$=10 & $K$=50 & $K$=5 & $K$=10 & $K$=50 \\
\midrule
\multirow{2}{*}{\rotatebox{90}{\texttt{HMP}}}
& body sites (5)        & .66 & .65 & .84 & .01 / .08 & .01 / .06 & .07 / .05 \\
& body subsites (18)    & .71 & .72 & .88 & .01 / .22 & .02 / .13 & .07 / .04 \\
\midrule
\multirow{5}{*}{\rotatebox{90}{\texttt{cMD3}}}
& westernized (2)       & .43 & .69 & .81 & .00 / .01 & .00 / .01 & .01 / .02 \\
& age category (5)      & .73 & .58 & .76 & .12 / .04 & .09 / .03 & .03 / .03 \\
& healthy/disease (2)   & .56 & .86 & .80 & .00 / .02 & .00 / .03 & .02 / .02 \\
& disease stool (2)     & .53 & .67 & .80 & .00 / .02 & .00 / .03 & .02 / .03 \\
& body site (6)         & .33 & .32 & .62 & .00 / .00 & .00 / .00 & .01 / .01 \\
\midrule
\multirow{3}{*}{\rotatebox{90}{\scalebox{0.77}{\texttt{DISCO}}}}
& COVID-19 (2)          & 1.0 & .87 & .99 & .06 / .07 & .12 / .11 & .73 / .12 \\
& leukemia (2)          & .75 & .81 & .92 & .05 / .09 & .12 / .16 & .73 / .12 \\
& HCC (2)               & .77 & .78 & .84 & .03 / .07 & .06 / .09 & .35 / .11 \\
\bottomrule
\end{tabular}
\end{table}


\begin{table*}[h]
\centering
\scriptsize
\setlength{\tabcolsep}{3pt}
\renewcommand{\arraystretch}{1.0}
\caption{\emph{Top-10 contrasts} by RF importance (\%) with mean $\pm$ std and rank range across 5-fold CV: \texttt{HMP}.}
\label{tab:app-contrasts-hmp}
\begin{tabular}{clp{8.5cm}cc}
\toprule
& \textbf{Rk} & \textbf{Contrast} & \textbf{Imp.\ (mean $\pm$ std)} & \textbf{Rank range} \\
\midrule
\multirow{10}{*}{\rotatebox{90}{\parbox{1.8cm}{\centering body sites\\(5 classes)}}}
& 1  & Streptococcaceae vs Lactobacillus + Leuconostocaceae                                  & 3.94 $\pm$ 0.06 & 1     \\
& 2  & Lactococcus vs Streptococcus                                                          & 3.42 $\pm$ 0.16 & 2--3   \\
& 3  & Pseudomonadales vs Cardiobacteriaceae + Vibrionaceae + Legionellales + ...            & 3.40 $\pm$ 0.17 & 2--3   \\
& 4  & Bacillales vs Gemella + Exiguobacterium + Turicibacter + ...                          & 2.95 $\pm$ 0.07 & 4--5   \\
& 5  & Pasteurella vs Haemophilus + Actinobacillus + Aggregatibacter                         & 2.85 $\pm$ 0.08 & 4--5   \\
& 6  & Veillonellaceae vs Dehalobacterium + Fusibacter + Peptococcus + ...                   & 2.61 $\pm$ 0.06 & 6     \\
& 7  & Pasteurellaceae vs Cardiobacteriaceae + Vibrionaceae + Legionellales + ...            & 2.37 $\pm$ 0.06 & 7--9   \\
& 8  & Leuconostocaceae vs Lactobacillus                                                     & 2.32 $\pm$ 0.14 & 7--9   \\
& 9  & Actinomycetaceae vs Streptomyces + Microbispora + Brevibacterium + ...                & 2.29 $\pm$ 0.08 & 7--9   \\
& 10 & Propionibacteriaceae vs Streptomyces + Microbispora + Brevibacterium + ...            & 1.81 $\pm$ 0.17 & 10--17 \\
\midrule
\multirow{10}{*}{\rotatebox{90}{\parbox{1.8cm}{\centering body subsites\\(18 classes)}}}
& 1  & Oribacterium vs Blautia + Coprococcus + Eubacterium + ...                             & 1.22 $\pm$ 0.03 & 1     \\
& 2  & Clostridiales vs Bacilli                                                              & 1.10 $\pm$ 0.07 & 2--4   \\
& 3  & Streptococcaceae vs Lactobacillus + Leuconostocaceae                                  & 1.08 $\pm$ 0.04 & 2--5   \\
& 4  & Lactococcus vs Streptococcus                                                          & 1.03 $\pm$ 0.03 & 3--5   \\
& 5  & Corynebacterium vs Streptomyces + Microbispora + Brevibacterium + ...                 & 1.02 $\pm$ 0.03 & 3--5   \\
& 6  & Bacillales vs Gemella + Exiguobacterium + Turicibacter + ...                          & 0.92 $\pm$ 0.02 & 6--8   \\
& 7  & Propionibacteriaceae vs Streptomyces + Microbispora + Brevibacterium + ...            & 0.91 $\pm$ 0.01 & 6--8   \\
& 8  & Pseudonocardiaceae vs Streptomyces + Microbispora + Brevibacterium + ...              & 0.89 $\pm$ 0.05 & 6--12  \\
& 9  & Ruminococcaceae vs Dehalobacterium + Fusibacter + Peptococcus + ...                   & 0.88 $\pm$ 0.01 & 8--11  \\
& 10 & Propionibacterium vs Tessaracoccus                                                    & 0.85 $\pm$ 0.04 & 7--12  \\
\bottomrule
\end{tabular}
\end{table*}


\begin{table*}[h]
\centering
\scriptsize
\setlength{\tabcolsep}{3pt}
\renewcommand{\arraystretch}{1.0}
\caption{\emph{Top-10 contrasts} by RF importance (\%) with mean $\pm$ std and rank range across 5-fold CV: \texttt{cMD3}.}
\label{tab:app-contrasts-cmd3}
\begin{tabular}{clp{8.5cm}cc}
\toprule
& \textbf{Rk} & \textbf{Contrast} & \textbf{Imp.\ (mean $\pm$ std)} & \textbf{Rank range} \\
\midrule
\multirow{10}{*}{\rotatebox{90}{\parbox{1.8cm}{\centering westernized\\(2 classes)}}}
& 1  & Prevotella vs Alistipes + Anaerotruncus + Bacteroides + ...                       & 1.77 $\pm$ 0.12 & 1     \\
& 2  & Murimonas vs Alistipes + Anaerotruncus + Bacteroides + ...                        & 1.49 $\pm$ 0.07 & 2--4   \\
& 3  & Prevotella vs Bacteroides + Alistipes                                             & 1.45 $\pm$ 0.03 & 2--5   \\
& 4  & Lactobacillus vs Alistipes + Anaerotruncus + Bacteroides + ...                    & 1.38 $\pm$ 0.04 & 4--6   \\
& 5  & Bacteroides vs Alistipes + Anaerotruncus + Bacteroides + ...                      & 1.36 $\pm$ 0.11 & 3--8   \\
& 6  & Dialister vs Alistipes + Anaerotruncus + Bacteroides + ...                        & 1.32 $\pm$ 0.15 & 2--13  \\
& 7  & Tissierellia\_unclassified vs Alistipes + Anaerotruncus + Bacteroides + ...        & 1.31 $\pm$ 0.09 & 5--10  \\
& 8  & Peptoniphilus vs Alistipes + Anaerotruncus + Bacteroides + ...                    & 1.28 $\pm$ 0.05 & 5--10  \\
& 9  & Megasphaera vs Alistipes + Anaerotruncus + Bacteroides + ...                      & 1.23 $\pm$ 0.04 & 6--10  \\
& 10 & Prevotella vs Citrobacter + Bacteroides + Leuconostoc + ...                       & 1.18 $\pm$ 0.06 & 7--12  \\
\midrule
\multirow{10}{*}{\rotatebox{90}{\parbox{1.8cm}{\centering age category\\(5 classes)}}}
& 1  & Mixed vs Phascolarctobacterium + Tychonema + Sediminibacterium + ...              & 1.32 $\pm$ 0.04 & 1--2   \\
& 2  & Bacteria vs Bacteria + Bacteria + Bacteria + ...                                  & 1.24 $\pm$ 0.07 & 1--3   \\
& 3  & Mixed vs Bacteria + Mixed                                                         & 1.21 $\pm$ 0.07 & 2--3   \\
& 4  & Mixed vs Phascolarctobacterium + Tychonema + Sediminibacterium + ...              & 0.90 $\pm$ 0.03 & 4     \\
& 5  & Holdemanella vs Faecalibacterium                                                  & 0.75 $\pm$ 0.04 & 5--10  \\
& 6  & Enterococcus vs Blautia                                                           & 0.73 $\pm$ 0.06 & 5--9   \\
& 7  & Mixed vs Dialister + Bacteria + Mixed + ...                                       & 0.69 $\pm$ 0.03 & 6--8   \\
& 8  & Mixed vs Mixed                                                                    & 0.69 $\pm$ 0.03 & 5--9   \\
& 9  & Bacteria vs Dialister + Bacteria + Mixed + ...                                    & 0.67 $\pm$ 0.05 & 7--14  \\
& 10 & Mixed vs Bacteria + Bacteria + Bacteria + ...                                     & 0.66 $\pm$ 0.04 & 6--11  \\
\midrule
\multirow{10}{*}{\rotatebox{90}{\parbox{1.8cm}{\centering healthy vs\\disease (2)}}}
& 1  & Lachnoclostridium vs Bacteroides + Turicibacter + Ruminococcus + ...              & 0.55 $\pm$ 0.06 & 1     \\
& 2  & Bifidobacterium vs Blautia + Enterococcus                                         & 0.38 $\pm$ 0.04 & 2--5   \\
& 3  & Flavonifractor vs Bifidobacterium + Actinomyces                                   & 0.35 $\pm$ 0.02 & 2--6   \\
& 4  & Fusicatenibacter vs Gemmiger                                                      & 0.34 $\pm$ 0.02 & 2--6   \\
& 5  & Enterococcus vs Blautia                                                           & 0.34 $\pm$ 0.02 & 3--6   \\
& 6  & Lachnoclostridium vs Parabacteroides + Bacteroides + Barnesiella + ...            & 0.33 $\pm$ 0.02 & 4--8   \\
& 7  & Coprococcus vs Ruthenibacterium + Coprobacter + Oscillibacter + ...               & 0.30 $\pm$ 0.02 & 3--9   \\
& 8  & Actinomyces vs Bifidobacterium + Actinomyces + Flavonifractor                     & 0.29 $\pm$ 0.02 & 6--10  \\
& 9  & Streptococcus vs Enterococcus + Stenotrophomonas + Clostridioides                 & 0.28 $\pm$ 0.02 & 8--10  \\
& 10 & Gemella vs Aeriscardovia + Enterococcus + Dickeya + ...                           & 0.27 $\pm$ 0.02 & 7--16  \\
\midrule
\multirow{10}{*}{\rotatebox{90}{\parbox{1.8cm}{\centering stool disease\\(2 classes)}}}
& 1  & Lachnoclostridium vs Bacteroides + Turicibacter + Ruminococcus + ...              & 0.58 $\pm$ 0.04 & 1     \\
& 2  & Gemella vs Aeriscardovia + Enterococcus + Dickeya + ...                           & 0.38 $\pm$ 0.03 & 2--5   \\
& 3  & Bifidobacterium vs Blautia + Enterococcus                                         & 0.37 $\pm$ 0.02 & 2--4   \\
& 4  & Fusicatenibacter vs Gemmiger                                                      & 0.33 $\pm$ 0.02 & 4--10  \\
& 5  & Lachnoclostridium vs Parabacteroides + Bacteroides + Barnesiella + ...            & 0.33 $\pm$ 0.03 & 3--11  \\
& 6  & Flavonifractor vs Bifidobacterium + Actinomyces                                   & 0.32 $\pm$ 0.02 & 5--8   \\
& 7  & Enterococcus vs Blautia                                                           & 0.31 $\pm$ 0.02 & 5--8   \\
& 8  & Actinomyces vs Bifidobacterium + Actinomyces + Flavonifractor                     & 0.31 $\pm$ 0.04 & 4--14  \\
& 9  & Streptococcus vs Enterococcus + Stenotrophomonas + Clostridioides                 & 0.30 $\pm$ 0.03 & 5--11  \\
& 10 & Coprococcus vs Ruthenibacterium + Coprobacter + Oscillibacter + ...               & 0.29 $\pm$ 0.03 & 3--14  \\
\midrule
\multirow{10}{*}{\rotatebox{90}{\parbox{1.8cm}{\centering body site\\(6 classes)}}}
& 1  & Bacillus vs Aggregatibacter + Prevotella + Leptotrichia + ...                     & 1.43 $\pm$ 0.05 & 1--2   \\
& 2  & Bacteria vs Dialister + Bacteria + Mixed + ...                                    & 1.30 $\pm$ 0.12 & 1--4   \\
& 3  & Cutibacterium vs Parascardovia + Streptococcus + Erysipelatoclostridium + ...     & 1.30 $\pm$ 0.05 & 2--3   \\
& 4  & Arcobacter vs Aggregatibacter + Prevotella + Leptotrichia + ...                   & 1.21 $\pm$ 0.14 & 2--7   \\
& 5  & Lactobacillus vs Aggregatibacter + Prevotella + Leptotrichia + ...                & 1.05 $\pm$ 0.05 & 6--11  \\
& 6  & Mixed vs Phascolarctobacterium + Tychonema + Sediminibacterium + ...              & 1.04 $\pm$ 0.10 & 5--15  \\
& 7  & Bacteria vs Bacteria                                                              & 1.04 $\pm$ 0.03 & 4--11  \\
& 8  & Cand.\ Gastranaerophilales vs Aggregatibacter + Prevotella + Leptotrichia + ...   & 1.02 $\pm$ 0.06 & 6--13  \\
& 9  & Providencia vs Aggregatibacter + Prevotella + Leptotrichia + ...                  & 1.00 $\pm$ 0.08 & 8--19  \\
& 10 & Helicobacter vs Aggregatibacter + Prevotella + Leptotrichia + ...                 & 0.99 $\pm$ 0.04 & 5--14  \\
\bottomrule
\end{tabular}
\end{table*}


\begin{table*}[h]
\centering
\scriptsize
\setlength{\tabcolsep}{3pt}
\renewcommand{\arraystretch}{1.0}
\caption{\emph{Top-10 contrasts} by RF importance (\%) with mean $\pm$ std and rank range across 5-fold CV: \texttt{DISCO}.}
\label{tab:app-contrasts-disco}
\begin{tabular}{clp{8.5cm}cc}
\toprule
& \textbf{Rk} & \textbf{Contrast} & \textbf{Imp.\ (mean $\pm$ std)} & \textbf{Rank range} \\
\midrule
\multirow{10}{*}{\rotatebox{90}{\parbox{1.8cm}{\centering COVID-19\\(blood)}}}
& 1  & Treg cell vs Naive CD4 T cell + Memory CD4 T cell + Tfh cell                      & 9.03 $\pm$ 0.88 & 1     \\
& 2  & CD4 T cell vs Naive T cell + Memory T cell + CD8 T cell + ...                     & 7.34 $\pm$ 0.64 & 2--4   \\
& 3  & Cycling T/NK cell vs ILC + T cell + NK cell                                       & 6.68 $\pm$ 0.43 & 2--4   \\
& 4  & Tfh cell vs Naive CD4 T cell + Memory CD4 T cell                                  & 6.46 $\pm$ 0.28 & 3--4   \\
& 5  & Plasma cell vs Pre-GC B cell + B cell precursor + Memory B cell + ...             & 5.03 $\pm$ 0.26 & 5     \\
& 6  & INF-activated T cell vs Naive T cell + Memory T cell + CD8 T cell + ...           & 3.96 $\pm$ 0.35 & 6--8   \\
& 7  & Naive CD8 T cell vs Memory CD8 T cell                                             & 3.73 $\pm$ 0.48 & 6--8   \\
& 8  & Double negative T cell vs Naive T cell + Memory T cell + CD8 T cell + ...         & 2.93 $\pm$ 0.29 & 8--9   \\
& 9  & Gamma delta T cell vs Naive T cell + Memory T cell + CD8 T cell + ...             & 2.90 $\pm$ 0.73 & 6--18  \\
& 10 & Epithelial cell vs Immune cell + Neuron                                           & 2.37 $\pm$ 0.18 & 9--12  \\
\midrule
\multirow{10}{*}{\rotatebox{90}{\parbox{1.8cm}{\centering leukemia\\(blood)}}}
& 1  & Myeloid cell vs Erythrocyte/Megakaryocyte + Hematopoietic precursor cell          & 14.50 $\pm$ 0.46 & 1     \\
& 2  & Cycling T/NK cell vs ILC + T cell + NK cell                                       & 11.79 $\pm$ 0.54 & 2     \\
& 3  & Lymphoid cell vs Erythrocyte/Megakaryocyte + Hematopoietic precursor + Myeloid    & 6.30 $\pm$ 0.45  & 3--4   \\
& 4  & MAIT cell vs Naive T cell + Memory T cell + CD8 T cell + ...                      & 5.45 $\pm$ 0.51  & 3--5   \\
& 5  & Naive CD8 T cell vs Memory CD8 T cell                                             & 4.73 $\pm$ 0.87  & 3--8   \\
& 6  & Monocyte vs Granulocyte                                                           & 4.22 $\pm$ 0.37  & 6--7   \\
& 7  & T/NK cell vs B cell                                                               & 4.09 $\pm$ 0.67  & 5--8   \\
& 8  & Dendritic cell vs Granulocyte + Monocyte                                          & 3.13 $\pm$ 0.44  & 7--10  \\
& 9  & Gamma delta T cell vs Naive T cell + Memory T cell + CD8 T cell + ...             & 2.83 $\pm$ 0.33  & 7--13  \\
& 10 & Cycling T cell vs Naive T cell + Memory T cell + CD8 T cell                       & 2.66 $\pm$ 0.42  & 8--14  \\
\midrule
\multirow{10}{*}{\rotatebox{90}{\parbox{1.8cm}{\centering HCC\\(liver)}}}
& 1  & Erythrocyte/Megakaryocyte vs Myeloid + Lymphoid + Hematopoietic precursor         & 10.08 $\pm$ 0.32 & 1     \\
& 2  & B cell precursor vs Plasma cell + INF-activated naive B cell + Memory B cell + ...& 6.52 $\pm$ 0.32  & 2--4   \\
& 3  & Hematopoietic precursor cell vs Myeloid cell + Lymphoid cell                      & 6.34 $\pm$ 0.55  & 2--3   \\
& 4  & Venous EC vs LSEC                                                                 & 5.45 $\pm$ 0.85  & 2--8   \\
& 5  & Granulocyte-monocyte progenitor vs Multipotent progenitor + Megakaryocyte + ...   & 4.37 $\pm$ 0.51  & 5--7   \\
& 6  & Immune cell vs Cycling villous cytotrophoblast + CCL19/21 pericyte + Muscle + ... & 4.37 $\pm$ 0.39  & 4--8   \\
& 7  & MHCII high CD14 monocyte vs MHCII low CD14 monocyte                               & 3.97 $\pm$ 0.38  & 5--10  \\
& 8  & Intermediate EPCAM+ erythroblast vs Late hemoglobin+ erythroblast                 & 3.78 $\pm$ 0.40  & 6--10  \\
& 9  & Alveolar macrophage vs Kupffer cell                                               & 3.52 $\pm$ 0.27  & 7--10  \\
& 10 & CD14 monocyte vs Placenta defensin+ monocyte + Placenta CAMP+ monocyte + ...      & 3.22 $\pm$ 0.34  & 8--12  \\
\bottomrule
\end{tabular}
\end{table*}


\begin{table}[h]
\centering
\scriptsize
\setlength{\tabcolsep}{1.6pt}
\renewcommand{\arraystretch}{1.0}
\caption{\emph{Tree-level inference}: \texttt{HMP}. Importance (mean $\pm$ std) aggregated by depth (cumulative), subtree (phylum), node, and taxon. Accuracy uses only features at that level.}
\label{tab:app-tree-hmp}
\begin{tabular}{llcc @{\hspace{7.5pt}} lcc}
\toprule
& \multicolumn{3}{c}{\textbf{body sites (5)}} & \multicolumn{3}{c}{\textbf{body subsites (18)}} \\
\cmidrule(lr){2-4} \cmidrule(lr){5-7}
\textbf{Structure} & \textbf{Component} & \textbf{Acc} & \textbf{Imp} & \textbf{Component} & \textbf{Acc} & \textbf{Imp} \\
\midrule
\multirow{5}{*}{Depth}
& $\leq$0 (coarsest) & .87 $\pm$ .01 & 6.8 $\pm$ 0.2\% & $\leq$0 (coarsest) & .42 $\pm$ .01 & 4.8 $\pm$ 0.1\% \\
& $\leq$1             & .92 $\pm$ .01 & 12 $\pm$ 0.2\%  & $\leq$1             & .52 $\pm$ .01 & 12 $\pm$ 0.2\%  \\
& $\leq$2             & .96 $\pm$ .01 & 47 $\pm$ 0.3\%  & $\leq$2             & .59 $\pm$ .03 & 44 $\pm$ 0.3\%  \\
& $\leq$3             & .96 $\pm$ .01 & 86 $\pm$ 0.5\%  & $\leq$3             & .62 $\pm$ .02 & 87 $\pm$ 0.0\%  \\
& $\leq$4 (all)       & .96 $\pm$ .00 & 100\%           & $\leq$4 (all)       & .62 $\pm$ .01 & 100\%           \\
\midrule
\multirow{4}{*}{Subtree}
& Firmicutes        & .95 $\pm$ .01 & 47 $\pm$ 0.7\%  & Firmicutes        & .55 $\pm$ .02 & 36 $\pm$ 0.3\%  \\
& Proteobacteria    & .87 $\pm$ .01 & 20 $\pm$ 0.3\%  & Proteobacteria    & .44 $\pm$ .02 & 27 $\pm$ 0.3\%  \\
& Actinobacteria    & .91 $\pm$ .01 & 19 $\pm$ 0.6\%  & Actinobacteria    & .45 $\pm$ .02 & 22 $\pm$ 0.2\%  \\
& Bacteroidetes     & .82 $\pm$ .01 & 6.1 $\pm$ 0.2\% & Bacteroidetes     & .35 $\pm$ .01 & 7.4 $\pm$ 0.1\% \\
\midrule
\multirow{3}{*}{Node}
& Actinomycetales       & -- & 11 $\pm$ 0.2\%  & Actinomycetales   & -- & 12 $\pm$ 0.2\%  \\
& Lactobacillales       & -- & 9.1 $\pm$ 0.2\% & Lachnospiraceae   & -- & 8.4 $\pm$ 0.2\% \\
& Gammaproteobacteria   & -- & 8.0 $\pm$ 0.2\% & Clostridiales     & -- & 3.8 $\pm$ 0.1\% \\
\midrule
\multirow{4}{*}{Taxon}
& Streptococcus & -- & 3.1\% & Oribacterium    & -- & 1.0\% \\
& Lactococcus   & -- & 3.1\% & Corynebacterium & -- & 1.0\% \\
& Pasteurella   & -- & 2.6\% & Lactococcus     & -- & 1.0\% \\
& Lactobacillus & -- & 2.3\% & Streptococcus   & -- & 1.0\% \\
\bottomrule
\end{tabular}
\end{table}


\begin{table*}[h]
\centering
\scriptsize
\setlength{\tabcolsep}{1.4pt}
\renewcommand{\arraystretch}{1.0}
\caption{\emph{Tree-level inference}: \texttt{cMD3}. Importance (mean $\pm$ std) aggregated by depth (cumulative) and taxon across westernization, age, healthy/disease, body site, and stool disease tasks.}
\label{tab:app-tree-cmd3}
\begin{tabular}{llcc @{\hspace{5pt}} lcc @{\hspace{5pt}} lcc @{\hspace{5pt}} lcc @{\hspace{5pt}} lcc}
\toprule
& \multicolumn{3}{c}{\textbf{westernized (2)}} & \multicolumn{3}{c}{\textbf{age category (5)}} & \multicolumn{3}{c}{\textbf{healthy/disease (2)}} & \multicolumn{3}{c}{\textbf{body site (6)}} & \multicolumn{3}{c}{\textbf{stool disease (2)}} \\
\cmidrule(lr){2-4} \cmidrule(lr){5-7} \cmidrule(lr){8-10} \cmidrule(lr){11-13} \cmidrule(lr){14-16}
\textbf{Struct.} & \textbf{Comp.} & \textbf{Acc} & \textbf{Imp} & \textbf{Comp.} & \textbf{Acc} & \textbf{Imp} & \textbf{Comp.} & \textbf{Acc} & \textbf{Imp} & \textbf{Comp.} & \textbf{Acc} & \textbf{Imp} & \textbf{Comp.} & \textbf{Acc} & \textbf{Imp} \\
\midrule
\multirow{7}{*}{Depth}
& $\leq$0 & .93 & 0.1\% & $\leq$0 & .71 & 1.3\% & $\leq$0 & .61 & 0.3\% & $\leq$0 & .91 & 0.4\% & $\leq$0 & .58 & 0.3\% \\
& $\leq$1 & .94 & 0.9\% & $\leq$1 & .76 & 4.4\% & $\leq$1 & .64 & 1.8\% & $\leq$1 & .98 & 3.2\% & $\leq$1 & .62 & 1.7\% \\
& $\leq$2 & .94 & 1.9\% & $\leq$2 & .77 & 6.4\% & $\leq$2 & .65 & 3.2\% & $\leq$2 & .98 & 5.2\% & $\leq$2 & .62 & 3.2\% \\
& $\leq$3 & .96 & 5.3\% & $\leq$3 & .79 & 11\% & $\leq$3 & .67 & 6.4\% & $\leq$3 & .98 & 10\% & $\leq$3 & .65 & 6.4\% \\
& $\leq$4 & .96 & 9.1\% & $\leq$4 & .79 & 18\% & $\leq$4 & .68 & 12\% & $\leq$4 & .99 & 14\% & $\leq$4 & .66 & 12\% \\
& $\leq$5 & .96 & 24\% & $\leq$5 & .80 & 35\% & $\leq$5 & .68 & 31\% & $\leq$5 & .99 & 23\% & $\leq$5 & .66 & 31\% \\
& $\leq$6 & .97 & 100\% & $\leq$6 & .80 & 100\% & $\leq$6 & .68 & 100\% & $\leq$6 & .99 & 100\% & $\leq$6 & .67 & 100\% \\
\midrule
\multirow{4}{*}{Taxon}
& Prevotella & -- & 1.7\% & Blastocystis & -- & 0.6\% & Lachnoclost.   & -- & 0.6\% & Bacillus      & -- & 1.4\% & Lachnoclost.    & -- & 0.6\% \\
& Murimonas  & -- & 1.4\% & Agathobaculum & -- & 0.5\% & Bifidobact.    & -- & 0.3\% & Cutibacterium & -- & 1.2\% & Gemella         & -- & 0.4\% \\
& Tissierellia & -- & 1.4\% & Staphylococcus & -- & 0.5\% & Enterococcus & -- & 0.3\% & Malassezia    & -- & 1.1\% & Bifidobact.     & -- & 0.3\% \\
& Bacteroides & -- & 1.4\% & Ruminococcus & -- & 0.5\% & Coprococcus   & -- & 0.3\% & Arcobacter    & -- & 1.1\% & Coprococcus     & -- & 0.3\% \\
\bottomrule
\end{tabular}
\end{table*}


\begin{table*}[h]
\centering
\scriptsize
\setlength{\tabcolsep}{1.4pt}
\renewcommand{\arraystretch}{1.0}
\caption{\emph{Tree-level inference}: \texttt{DISCO}. Importance (mean $\pm$ std) aggregated by depth (cumulative), subtree, node, and cell type across COVID-19, leukemia, and HCC tasks.}
\label{tab:app-tree-disco}
\begin{tabular}{llcc @{\hspace{5pt}} lcc @{\hspace{5pt}} lcc}
\toprule
& \multicolumn{3}{c}{\textbf{COVID-19 (blood)}} & \multicolumn{3}{c}{\textbf{leukemia (blood)}} & \multicolumn{3}{c}{\textbf{HCC (liver)}} \\
\cmidrule(lr){2-4} \cmidrule(lr){5-7} \cmidrule(lr){8-10}
\textbf{Structure} & \textbf{Component} & \textbf{Acc} & \textbf{Imp} & \textbf{Component} & \textbf{Acc} & \textbf{Imp} & \textbf{Component} & \textbf{Acc} & \textbf{Imp} \\
\midrule
\multirow{7}{*}{Depth}
& $\leq$0       & .50 & 6.9 $\pm$ 0.3\%  & $\leq$0       & .62 & 4.6 $\pm$ 0.4\%  & $\leq$0       & .88 & 8.1 $\pm$ 0.7\% \\
& $\leq$1       & .56 & 11 $\pm$ 0.4\%   & $\leq$1       & .88 & 26 $\pm$ 0.5\%   & $\leq$1       & .91 & 38 $\pm$ 1.5\% \\
& $\leq$2       & .64 & 20 $\pm$ 0.6\%   & $\leq$2       & .91 & 43 $\pm$ 1.4\%   & $\leq$2       & .91 & 55 $\pm$ 0.8\% \\
& $\leq$3       & .73 & 50 $\pm$ 0.5\%   & $\leq$3       & .94 & 68 $\pm$ 0.9\%   & $\leq$3       & .91 & 77 $\pm$ 0.5\% \\
& $\leq$4       & .77 & 76 $\pm$ 0.8\%   & $\leq$4       & .94 & 86 $\pm$ 1.1\%   & $\leq$4       & .91 & 96 $\pm$ 0.3\% \\
& $\leq$5       & .77 & 97 $\pm$ 0.2\%   & $\leq$5       & .94 & 98 $\pm$ 0.2\%   & $\leq$5       & .91 & 99 $\pm$ 0.1\% \\
& $\leq$6 (all) & .77 & 100\%            & $\leq$6 (all) & .94 & 100\%            & $\leq$6 (all) & .91 & 100\% \\
\midrule
\multirow{3}{*}{Subtree}
& Immune cell     & .80 & 93 $\pm$ 0.3\% & Immune cell     & .93 & 95 $\pm$ 0.4\%  & Immune cell      & .92 & 79 $\pm$ 1.4\% \\
& Epithelial cell & .51 & 0.0\%          & Fibroblast      & .50 & 0.0\%           & Endothelial cell & .80 & 7.8 $\pm$ 1.0\% \\
& Fibroblast      & .50 & 0.0\%          & Epithelial cell & .51 & 0.0\%           & Epithelial cell  & .78 & 4.1 $\pm$ 0.4\% \\
\midrule
\multirow{4}{*}{Node}
& T cell      & -- & 23 $\pm$ 0.6\% & Immune cell   & -- & 22 $\pm$ 0.5\% & Immune cell        & -- & 19 $\pm$ 1.0\% \\
& CD4 T cell  & -- & 18 $\pm$ 0.8\% & T cell        & -- & 16 $\pm$ 0.4\% & Hema.\ precursor   & -- & 9.6 $\pm$ 0.8\% \\
& T/NK cell   & -- & 10 $\pm$ 0.6\% & T/NK cell     & -- & 14 $\pm$ 0.3\% & Endothelial cell   & -- & 7.8 $\pm$ 1.0\% \\
& B cell      & -- & 7.2 $\pm$ 0.6\%& Myeloid cell  & -- & 7.9 $\pm$ 0.5\%& B cell             & -- & 7.0 $\pm$ 0.4\% \\
\midrule
\multirow{4}{*}{Cell type}
& Treg cell         & -- & 8.5\% & Cycling T/NK cell  & -- & 11.3\% & GMP                       & -- & 4.4\% \\
& Cycling T/NK cell & -- & 6.5\% & MAIT cell          & -- & 5.0\%  & Intermediate erythroblast & -- & 4.2\% \\
& Tfh cell          & -- & 6.2\% & Naive CD8 T cell   & -- & 4.1\%  & Late erythroblast         & -- & 4.2\% \\
& Plasma cell       & -- & 4.8\% & Gamma delta T cell & -- & 3.2\%  & MHCII high monocyte       & -- & 3.6\% \\
\bottomrule
\end{tabular}
\end{table*}

%% file: reference.bib
@article{egozcue2003isometric,
  title={Isometric logratio transformations for compositional data analysis},
  author={Egozcue, Juan Jos{\'e} and Pawlowsky-Glahn, Vera and Mateu-Figueras, Gl{\`o}ria and Barcelo-Vidal, Carles},
  journal={Mathematical Geology},
  volume={35},
  number={3},
  pages={279--300},
  year={2003},
  publisher={Springer}
}

@article{egozcue2016changing,
  title={Changing the reference measure in the simplex and its weighting effects},
  author={Egozcue, Juan Jos{\'e} and Pawlowsky-Glahn, Vera},
  journal={Austrian Journal of Statistics},
  volume={45},
  number={4},
  pages={25--44},
  year={2016}
}

@article{silverman2017phylogenetic,
  title={A phylogenetic transform enhances analysis of compositional microbiota data},
  author={Silverman, Justin D and Washburne, Alex D and Mukherjee, Sayan and David, Lawrence A},
  journal={elife},
  volume={6},
  pages={e21887},
  year={2017},
  publisher={eLife Sciences Publications, Ltd}
}

@article{washburne2017phylogenetic,
  title={Phylogenetic factorization of compositional data yields lineage-level associations in microbiome datasets},
  author={Washburne, Alex D and Silverman, Justin D and Leff, Jonathan W and Bennett, Dominic J and Darcy, John L and Mukherjee, Sayan and Fierer, Noah and David, Lawrence A},
  journal={PeerJ},
  volume={5},
  pages={e2969},
  year={2017},
  publisher={PeerJ Inc.}
}

@article{egozcue2005groups,
  title={Groups of parts and their balances in compositional data analysis},
  author={Egozcue, Juan Jos{\'e} and Pawlowsky-Glahn, Vera},
  journal={Mathematical Geology},
  volume={37},
  number={7},
  pages={795--828},
  year={2005},
  publisher={Springer}
}

@article{aitchison1982statistical,
  title={The statistical analysis of compositional data},
  author={Aitchison, John},
  journal={Journal of the Royal Statistical Society: Series B (Methodological)},
  volume={44},
  number={2},
  pages={139--160},
  year={1982},
  publisher={Wiley Online Library}
}

@article{pawlowsky2001geometric,
  title={Geometric approach to statistical analysis on the simplex},
  author={Pawlowsky-Glahn, Vera and Egozcue, Juan Jos{\'e}},
  journal={Stochastic Environmental Research and Risk Assessment},
  volume={15},
  number={5},
  pages={384--398},
  year={2001},
  publisher={Springer}
}

@article{billheimer2001statistical,
  title={Statistical interpretation of species composition},
  author={Billheimer, Dean and Guttorp, Peter and Fagan, William F},
  journal={Journal of the American Statistical Association},
  volume={96},
  number={456},
  pages={1205--1214},
  year={2001},
  publisher={Taylor \& Francis}
}

@article{gloor2017microbiome,
  title={Microbiome datasets are compositional: and this is not optional},
  author={Gloor, Gregory B and Macklaim, Jean M and Pawlowsky-Glahn, Vera and Egozcue, Juan J},
  journal={Frontiers in Microbiology},
  volume={8},
  pages={2224},
  year={2017},
  publisher={Frontiers Media SA}
}

@article{jackson1997compositional,
  title={Compositional data in community ecology: the paradigm or peril of proportions?},
  author={Jackson, Donald A},
  journal={Ecology},
  volume={78},
  number={3},
  pages={929--940},
  year={1997},
  publisher={Wiley Online Library}
}

@article{harmon2019phylogenetic,
  title={Phylogenetic comparative methods: learning from trees},
  author={Harmon, Luke},
  year={2019},
  publisher={EcoEvoRxiv}
}

@article{lancaster1965helmert,
  title={The {Helmert} matrices},
  author={Lancaster, HO},
  journal={The American Mathematical Monthly},
  volume={72},
  number={1},
  pages={4--12},
  year={1965},
  publisher={Taylor \& Francis}
}

@article{mandal2015analysis,
  title={Analysis of composition of microbiomes: a novel method for studying microbial composition},
  author={Mandal, Siddhartha and Van Treuren, Will and White, Richard A and Eggesb{\o}, Merete and Knight, Rob and Peddada, Shyamal D},
  journal={Microbial ecology in health and disease},
  volume={26},
  number={1},
  pages={27663},
  year={2015},
  publisher={Taylor \& Francis}
}

@article{rivera2018balances,
  title={Balances: a new perspective for microbiome analysis},
  author={Rivera-Pinto, Javier and Egozcue, Juan Jose and Pawlowsky-Glahn, Vera and Paredes, Raul and Noguera-Julian, Marc and Calle, M Luz},
  journal={MSystems},
  volume={3},
  number={4},
  pages={10--1128},
  year={2018},
  publisher={American Society for Microbiology 1752 N St., NW, Washington, DC}
}

@article{morton2017balance,
  title={Balance trees reveal microbial niche differentiation},
  author={Morton, James T and Sanders, Jon and Quinn, Robert A and McDonald, Daniel and Gonzalez, Antonio and V{\'a}zquez-Baeza, Yoshiki and Navas-Molina, Jose A and Song, Se Jin and Metcalf, Jessica L and Hyde, Embriette R and others},
  journal={MSystems},
  volume={2},
  number={1},
  pages={e00162--16},
  year={2017},
  publisher={American Society for Microbiology 1752 N St., NW, Washington, DC}
}

@article{buettner2021sccoda,
  title={{scCODA} is a {Bayesian} model for compositional single-cell data analysis},
  author={Buettner, Maren and Ostner, Johannes and Mueller, Christian L and Theis, Fabian J and Schubert, Benjamin},
  journal={Nature Communications},
  volume={12},
  number={1},
  pages={6876},
  year={2021},
  publisher={Nature Publishing Group UK London}
}

@book{brigham1988fast,
  title={The {Fast} {Fourier} {Transform} and {Its} {Applications}},
  author={Brigham, E Oran},
  year={1988},
  publisher={Prentice-Hall, Inc.}
}

@article{mallat2002theory,
  title={A theory for multiresolution signal decomposition: the wavelet representation},
  author={Mallat, Stephane G},
  journal={{IEEE} transactions on pattern analysis and machine intelligence},
  volume={11},
  number={7},
  pages={674--693},
  year={2002},
  publisher={Ieee}
}

@article{pasolli2017accessible,
  title={Accessible, curated metagenomic data through ExperimentHub},
  author={Pasolli, Edoardo and Schiffer, Lucas and Manghi, Paolo and Renson, Audrey and Obenchain, Valerie and Truong, Duy Tin and Beghini, Francesco and Malik, Faizan and Ramos, Marcel and Dowd, Jennifer B and others},
  journal={Nature Methods},
  volume={14},
  number={11},
  pages={1023--1024},
  year={2017},
  publisher={Nature Publishing Group US New York}
}

@article{human2012structure,
  title={Structure, function and diversity of the healthy human microbiome},
    author={{Human Microbiome Project Consortium}},
  journal={Nature},
  volume={486},
  number={7402},
  pages={207--214},
  year={2012},
  publisher={Nature Publishing Group UK London}
}

@article{lin2011polytomy,
  title={Polytomy identification in microbial phylogenetic reconstruction},
  author={Lin, Guan Ning and Zhang, Chao and Xu, Dong},
  journal={BMC systems Biology},
  volume={5},
  number={Suppl 3},
  pages={S2},
  year={2011},
  publisher={Springer}
}

@article{dewhirst2010human,
  title={The human oral microbiome},
  author={Dewhirst, Floyd E and Chen, Tuste and Izard, Jacques and Paster, Bruce J and Tanner, Anne CR and Yu, Wen-Han and Lakshmanan, Abirami and Wade, William G},
  journal={Journal of Bacteriology},
  volume={192},
  number={19},
  pages={5002--5017},
  year={2010},
  publisher={American Society for Microbiology}
}

@article{de2010impact,
  title={Impact of diet in shaping gut microbiota revealed by a comparative study in children from Europe and rural Africa},
  author={De Filippo, Carlotta and Cavalieri, Duccio and Di Paola, Monica and Ramazzotti, Matteo and Poullet, Jean Baptiste and Massart, Sebastien and Collini, Silvia and Pieraccini, Giuseppe and Lionetti, Paolo},
  journal={Proceedings of the National Academy of Sciences},
  volume={107},
  number={33},
  pages={14691--14696},
  year={2010},
  publisher={National Academy of Sciences}
}

@article{yatsunenko2012human,
  title={Human gut microbiome viewed across age and geography},
  author={Yatsunenko, Tanya and Rey, Federico E and Manary, Mark J and Trehan, Indi and Dominguez-Bello, Maria Gloria and Contreras, Monica and Magris, Magda and Hidalgo, Glida and Baldassano, Robert N and Anokhin, Andrey P and others},
  journal={Nature},
  volume={486},
  number={7402},
  pages={222--227},
  year={2012},
  publisher={Nature Publishing Group UK London}
}

@article{cai2022integrated,
  title={Integrated metagenomics identifies a crucial role for trimethylamine-producing Lachnoclostridium in promoting atherosclerosis},
  author={Cai, Yuan-Yuan and Huang, Feng-Qing and Lao, Xingzhen and Lu, Yawen and Gao, Xuejiao and Alolga, Raphael N and Yin, Kunpeng and Zhou, Xingchen and Wang, Yun and Liu, Baolin and others},
  journal={NPJ Biofilms and Microbiomes},
  volume={8},
  number={1},
  pages={11},
  year={2022},
  publisher={Nature Publishing Group UK London}
}

@article{liang2020novel,
  title={A novel faecal Lachnoclostridium marker for the non-invasive diagnosis of colorectal adenoma and cancer},
  author={Liang, Jessie Qiaoyi and Li, Tong and Nakatsu, Geicho and Chen, Ying-Xuan and Yau, Tung On and Chu, Eagle and Wong, Sunny and Szeto, Chun Ho and Ng, Siew C and Chan, Francis KL and others},
  journal={Gut},
  volume={69},
  number={7},
  pages={1248--1257},
  year={2020},
  publisher={BMJ Publishing Group}
}

@article{costello2009bacterial,
  title={Bacterial community variation in human body habitats across space and time},
  author={Costello, Elizabeth K and Lauber, Christian L and Hamady, Micah and Fierer, Noah and Gordon, Jeffrey I and Knight, Rob},
  journal={Science},
  volume={326},
  number={5960},
  pages={1694--1697},
  year={2009},
  publisher={American Association for the Advancement of Science}
}

@article{arumugam2011enterotypes,
  title={Enterotypes of the human gut microbiome},
  author={Arumugam, Manimozhiyan and Raes, Jeroen and Pelletier, Eric and Le Paslier, Denis and Yamada, Takuji and Mende, Daniel R and Fernandes, Gabriel R and Tap, Julien and Bruls, Thomas and Batto, Jean-Michel and others},
  journal={Nature},
  volume={473},
  number={7346},
  pages={174--180},
  year={2011},
  publisher={Nature Publishing Group UK London}
}

@inproceedings{
    palumbo2025from,
    title={From Logits to Hierarchies: Hierarchical Clustering made Simple},
    author={Emanuele Palumbo and Moritz Vandenhirtz and Alain Ryser and Imant Daunhawer and Julia E Vogt},
    booktitle={Forty-second International Conference on Machine Learning},
    year={2025},
    url={https://openreview.net/forum?id=t0x2VnBskT}
}

@article{lozupone2005unifrac,
  title={UniFrac: a new phylogenetic method for comparing microbial communities},
  author={Lozupone, Catherine and Knight, Rob},
  journal={Applied and environmental microbiology},
  volume={71},
  number={12},
  pages={8228--8235},
  year={2005},
  publisher={American Society for Microbiology}
}

@article{mao2022dirichlet,
  title={{Dirichlet}-tree multinomial mixtures for clustering microbiome compositions},
  author={Mao, Jialiang and Ma, Li},
  journal={The Annals of Applied Statistics},
  volume={16},
  number={3},
  pages={1476},
  year={2022}
}

@article{ashburner2000gene,
  title={Gene ontology: tool for the unification of biology},
  author={Ashburner, Michael and Ball, Catherine A and Blake, Judith A and Botstein, David and Butler, Heather and Cherry, J Michael and Davis, Allan P and Dolinski, Kara and Dwight, Selina S and Eppig, Janan T and others},
  journal={Nature Genetics},
  volume={25},
  number={1},
  pages={25--29},
  year={2000},
  publisher={Nature Publishing Group}
}

@article{rudin2019stop,
  title={Stop explaining black box machine learning models for high stakes decisions and use interpretable models instead},
  author={Rudin, Cynthia},
  journal={Nature Machine Intelligence},
  volume={1},
  number={5},
  pages={206--215},
  year={2019},
  publisher={Nature Publishing Group UK London}
}

@article{marcos2021applications,
  title={Applications of machine learning in human microbiome studies: a review on feature selection, biomarker identification, disease prediction and treatment},
  author={Marcos-Zambrano, Laura Judith and Karaduzovic-Hadziabdic, Kanita and Loncar Turukalo, Tatjana and Przymus, Piotr and Trajkovik, Vladimir and Aasmets, Oliver and Berland, Magali and Gruca, Aleksandra and Hasic, Jasminka and Hron, Karel and others},
  journal={Frontiers in Microbiology},
  volume={12},
  pages={634511},
  year={2021},
  publisher={Frontiers Media SA}
}

@inproceedings{deng2009imagenet,
  title={{ImageNet}: A large-scale hierarchical image database},
  author={Deng, Jia and Dong, Wei and Socher, Richard and Li, Li-Jia and Li, Kai and Fei-Fei, Li},
  booktitle={2009 {IEEE} conference on computer vision and pattern recognition},
  pages={248--255},
  year={2009},
  organization={Ieee}
}

@article{miller1995wordnet,
  title={{WordNet}: a lexical database for {E}nglish},
  author={Miller, George A},
  journal={Communications of the ACM},
  volume={38},
  number={11},
  pages={39--41},
  year={1995},
  publisher={ACM New York, NY, USA}
}

@article{kaul2017analysis,
  title={Analysis of microbiome data in the presence of excess zeros},
  author={Kaul, Abhishek and Mandal, Siddhartha and Davidov, Ori and Peddada, Shyamal D},
  journal={Frontiers in microbiology},
  volume={8},
  pages={2114},
  year={2017},
  publisher={Frontiers Media SA}
}

@article{hu2022locom,
  title={{LOCOM}: A logistic regression model for testing differential abundance in compositional microbiome data with false discovery rate control},
  author={Hu, Yingtian and Satten, Glen A and Hu, Yi-Juan},
  journal={Proceedings of the National Academy of Sciences},
  volume={119},
  number={30},
  pages={e2122788119},
  year={2022},
  publisher={National Academy of Sciences}
}

@article{li2022disco,
  title={{DISCO}: a database of Deeply Integrated human Single-Cell Omics data},
  author={Li, Mengwei and Zhang, Xiaomeng and Ang, Kok Siong and Ling, Jingjing and Sethi, Raman and Lee, Nicole Yee Shin and Ginhoux, Florent and Chen, Jinmiao},
  journal={Nucleic acids research},
  volume={50},
  number={D1},
  pages={D596--D602},
  year={2022},
  publisher={Oxford University Press}
}

@article{lowenberg1999acute,
  title={Acute myeloid leukemia},
  author={L{\"o}wenberg, Bob and Downing, James R and Burnett, Alan},
  journal={New England Journal of Medicine},
  volume={341},
  number={14},
  pages={1051--1062},
  year={1999},
  publisher={Mass Medical Soc}
}

@article{sorensen2015liver,
  title={Liver sinusoidal endothelial cells},
  author={S{\o}rensen, Karen Kristine and Simon-Santamaria, Jaione and McCuskey, Robert S and Smedsr{\o}d, B{\aa}rd},
  journal={Comprehensive Physiology},
  volume={5},
  number={4},
  pages={1751--1774},
  year={2015},
  publisher={John Wiley \& Sons, Ltd}
}

@article{phipson2022propeller,
  title={{propeller}: testing for differences in cell type proportions in single cell data},
  author={Phipson, Belinda and Sim, Choon Boon and Porrello, Enzo R and Hewitt, Alex W and Powell, Joseph and Oshlack, Alicia},
  journal={Bioinformatics},
  volume={38},
  number={20},
  pages={4720--4726},
  year={2022},
  publisher={Oxford University Press}
}

@inproceedings{
malashin2025hypernym,
title={Hypernym Bias: Unraveling Deep Classifier Training Dynamics through the Lens of Class Hierarchy},
author={Roman Malashin and Yachnaya Valeria and Alexandr V. Mullin},
booktitle={The 28th International Conference on Artificial Intelligence and Statistics},
year={2025},
url={https://openreview.net/forum?id=DobXzInjnV}
}

@article{nickel2017poincare,
  title={Poincar{\'e} embeddings for learning hierarchical representations},
  author={Nickel, Maximillian and Kiela, Douwe},
  journal={Advances in neural information processing systems},
  volume={30},
  year={2017}
}

@article{chami2019hyperbolic,
  title={Hyperbolic graph convolutional neural networks},
  author={Chami, Ines and Ying, Zhitao and R{\'e}, Christopher and Leskovec, Jure},
  journal={Advances in neural information processing systems},
  volume={32},
  year={2019}
}

@article{billera2001geometry,
  title={Geometry of the space of phylogenetic trees},
  author={Billera, Louis J and Holmes, Susan P and Vogtmann, Karen},
  journal={Advances in Applied Mathematics},
  volume={27},
  number={4},
  pages={733--767},
  year={2001},
  publisher={Elsevier}
}

@article{owen2010fast,
  title={A fast algorithm for computing geodesic distances in tree space},
  author={Owen, Megan and Provan, J Scott},
  journal={IEEE/ACM Transactions on Computational Biology and Bioinformatics},
  volume={8},
  number={1},
  pages={2--13},
  year={2010},
  publisher={IEEE}
}

@article{monod2018tropical,
  title={Tropical geometry of phylogenetic tree space: a statistical perspective},
  author={Monod, Anthea and Lin, Bo and Yoshida, Ruriko and Kang, Qiwen},
  journal={arXiv preprint arXiv:1805.12400},
  year={2018}
}

@article{silla2011survey,
  title={A survey of hierarchical classification across different application domains},
  author={Silla Jr, Carlos N and Freitas, Alex A},
  journal={Data mining and knowledge discovery},
  volume={22},
  number={1},
  pages={31--72},
  year={2011},
  publisher={Springer}
}

@inproceedings{sala2018representation,
  title={Representation tradeoffs for hyperbolic embeddings},
  author={Sala, Frederic and De Sa, Chris and Gu, Albert and R{\'e}, Christopher},
  booktitle={International Conference on Machine Learning},
  pages={4460--4469},
  year={2018},
  organization={PMLR}
}

@article{ma2024systematic,
  title={A systematic framework for understanding the microbiome in human health and disease: from basic principles to clinical translation},
  author={Ma, Ziqi and Zuo, Tao and Frey, Norbert and Rangrez, Ashraf Yusuf},
  journal={Signal Transduction and Targeted Therapy},
  volume={9},
  number={1},
  pages={237},
  year={2024},
  publisher={Nature Publishing Group UK London}
}

@article{maddison1989reconstructing,
  title={Reconstructing character evolution on polytomous cladograms},
  author={Maddison, Wayne},
  journal={Cladistics},
  volume={5},
  number={4},
  pages={365--377},
  year={1989},
  publisher={Wiley Online Library}
}

@article{diehl2016cell,
  title={The {Cell} {Ontology} 2016: enhanced content, modularization, and ontology interoperability},
  author={Diehl, Alexander D and Meehan, Terrence F and Bradford, Yvonne M and Brush, Matthew H and Dahdul, Wasila M and Dougall, David S and He, Yongqun and Osumi-Sutherland, David and Ruttenberg, Alan and Sarntivijai, Sirarat and others},
  journal={Journal of Biomedical Semantics},
  volume={7},
  number={1},
  pages={44},
  year={2016},
  publisher={Springer}
}

@article{pasolli2020large,
  title={Large-scale genome-wide analysis links lactic acid bacteria from food with the gut microbiome},
  author={Pasolli, Edoardo and De Filippis, Francesca and Mauriello, Italia E and Cumbo, Fabio and Walsh, Aaron M and Leech, John and Cotter, Paul D and Segata, Nicola and Ercolini, Danilo},
  journal={Nature Communications},
  volume={11},
  number={1},
  pages={2610},
  year={2020},
  publisher={Nature Publishing Group UK London}
}

@article{yerke2024proportion,
  title={Proportion-based normalizations outperform compositional data transformations in machine learning applications},
  author={Yerke, Aaron and Fry Brumit, Daisy and Fodor, Anthony A},
  journal={Microbiome},
  volume={12},
  number={1},
  pages={45},
  year={2024},
  publisher={Springer}
}
